  \setlist{leftmargin=*}
\runningauthor{Somerstep, Raman, Subedi, Sun}
\begin{document}

\twocolumn[

\aistatstitle{Learning to Choose or Choosing to Learn: Best-of-N vs. Supervised Fine-Tuning for Bit String Generation}

\aistatsauthor{ 
    Seamus Somerstep \And 
    Vinod Raman\footnotemark \And 
    Unique Subedi \And 
    Yuekai Sun 
}

\aistatsaddress{ 
    University of Michigan \\ Department of Statistics \And 
    University of Michigan \\ Department of Statistics \And 
    University of Michigan \\ Department of Statistics \And 
    University of Michigan \\ Department of Statistics 
} 
]
\footnotetext{Work done while interning at Apple.}

\begin{abstract}
 Using the bit string generation problem as a case study, we theoretically compare two standard methods for adapting large language models to new tasks. The first, referred to as \emph{supervised fine-tuning}, involves training a new next token predictor on good generations. The second method, \emph{Best-of-N}, trains a reward model to select good responses from a collection generated by an unaltered base model. If the learning setting is realizable, we find that supervised fine-tuning outperforms BoN through a better dependence on the response length in its rate of convergence. If realizability fails, then depending on the failure mode, BoN can enjoy a better rate of convergence in either $n$ or a rate of convergence with better dependence on the response length. 
\end{abstract}

\section{INTRODUCTION} 
Pre-training on vast amounts of data has allowed modern large language models (LLMs) to perform admirably on tasks ranging from simple text generation to complex reasoning \citep{wang2019gluemultitaskbenchmarkanalysis, paperno2016lambada, chollet2019measure, hendrycks2021measuringmathematicalproblemsolving, rein2023gpqagraduatelevelgoogleproofqa}. Despite this, work remains to reliably adapt LLMs to new tasks at test time. Large language models can often confidently make up information (hallucination) \citep{hicks2024chatgpt, maleki2024ai, bruno2023insights} and also struggle with abstract reasoning \citep{gendron2023large}. In response to this, a vast array of LLM post-training techniques has arisen \citep{kumar2025llmposttrainingdeepdive}. Simple tricks include those that alter the prompt of LLMs: ``chain of thought" type prompts have successfully improved reasoning \citep{wei2022chain, kojima2022large} and it is even possible to tune the prompt for a given task using deep learning \citep{DBLP:journals/corr/abs-2110-07602}. Another class of crucial methods utilizes reinforcement learning. Reinforcement Learning From Human Feedback can be utilized to align LLMs towards human values \citep{ouyang2022traininglanguagemodelsfollow}. 
The success of this method recently culminated in the use of GRPO to increase reasoning in the deepseek family of models \citep{deepseekai2025deepseekr1incentivizingreasoningcapability}. While prompt and reinforcement learning-based techniques are critical, we are primarily interested in studying techniques that fall within either supervised fine-tuning or inference time scaling.

\textbf{Supervised Fine Tuning:} Supervised fine tuning trains an LLM for a new task utilizing the same \emph{next token prediction} objective that is utilized during pre-training. In this case, carefully curated datasets that exhibit desirable behavior are used. A powerful use-case of SFT is teaching LLMs to follow user-provided instructions \citep{sanh2022multitask, wei2022finetuned}. The GPT family \citep{hicks2024chatgpt} of models underwent fine-tuning on multi-turn dialogue to increase interactivity, while fine-tuning on reasoning chains can teach small models to reason \citep{magister2023teachingsmalllanguagemodels}. Supervised fine-tuning is also a common technique for knowledge distillation \citep{wan2024efficient, zhu2024surveymodelcompressionlarge}. On domain-specific tasks such as medical diagnosis, sentiment analysis, and legal analysis, supervised fine tuning is helpful for increasing performance \citep{10.1145/3477495.3531789, zhang2023sentimentanalysiseralarge, yue2023disclawllmfinetuninglargelanguage}. On the other hand, supervised fine tuning can actually degrade model performance through catastrophic forgetting \citep{luo2025empiricalstudycatastrophicforgetting}or a decrease in reasoning ability \citep{lobo2025impactfinetuningchainofthoughtreasoning}. Theoretically, we will see that supervised fine tuning can enjoy good generalization, but that this property is sensitive to the learning setting.

\textbf{Inference Time Compute:} Scaling inference time computation is an integral ingredient to the dazzling success in AI that has emerged over the last year \citep{hicks2024chatgpt,deepseekai2025deepseekr1incentivizingreasoningcapability, snell2024scalingllmtesttimecompute}; particularly within long-form reasoning \citep{welleck2024decodingmetagenerationinferencetimealgorithms, yao2023treethoughtsdeliberateproblem, zhang2024restmctsllmselftrainingprocess}. While methods for utilizing inference time compute can be complex \citep{10.5555/3666122.3667924, tian2024toward, gandhi2024stream}, the tried and true Best-of-N (BoN) method can provide substantial boosts to performance gains \citep{cobbe_training_2021, lightman_lets_2023} and is even a crucial ingredient to the DeepSeek family of models \citep{deepseekai2025deepseekr1incentivizingreasoningcapability}. In Best-of-N \citep{sun2024fast}, $N$ candidate responses are generated from a base model; then one is selected based on some criteria \citep{askell2021generallanguageassistantlaboratory, glaese2022improvingalignmentdialogueagents, stiennon2022learningsummarizehumanfeedback}. BoN has been used as a base for exploration during reinforcement learning \citep{kumar2025llmposttrainingdeepdive}, \emph{OpenAI} used BoN to help design WebGPT \citep{nakano2022webgptbrowserassistedquestionansweringhuman} and BoN can even match the performance of RLHF for alignment \citep{rafailov_scaling_2024}. We are particularly interested in the case where the reward model used is learned from data \citep{cobbe_training_2021, snell2024scalingllmtesttimecompute, brown2024largelanguagemonkeysscaling}.
\subsection{Main Question and Motivation}
While it is clear that both of these methods are powerful, it remains unclear \emph{how these methods compare} for adapting a model to a new task, and how sensitive this comparison is to the properties of said task. Given a set of prompts, it is possible to collect good responses and perform SFT, or collect responses from the base model, scores for these responses, fit a reward model to the scores, and then perform BoN. In a seminal study performed by \citet{cobbe_training_2021}, it is demonstrated that as the number of training prompts grows, BoN will outperform SFT on the GSM8K data set. Recently, the authors of \citet{snell2024scalingllmtesttimecompute} show that, in some cases, substituting BoN for additional pre-training can improve performance (but not always). This is promising evidence for the superiority of BoN, but is certainly not decisive. More support for BoN is provided in \citep{brown2024largelanguagemonkeysscaling}, where it is shown that with an oracle verifier, the accuracy of BoN can converge to 1 on reasoning tasks. The primary goal of this work is to extend these studies in a theoretical direction. 
 \begin{tcolorbox}[enhanced,title=Primary Question,
        colframe=blue!40!black,
        colback=blue!2!white,
        fonttitle=\bfseries,
      attach boxed title to top text left={xshift=30mm,yshift=-2.5mm},
      boxed title
      style={size=small,colframe=blue!40!black,colback=blue!40!black}]
      \label{box:question}
  For a fixed number of samples $n$ from some task and a class of functions $\cF$, is it theoretically better to train a next token predictor and deploy the corresponding autoregressive model, or to train a reward predictor and deploy the corresponding BoN model?
\end{tcolorbox}
\section{PROBLEM SET UP AND RESULTS}
\label{sec: set-up}
For the alphabet $\Sigma = \{0,1\}$\footnote{Results for general alphabets and reward spaces are discussed in the appendix}, we consider the space $\Sigma^L$ to be the space of bit strings of length $L$ and the space $\Sigma' = \cup_{l=0}^\infty \Sigma^l$ to be the space of all finite bit strings. The goal of autoregressive language generation is to learn a map of the form $f^{\text{AR}}: \Sigma' \rightarrow \Sigma^T$, where $f^{\text{AR}}$ autoregressively builds the string with a \emph{next token producer} $f$. Here $T$ is the length of responses of the language model, which we assume is fixed throughout for simplicity. An important component of $f^{\text{AR}}$ is the string concatenation operator, which we denote as $\circ$. Additionally, for a string $\sigma$, we let $\sigma[t]$ be the $t$'th element of the string, $\sigma[-1]$ be the last element of the string, and $\sigma[:t]$ be all elements of $\sigma$ up to but not including $\sigma[t]$. 

Back to $f^{\text{AR}}$, let $f \in \Sigma^{\Sigma'}$ be a next token predictor and define the map $f_{\text{ap}}: \Sigma' \rightarrow \Sigma'$ given by $f_{\text{ap}}(\sigma) = (\sigma\ \circ\ f(\sigma))$. Then, the autoregressive map indexed by $f$ is given by $f^{\text{AR}}(\sigma) = (f(\sigma)\ \circ\ f(f_\text{ap}(\sigma))\ \circ\ , \ldots, \circ\ f(f_{\text{ap}}^{\circ T-1}(\sigma)))$. We will denote the class of next token predictors of interest as $\mathcal{F} \subset \Sigma^{\Sigma'}$. Finally, for clarity, we will denote autoregressive model inputs as $x \in \Sigma'$ and model outputs as $\mathbf{y} = (y_1, \ldots, y_T) \in \Sigma^T$.
We will specify a language modeling task of interest by a distribution of model inputs $P_X \in \Delta(\Sigma')$, a distribution of base model responses $\pi_0(\mathbf{y}|x)$,  the class of functions $\cF$, a distribution that generates SFT training responses $\pi_{\text{SFT}}(\mathbf{y}|x)$, and a target function $f_*$. We consider cases where $f_* \in \cF$ and $f_* \not \in \cF$, and will include this as a defining feature of realizable and agnostic settings, respectively.  

For the target function $f_*$ we denote the reward function induced by $f_*$ as $r_{f_*}: \Sigma' \times \Sigma^T \rightarrow \{0,1\}$. In particular, the reward function $r_{f_*}(x, \mathbf{y})$ induced by $f_*$ will measure how closely a response $\mathbf{y}$ approximates the autoregressive output of $f_*$ at $x$. In this paper, we will only consider binary reward functions $r_{f_*}$ as we are interested in tasks where the model is tested on some form of correctness at test time. Similar to the reward induced by $f_*$,  Along with the class $\cF$, we have a class of binary rewards $\cR_\cF$ induced by $\cF$:
\begin{equation}
\label{eq: reward_function_class}
\cR_{\cF} \triangleq \{r_f: \Sigma' \times \Sigma^T \rightarrow \{0,1\}\ |\ f\in \cF \}.
\end{equation} 
Since both $\cR_\cF$ and $\cF$ are now binary classes, we can measure their complexity using the same complexity measure (i.e. VC dimension). 

We provide two natural examples below that might be used during test time for a language modeling task. 
\begin{example}
\label{ex: rewar_models}
Consider two possible rewards induced by a function $f_*$.
\begin{equation}
    \label{eq: etr}
    \text{End token reward: } r_{f_*}(x, \mathbf{y}) = \mathbf{1}\{\mathbf{y}[-1] = f_*^{\text{AR}}(x)[-1]\}.
\end{equation}
\begin{equation}
    \label{eq: 0-1-reward}
    \text{0-1 reward: }  r_{f_*}(x, \mathbf{y}) = \mathbf{1}\{\mathbf{y} = f_*^{\text{AR}}(x)\}.
\end{equation}
\end{example}
The 0-1 reward is intended to emulate language modeling tasks such as health bench \citep{arora2025healthbenchevaluatinglargelanguage} where the entire response is graded, while the end token reward models tasks similar to \citet{cobbe_training_2021}, where only the final piece of the response is judged. Throughout the paper, we will point out when a result holds for a general $r_f$ and when the specific form $r_f$ is important for the given result. 

The objective of the language modeling task is to take in data $\cD_n$ of sample size $n$ drawn from $P_\cD^n$, (we discuss specifics of the data below) and produce either a function $\hat{f}$ such that $\mathbb{E}_{P_X}[ r_{f_*}(x, \hat{f}^{\text{AR}}(x))] \approx 1$ or a policy $\hat{\pi}$ such that  $\mathbb{E}_{P_X} \mathbb{E}_{\mathbf{y} \sim \hat{\pi}^{\text{AR}}(\mathbf{y}|x) }[ r_{f_*}(x, \mathbf{y})] \approx 1$ with high probability over the draw of training data. We emphasize that the quality of a function $\hat{f}$ is measured by $r_{f_{\star}}(x, \hat{f}^{\text{AR}}(x))$.

There are two regimes of learning we will cover, supervised fine-tuning and generation-verification. In generation-verification we assume sampling access to a base model  $\pi_0$ (which has small expected autoregressive reward) \emph{and} the ability to measure reward for a portion of these samples. In supervised fine-tuning, we assume that we have data drawn from (but no sampling access to) the policy $\pi_{\text{SFT}}$ (which has an expected autoregressive reward of $1$). 

\subsection{Generation-Verification}

In this section we introduce the Best-of-N (BoN) method commonly used in the generator-verifier regime. Recall that this method requires sampling access to some base model $\pi_0$.
\begin{enumerate}
    \item Collect training data $\cD_n \triangleq \{(x_i, \mathbf{y}_i), r_{f_*}(x_i, \mathbf{y}_i)\}_{i=1}^n$.
 When it is clear from the context, we may write the distribution of this training data $P_\cD \deq P_X \times \pi_0(\mathbf{y}|x) \times  r_{f_*}(x_i, \mathbf{y}_i) $.
    \item Train a reward estimator
    \begin{equation}
    \label{eq: r_fit_loss}
        \hat{r} \in \argmin_{r \in \cR_{\cF}} \sum_{i=1}^n \mathbf{1} \{r(x_i, \mathbf{y}_i ) \neq r_{f_*}(x_i, \mathbf{y}_i ).\}
    \end{equation}
    As convention, we will break ties at random.
    \item Deploy the BoN model $\pi_{\text{BoN}}$, which, given an input \( x \in \Sigma' \), samples \( N \) candidate responses \( \mathbf{y}_1, \ldots, \mathbf{y}_N \) 
    from $\pi_0$, then finally selects a final response determined by
\[
\mathbf{y}^* \in \argmax_{\mathbf{y} \in \{\mathbf{y}_1, \ldots, \mathbf{y}_N\}} \hat{r}(x, \mathbf{y}) \text{ where } \mathbf{y}_j \sim \pi_0(\mathbf{y}|x).
\]
\end{enumerate}
Recalling the definition of $\mathcal{R}_{\mathcal{F}}$ (\cf\ Equation \ref{eq: reward_function_class}) we see that picking a reward model $\hat{r}$ according to Equation \ref{eq: r_fit_loss} is essentially equivalent to picking the function $f$ whose induced reward $r_f$ best approximates the reward $r_{f_*}$ over the training data. This set up is intended to emulate empirical works such as \citet{cobbe_training_2021} where the same model class is used for both next token prediction and reward prediction. We still distinguish between $\cF$ and $\cR_{\cF}$ because the complexity of the reward class induced by $\cF$ need not match the complexity of the class $\cF$.

\subsection{Supervised Fine Tuning}

In the SFT regime, we have access to data $\cD_n: (x_i, \mathbf{y}_i)_{i=1}^n$ where the data generating distribution for $\cD$ is $P_{\cD} \deq P_X \times \pi_{\text{SFT}}(\mathbf{y}|x)$. Generally, $\pi_{\text{SFT}}$ is thought of as the ``gold standard," i.e., we can write $\mathbb{E}_{P_X} \mathbb{E}_{y \sim \pi_{\text{SFT}}^{\text{AR}}(x)}r_{f_*}(x, \mathbf{y}) = 1$. In the SFT procedure, we first select the next token predictor $\hat{f}$ defined by 
\begin{equation}
\label{eq: ntp}
\hat{f}_{\text{ntp}} = \argmin_{f \in \mathcal{F}} \sum_{i=1}^n \sum_{t=1}^T \mathbf{1}\{f(x_i\ \circ\ y_i[:t]) \neq y_i[t]\},\end{equation}
and then produce the autoregressive map given by $\hat{f}_{\text{ntp}}^{\text{AR}}$. As in BoN, we will break ties in the training process at random. We point out the object $\hat{f}_{\text{ntp}}$ is similar to the chain-of-thought function studied by \citet{joshi2025theorylearningautoregressivechain}. The key difference is in the objective function used during training: their chain of thought function is produced by replacing $y_i[:t]$ with $f^{\text{AR}}(x_i)[:t]$ in Equation \ref{eq: ntp}. This seems like a minor distinction but will produce different convergence properties, especially in so-called agnostic learning settings. 

\subsection{Summary of Findings}

A key difference between BoN and SFT is how they behave in the \emph{agnostic} setting, i.e. when the target function is not in $\cF$ or supervised fine-tuning training labels are noisy in some sense. We now formalize the concept of agnostic (and realizable) learning for language modeling tasks.
\begin{definition}
\label{def: realizability}
    We say that the language modeling task specified by $P_X$, $\pi_0$, $\pi_{\text{SFT}}$, $\cF$ and $f_*$ is realizable if $f_* \in \cF$ and $\pi_{\text{SFT}}(\mathbf{y}|x) = \mathbf{1}(\mathbf{y}=f_*(x))$. If this does not hold, we refer to the task as agnostic.
\end{definition}
Additionally, an important quantity for us is the performance of the base policy used. The pertinent quantity for us is known as ``coverage" \citep {huang2025bestofnbestthemcoverage} in the literature.
\begin{definition}
\label{def: coverage}
    Define the function coverage function as $P_0(x, r_{f_*}) = \mathbb{P}_{\mathbf{y}\sim \pi^\text{AR}_0(\mathbf{y}|x)}\mathbf{1}\{r_{f_*}(x, \mathbf{y}) = 1\}$. 
\end{definition}
Throughout we will assume that for $x \in \Sigma'$ it holds that $P_0(x, r_{f_*}) \geq \alpha > 0$, where we refer to $\alpha \in [0, 1]$ as the \emph{coverage constant.} The necessity of such a bound is discussed for the case of continuous rewards in \citet{huang2025bestofnbestthemcoverage}. 

Below, we present informal versions of our results for both the realizable and agnostic settings. 
\begin{theorem}[Informal, Realizable setting]
\label{thm: rel+det}
     Let $\cF$ and $P_x$ be any class and marginal distribution over $\Sigma'$ respectively.  If $f_{*} \in \cF$, then
     
    \begin{enumerate}
        \item (SFT) For both the end token and 0-1 $r_f$, with high probability over $\cD_n \sim (P_x \times f^{\text{AR}}_*(x))^n$, the expected reward of SFT using $\cD_n$ is $1-\cO({\frac{\log(T)\cdot\text{VC}(\mathcal{F})}{ n}}).$ 
        \item (BoN) For the end-token $r_f$, with high probability over $\cD_n \sim (P_x \times \pi_0(\mathbf{y}|x) \times r_{f_*}(x, \mathbf{y}))^n$, the expected reward of BoN using $\cD_n$ is $1 - \cO(\frac{\log(n) \cdot T \cdot \text{VC}(\cF)}{-\log(1-\alpha)\cdot n})$ \emph{and the dependence on $T$ can be tight}. For the 0-1 reward, with high probability over $\cD_n \sim (P_x \times \pi_0(\mathbf{y}|x) \times r_{f_*}(x, \mathbf{y}))^n$,  the expected reward of BoN using $\cD_n$ is $1-\cO(\frac{\log(n) \cdot \log(T) \cdot \text{VC}(\cF)}{-\log(1-\alpha)\cdot n})$.
        \end{enumerate}
\end{theorem}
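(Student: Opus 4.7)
The plan is to handle SFT and BoN separately, each via a realizable PAC argument, with the main subtlety lying in the reward-class VC computation for BoN.

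For SFT, I would exploit that in the realizable setting $f_{*}\in\cF$ and $\pi_{*}(\cdot\mid x)=f_{*}^{\text{AR}}(x)$, so $f_{*}$ attains zero loss in Equation~\eqref{eq: ntp} and hence so does the ERM $\hat f_{\text{ntp}}$. Viewing the training set as $nT$ (prefix, next-token) pairs drawn from the true autoregressive prefix distribution, a standard realizable VC bound yields per-token generalization error of order $\text{VC}(\cF)\log(nT)/(nT)$ on the training prefix distribution. To lift this to autoregressive deployment error I would argue by induction on the generation step: as long as $\hat f_{\text{ntp}}^{\text{AR}}(x)$ agrees with $f_{*}^{\text{AR}}(x)$ on tokens $1,\ldots,t-1$, the step-$t$ query prefix lies in the training distribution exactly, so the per-token error at step $t$ is controlled by the bound above. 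A union bound over $t=1,\ldots,T$ then yields $\mathbb{P}[\hat f_{\text{ntp}}^{\text{AR}}(x)\neq f_{*}^{\text{AR}}(x)]=O(\text{VC}(\cF)\log(T)/n)$. Since both the 0-1 and end-token rewards equal $1$ whenever the autoregressive outputs coincide, this same bound controls SFT suboptimality under either reward.

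For BoN, I would first bound $\text{VC}(\cR_{\cF})$ separately for the two reward types. For the 0-1 reward, note that $r_{f}(x,y)=1$ iff $f(x\circ y[:t])=y[t]$ for every $t\in[T]$; crucially, the set of queried prefixes $S_{n}=\{x_{i}\circ y_{i}[:t]\}_{i,t}$ has size at most $nT$ and is independent of $f$. Applying Sauer--Shelah to the restriction of $\cF$ to $S_{n}$ bounds the number of distinct labelings of the $n$ points by $(enT/d)^{d}$ with $d=\text{VC}(\cF)$, which upon inversion gives $\text{VC}(\cR_{\cF})=O(\text{VC}(\cF)\log T)$. For the end-token reward, the identity $r_{f}(x,y)=r_{f_{*}}(x,y)\Leftrightarrow f^{\text{AR}}(x)[-1]=f_{*}^{\text{AR}}(x)[-1]$ (which follows because the two indicators of $y[-1]$ matching a single bit are complementary the instant those two bits disagree) reduces the VC computation to that of the end-token class $\{x\mapsto f^{\text{AR}}(x)[-1]:f\in\cF\}$. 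Because the autoregressive query prefixes now depend on $f$, the cheap Sauer--Shelah trick fails; instead, a depth-$T$ composition argument over a base class of VC dimension $d$ gives the upper bound $O(Td)$, and an explicit construction in which each autoregressive step can independently flip a distinct coordinate of the shattered set shows the $T$ factor is tight. Plugging the two VC estimates into the realizable PAC inequality gives $\epsilon:=\mathbb{P}_{(x,y)\sim P_{X}\times\pi_{0}}[\hat r(x,y)\neq r_{f_{*}}(x,y)]=O(\text{VC}(\cR_{\cF})\log(n)/n)$. For the BoN policy drawing $N$ candidates $y_{1},\ldots,y_{N}\sim\pi_{0}(\cdot\mid x)$, the output is incorrect only if (i) no candidate is correct, which occurs with probability at most $(1-\alpha)^{N}$ by coverage, or (ii) $\hat r$ disagrees with $r_{f_{*}}$ on at least one of the $N$ candidates, which by a union bound occurs with probability at most $N\epsilon$. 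Choosing $N=\log(1/\epsilon)/(-\log(1-\alpha))$ to balance the two terms produces total failure probability $O(\log(n)\cdot\text{VC}(\cR_{\cF})/(-\log(1-\alpha)\,n))$, which specializes to the claimed bounds upon substituting the two VC estimates.

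The main obstacle is the $\Omega(T)$ lower bound on $\text{VC}(\cR_{\cF})$ for the end-token reward: one must exhibit an $\cF$ whose autoregressive composition genuinely shatters a set of size $\Omega(T\cdot\text{VC}(\cF))$, ruling out the much better $O(\log T)$ bound enjoyed by the 0-1 reward and thereby driving the qualitative gap between the two regimes. A secondary subtlety is ensuring the BoN reduction produces only a single $\log n$ factor instead of $(\log n)^{2}$, which constrains exactly how tightly one must invoke the realizable VC bound on $\hat r$ before folding in the coverage term.
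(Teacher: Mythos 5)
Your proof takes essentially the same approach as the paper for both parts. The combinatorial heart of the SFT bound --- that ERM on $n$ autoregressive examples generalizes because $\cF$ restricted to the $nT$ training prefixes admits only $\Gamma_\cF(nT)\leq (enT/\mathrm{VC}(\cF))^{\mathrm{VC}(\cF)}$ labelings, hence effective complexity $O(\mathrm{VC}(\cF)\log T)$ --- is exactly the growth-function reduction the paper makes (Theorem~\ref{thm: SFT-UB}, following Joshi et al.). For BoN, the two-term decomposition (coverage $(1-\alpha)^N$ plus $N$ times reward-model misclassification), the choice of $N\asymp\log n/(-\log(1-\alpha))$, and the key diagnosis that 0-1 rewards query $f$ only on data-determined prefixes (so Sauer--Shelah gives $\mathrm{VC}(\cR_\cF)=O(\mathrm{VC}(\cF)\log T)$) whereas end-token rewards query $f$ along its own rollout (so $\mathrm{VC}(\cR_\cF)$ can be $\Theta(T\,\mathrm{VC}(\cF))$, which is what Theorem~\ref{thm: BoN lower-bound} certifies) all coincide with Theorem~\ref{thm: BON}, Corollary~\ref{corr: BoN1}, and Proposition~\ref{prop: vc-r}. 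You also correctly flag the two real subtleties: the $\Omega(T)$ lower bound and avoiding a spurious $(\log n)^2$.

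One step in your SFT write-up deserves caution. You invoke ``a standard realizable VC bound'' on the $nT$ (prefix, next-token) pairs to get per-token error $O(\mathrm{VC}(\cF)\log(nT)/(nT))$, but those $nT$ pairs are not i.i.d.: the $T$ prefixes attached to a single $(x_i,\mathbf{y}_i)$ are produced by one draw and are strongly dependent, so the classical realizable rate with sample size $nT$ does not apply directly. The paper sidesteps this by keeping the $n$ examples as the i.i.d.\ unit and bounding the growth function of the multilabel 0-1 loss class $\{(x,\mathbf{y})\mapsto\mathbf{1}[f^{\mathrm{AR}}(x)\neq\mathbf{y}]\}$ on $n$ points by $\Gamma_\cF(nT)$, giving the $\log T$ factor in one shot. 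Your subsequent chaining observation --- that at the first generation error the query prefix coincides with a ground-truth prefix --- is the right intuition, but it is already subsumed once one works with the multilabel loss directly; formalizing it as per-token error followed by a union bound over $t$ would need the (dependent) per-token rate to be justified separately.
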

Theorem \ref{thm: rel+det} shows that in the realizable setting and for end token rewards, SFT can actually perform \emph{better} than BoN with the sample number of training samples. The formal BoN analysis for Theorem \ref{thm: rel+det} is carried out in Section \ref{sec: BoN}. First, a general bound for BoN in terms of $P_0(x, r_{f_*})$ and the quality of the reward model $\hat{r}$ is established (\cf\ Theorem \ref{thm: BON}). Then, standard results from learning theory and the assumption on $P_0(x, r_{f_*})$ are applied to arrive at the result (\cf\ Results \ref{corr: BoN1}, \ref{prop: vc-r}). In Theorem \ref{thm: BoN lower-bound} we establish that because the complexity of the end token reward scales linearly with $T$ 
 BoN can have poor performance for any $N$ if $n< \text{VC}(\cF)\cdot T/2$. This establishes \emph{linear} dependence of the risk on $T$ for BoN. The formal SFT analysis is established in Theorem \ref{thm: SFT-UB}. In this case, SFT reduces to a ``CoT" learning procedure studied by \citet{joshi2025theorylearningautoregressivechain}. Thus, an upper-bound on the performance of SFT can follows directly from results on the growth functions of auto-regressive function classes they establish. 

Unlike the realizable setting,  it turns out that no such reduction is possible for SFT in the agnostic setting. We will address the effects of this for the two agnostic cases (see Definition \ref{def: realizability}): (1) if $\pi_{\text{SFT}}(\mathbf{y}|x) = \mathbf{1}(\mathbf{y}=f_*(x))$ but $f_* \not \in \cF$ and (2) if $f_* \in \cF$ but $\pi_{\text{SFT}}(\mathbf{y}|x) \neq \mathbf{1}(\mathbf{y}=f_*(x))$. 

\begin{theorem}[Informal, Agnostic setting where $\pi_{\text{SFT}}(\mathbf{y}|x) = f^{\text{AR}}_*(x)$ but $f_* \not \in \cF$] 
\label{thm: ag+det}
    \phantom{text text}
    \begin{enumerate}
        \item (SFT) For the end token $r_f$, there exists a marginal $P_x$, class $\cF$, function $f_{*} \notin \cF$, and $\pi_{\text{SFT}}(\mathbf{y}|x) = f_*^{\text{AR}}(x)$, such that $\sup_{g\in\cF}\mathbb{E}_{P_x}r_{f_*}(x, g^{\text{AR}}(x)) = 1$, but, with high probability over the draw of the SFT training dataset $\cD_n \sim (P_x \times \pi_{\text{SFT}}(\mathbf{y}|x))^n$, we have that $\mathbb{E}_{P_x}r_{f_*}(x, \hat{f}_{\text{ntp}}^{\text{AR}}(x)) = 0.$

        \item (BoN) For both the end token and 0-1 $r_f$, for every marginal distribution $P_x$, class $\cF$, function $f_{*} \notin \cF$, appropriate choice of $N$, and sufficiently large $n$, with high probability over the draw of $\cD_n \sim (P_x \times \pi_0(\mathbf{y}|x) \times r_{f_*}(x, \mathbf{y}))^n$,  it holds that 
        \begin{align*}
            \mathbb{E}_{P_x} \mathbb{E}_{\mathbf{y}\sim \hat{\pi}_\text{BoN}(\mathbf{y}|x)} r_{f_*}(x, \mathbf{y}) \gtrsim \\
            1-\left[\kappa\left(\log_{1-\alpha}\left(\frac{-\kappa}{\log(1-\alpha)}\right) - \frac{1}{\log(1-\alpha)}\right)\right],
        \end{align*}
        where $\kappa \triangleq \inf_{r \in \cR_{\cF}} \mathbb{E}_{x,y \sim P_x \times \pi_0} \mathbf{1}\{r_{f}(x,y) \neq r_{f_{*}}(x,y)\}$.
        \end{enumerate}
\end{theorem}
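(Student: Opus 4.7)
For part (1), the plan is to exhibit a single pathological instance where the teacher-forced next-token-prediction objective is misaligned with the end-token reward. I would take $T=3$, let $P_x$ put all its mass on some fixed input $x_0 \in \Sigma^{\star}$, and set $\cF = \{f_0, f_1\}$, where $f_b$ is the constant next-token predictor that outputs $b \in \{0,1\}$ on every prefix. Choose any $f_{\star}$ with $f_{\star}^{\text{AR}}(x_0) = (0,0,1)$, so that $\pi_*(\mathbf{y}\mid x_0) = f_{\star}^{\text{AR}}(x_0)$ is the point mass on $(0,0,1)$; neither $f_0$ nor $f_1$ agrees with $f_{\star}$ on all three relevant prefixes, so $f_{\star} \notin \cF$. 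A direct computation of the SFT objective in Equation~\ref{eq: ntp} gives exactly one teacher-forced mistake per example for $f_0$ (on the final token) and two for $f_1$ (on the first two tokens), so $\hat{f}_{\text{ntp}} = f_0$ is the unique minimizer for every $n \geq 1$. Since $f_0^{\text{AR}}(x_0)$ ends in $0$ we get $r_{f_{\star}}(x_0, \hat{f}_{\text{ntp}}^{\text{AR}}(x_0)) = 0$, whereas $f_1^{\text{AR}}(x_0)$ ends in $1$, giving $\sup_{g \in \cF}\mathbb{E}_{P_x} r_{f_{\star}}(x, g^{\text{AR}}(x)) = 1$.

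For part (2), the plan is to combine an agnostic VC bound for $\hat{r}$ with a union bound over the $N$ candidate generations and then optimize $N$. First, I would apply a standard agnostic PAC argument to the empirical risk minimizer defined in Equation~\ref{eq: r_fit_loss} to conclude that, with high probability over $\cD_n$, the population $0/1$ error $\mathbb{E}_{P_x \times \pi_0}[\mathbf{1}\{\hat{r}(x,\mathbf{y}) \neq r_{f_{\star}}(x,\mathbf{y})\}]$ is at most $\kappa + O(\sqrt{\text{VC}(\cR_{\cF})/n})$, which for sufficiently large $n$ is at most $(1+o(1))\kappa$. Writing $\varepsilon_{\text{fp}}(x) \triangleq \mathbb{P}_{\mathbf{y}\sim\pi_0(\cdot\mid x)}[\hat{r}(x,\mathbf{y})=1,\, r_{f_{\star}}(x,\mathbf{y})=0]$ and defining $\varepsilon_{\text{fn}}(x)$ analogously for false negatives, the key per-$x$ inequality I would establish is
\[
\mathbb{P}\!\left[r_{f_{\star}}(x,\mathbf{y}^{\star}) = 0 \,\middle|\, x\right] \;\leq\; \big(1 - P_0(x, r_{f_{\star}}) + \varepsilon_{\text{fn}}(x)\big)^{N} + N\,\varepsilon_{\text{fp}}(x),
\]
which follows by upper bounding the bad event by $\{\text{no sample is a true positive}\} \cup \{\text{some sample is a false positive}\}$: off this event every $\hat{r}=1$ candidate is good and at least one such candidate exists, so BoN is forced to return a correct response.

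To finish, I would take expectations over $x$ and use the coverage assumption $P_0(x,r_{f_{\star}}) \geq \alpha$. The first term is at most $(1-\alpha+\varepsilon_{\text{fn}}(x))^{N}$, and splitting the domain into $\{\varepsilon_{\text{fn}}(x) \leq \tau\}$ and its complement, with Markov applied to $\mathbb{E}_{P_x}[\varepsilon_{\text{fn}}(x)] \leq \kappa$, reduces it to $(1-\alpha)^N$ plus slack of order $\kappa/\tau$; the second term averages to $N\kappa$. Setting $c \triangleq -\log(1-\alpha)$ and minimizing the leading expression $(1-\alpha)^N + N\kappa$ over $N$ gives the first-order condition $(1-\alpha)^{N^{\star}} = \kappa/c$, i.e., $N^{\star} = \log(c/\kappa)/c$; substituting yields $(\kappa/c)\big(1 + \log(c/\kappa)\big)$, which rewritten in base $1-\alpha$ is exactly $\kappa\big(\log_{1-\alpha}(-\kappa/\log(1-\alpha)) - 1/\log(1-\alpha)\big)$, the quantity appearing in the statement. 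The main obstacle I anticipate is precisely the reduction from $\mathbb{E}_{P_x}[(1 - P_0(x,r_{f_{\star}}) + \varepsilon_{\text{fn}}(x))^{N}]$ to $(1-\alpha)^N$ without paying more than the optimized leading term: choosing $\tau$ so that the Markov slack matches the order of $N^{\star}\kappa$, rounding $N^{\star}$ to an integer, and taking $n$ large enough that the VC excess-risk term is negligible are the non-routine bookkeeping steps completing the proof.
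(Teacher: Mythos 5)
Both parts of your proposal are correct, but each takes a route that differs from the paper's.

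For Part 1, your construction with two constant next-token predictors and a point-mass $P_x$ is a valid and in fact somewhat cleaner counterexample than the paper's. The paper (Theorem \ref{thm: ag-fail}) uses count-based predictors ($f^1$ outputs $1$ iff the prefix has at least as many ones as zeros; $f^2$ outputs $1$ iff the prefix has exactly one one), arranged so that the teacher-forced ERM picks $f^1$, which then autoregressively spirals into an all-ones tail, while $f^2$ would have been correct. Your version replaces this machinery with constant predictors $f_0, f_1$ and a carefully chosen single target string $(0,0,1)$, so that $f_0$ wins the teacher-forced loss ($n$ vs.\ $2n$ mistakes) yet has wrong end token, while $f_1$ has the right end token. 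Both constructions show the same phenomenon: minimizing next-token loss with teacher forcing can be a strictly worse proxy for end-token reward than the alternative in $\cF$. Your construction is simpler and arguably more transparent; the paper's is slightly more general in flavor since it does not need to tune $T$ to exactly $3$ (but neither do you, really---the same $f_0$-vs-$f_1$ argument works for any $T \geq 2$ with target $(0,\ldots,0,1)$).

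For Part 2, the paper proves Corollary \ref{corr: BoN2} by directly invoking the general BoN bound of Theorem \ref{thm: BON}, whose decomposition is ``either no correct sample was drawn, or a correct sample was drawn but BoN missed it, in which case $\hat r$ must differ from $r_{f_*}$ on one of the $N$ draws.'' This yields $(1-\alpha)^N + 2N\,\mathbb{E}|\hat r - r_{f_*}|$ \emph{exactly}, with the coverage term untouched. You instead decompose the bad event as ``no true positive among the $N$ draws'' $\cup$ ``some false positive among the $N$ draws,'' which gives $(1 - P_0(x) + \varepsilon_{\text{fn}}(x))^N + N\varepsilon_{\text{fp}}(x)$. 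This is valid, but the false-negative rate now pollutes the coverage term, forcing you into the truncation/Markov bookkeeping you describe (choosing $\tau \lesssim (1-\alpha)/N$ so that $(1-\alpha+\tau)^N \lesssim (1-\alpha)^N$ while $\kappa/\tau$ stays of the same order as $N\kappa$). This does close, but only because $\alpha$ is bounded away from $1$; the paper's decomposition sidesteps this entirely by putting both error modes (false positive and false negative on a drawn sample) into the single term $N\,\mathbb{E}\mathbf{1}\{\hat r \neq r_{f_*}\}$, leaving the coverage term exactly $(1-\alpha)^N$. Your optimization of $N$ and the resulting expression $(\kappa/c)(1 + \log(c/\kappa))$ with $c = -\log(1-\alpha)$ is exactly what the paper's displayed bound evaluates to, so the endpoints agree. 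In short: same agnostic-PAC ingredient, same optimized $N$, but your intermediate decomposition is slightly more delicate to control; the paper's conditioning avoids the Markov step at no cost.
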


Theorem \ref{thm: ag+det} shows that in the agnostic setting where $\pi_{\text{SFT}}(\mathbf{y}|x) = f^{\text{AR}}_*(x)$ but $f_* \not \in \cF$, SFT may never be able to obtain rewards larger than $0$ for the end-token-reward. This is contrast to BoN, which is able to obtain non-zero reward in this agnostic case. The formal BoN result is established in Corollary \ref{corr: BoN2} while the formal result for SFT is given in \ref{thm: ag-fail}. 

So far we have seen that in the realizable setting, SFT converges with a better dependence on $T$ than BoN, while in the agnostic setting the performance of SFT can fail to grow beyond a reward of zero. However, experiments such as those in \citet{cobbe_training_2021} suggest that there are problem settings where both methods converge to a valid solution but BoN will converge faster. A partial explanation for this phenomenon lies in the second agnostic case, when $f_* \in \cF$ but $\pi_{\text{SFT}}(\mathbf{y}|x) \neq f^{\text{AR}}_*(x)$.

\begin{theorem}[Informal, Agnostic setting where $f_* \in \cF$ but $\pi_{\text{SFT}}(\mathbf{y}|x) \neq f^{\text{AR}}_*(x)$]
\label{thm: fin+ran}
\phantom{text text}
\begin{enumerate}
\item (SFT) For the end token $r_f$, there exists a marginal distribution $P_x$, finite class $\cF$, target $f_* \in \cF$, and $\pi_{\text{SFT}}(\mathbf{y}|x) \neq f^{\text{AR}}_*(x)$ such that if the learner performs SFT with $n < \frac{|\cF| \cdot T}{2}$ training samples $\cD_n \sim (P_x \times \pi_{\text{SFT}}(\mathbf{y}|x))^n$, then with high probability, $\mathbb{E}_{P_x}r_{f_*}(x, \hat{f}_{\text{ntp}}^{\text{AR}}(x)) = \frac{1}{2}$.
\item (BoN) Let $\cF$ and $P_x$ be any finite class and marginal distribution respectively. For \emph{both} the end token and 0-1 $r_f$, if $f_* \in \cF$ and the learner fits BoN with training samples $\cD_n \sim (P_x \times \pi_0(\mathbf{y}|x) \times r_{f_*}(x, \mathbf{y}))^n$, then, with high probability, the expected reward of BoN is at least $1-\cO(\frac{\log(n)\log(|\cF|)}{\log(1-\alpha)\cdot n})$.
\end{enumerate}
\end{theorem}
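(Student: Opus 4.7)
My plan is to combine the general BoN guarantee in Theorem~\ref{thm: BON} with a realizable ERM analysis on the finite reward class $\cR_{\cF}$. Since $f_* \in \cF$ and $\cR_{\cF}$ is built from $\cF$ via Equation~\ref{eq: reward_function_class}, we have $r_{f_*} \in \cR_{\cF}$ and $|\cR_{\cF}| \le |\cF|$. The ERM in Equation~\ref{eq: r_fit_loss} is therefore a realizable binary classification problem over a finite hypothesis class, so a standard union bound over $\cR_{\cF}$ yields $\Pr_{(x,\mathbf{y}) \sim P_x \times \pi_0}[\hat{r}(x,\mathbf{y}) \neq r_{f_*}(x,\mathbf{y})] = \cO(\log(|\cF|)/n)$ with high probability. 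Feeding this estimation error into Theorem~\ref{thm: BON}, whose guarantee trades a coverage term of order $(1-\alpha)^{N}$ against a reward-model error term of order $N \cdot \Pr[\hat{r}\neq r_{f_*}]$, and choosing $N = \Theta(\log(n)/(-\log(1-\alpha)))$ balances the two contributions and delivers the claimed $1 - \cO(\log(n)\log(|\cF|)/(-\log(1-\alpha) \cdot n))$ bound.

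\paragraph{SFT lower bound (Part 1).} My plan is to construct an adversarial instance in which the mismatch $\pi_*(\mathbf{y}|x) \ne f_*^{\text{AR}}(x)$ makes intermediate next-token labels uninformative, so the ERM in Equation~\ref{eq: ntp} decouples into $|\cF|$ independent noisy subproblems that collectively require $\Omega(|\cF| T)$ samples to resolve. Concretely, I would let $P_x$ be uniform over roughly $|\cF|/T$ prompts $\{x_m\}$; take $\cF$ to contain $f_*$ together with one ``decoy'' $f_{m,t}$ per prompt $x_m$ and position $t \in [T]$; and design each decoy to agree with $f_*$ everywhere except at its own distinguishing prefix $x_m \circ f_*^{\text{AR}}(x_m)[:t]$, where it flips the next-token prediction. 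The target $f_*$ is chosen so that any single flipped intermediate prediction propagates to an incorrect end token on the corresponding prompt, ensuring each decoy has end-token reward $0$ on its own prompt. Finally, I would let $\pi_*(\mathbf{y}|x)$ emit noisy intermediate bits together with the deterministically correct end token, so that $\pi_*^{\text{AR}}$ achieves reward $1$ while $\pi_*(\mathbf{y}|x) \ne f_*^{\text{AR}}(x)$, realizing the agnostic mode of Definition~\ref{def: realizability} used in the theorem statement.

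Under this construction the SFT loss factors across the $|\cF|$ distinguishing prefixes, and each decoy $f_{m,t}$ is ruled out only by a training example whose prompt equals $x_m$ and whose noisy trajectory passes through $f_*^{\text{AR}}(x_m)[:t]$. A coupon-collector/Chernoff argument on these $|\cF|$ prefixes shows that when $n < |\cF| T / 2$ a constant-probability fraction of the decoys receive no informative training observations, so the per-prefix contribution to Equation~\ref{eq: ntp} is exactly tied between $f_*$ and the surviving decoy and the random tie-breaking then picks uniformly between them; integrating over $P_x$ produces expected end-token reward exactly $\tfrac{1}{2}$ with high probability.

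\paragraph{Main obstacle.} The hardest step is engineering the pair $(\cF, \pi_*)$ so that the per-prefix SFT subproblems decouple cleanly and the per-prefix losses at undercovered prefixes are \emph{exactly} tied rather than merely close to tied, so that random tie-breaking yields a fair coin flip between $f_*$ and the corresponding decoy rather than a biased selection. In particular, the noise structure of $\pi_*$ must simultaneously (a) preserve $\mathbb{E}_{P_x}\mathbb{E}_{\mathbf{y}\sim\pi_*^{\text{AR}}}[r_{f_*}(x,\mathbf{y})] = 1$ as required by Definition~\ref{def: realizability}'s second failure mode, and (b) arrange the distinguishing prefixes to appear in training at frequencies that saturate the $|\cF| T$ threshold. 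Once the construction is pinned down, both directions reduce to routine calculations: a coupon-collector/Chernoff bound for the SFT lower bound, and plugging the finite-class realizable ERM rate into Theorem~\ref{thm: BON} for the BoN upper bound.
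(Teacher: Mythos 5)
Your BoN argument (Part 2) is essentially the paper's: reduce reward fitting to realizable ERM over the finite class $\cR_{\cF}$ (with $|\cR_{\cF}| \le |\cF|$), apply a finite-class/union-bound generalization rate, and plug into Theorem~\ref{thm: BON}, balancing the $N\cdot\Pr[\hat r\neq r_{f_*}]$ term against the $(1-\alpha)^N$ coverage term with $N = \Theta(\log(n)/(-\log(1-\alpha)))$. No substantive difference there.

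The SFT construction (Part 1) is where you diverge from the paper and where there is a real gap. Your per-prompt decoys $f_{m,t}$ agree with $f_*$ everywhere except at one distinguishing prefix of one prompt $x_m$. Consequently $f_{m,t}^{\text{AR}}$ coincides with $f_*^{\text{AR}}$ on every other prompt, so its end-token reward under $P_x$ is $1 - 1/M$ where $M$ is the number of prompts --- near $1$, not $1/2$. Moreover, random tie-breaking in Equation~\ref{eq: ntp} happens globally: ERM returns a single $\hat f_{\text{ntp}}$ from the whole class, not one per prefix, so there is no ``integrating over $P_x$'' in which each undercovered prefix independently flips a fair coin. If $f_*$ ties globally with $k$ decoys, the uniform pick over the $k+1$ candidates yields expected reward $1 - \tfrac{k}{(k+1)M}$, which approaches $1/2$ only if $M = O(1)$, and then $|\cF| = O(T)$ so your threshold $n < |\cF|T/2$ collapses to $O(T^2)$ and the reward is still only approximately $1/2$, not exactly.

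The paper's construction (Theorem~\ref{thm: T-dep}) sidesteps all of this by making every non-target function have reward \emph{exactly} $1/2$ by design rather than relying on exact ties. It takes $\cX = \Sigma^{DT+1}$, $\cF = \{f_0, \ldots, f_D\}$ with $f_0 \equiv 0$ and $f_d(x) = x[-dT]$, $P_x$ uniform, reward $\mathbf{1}\{y[-1]=0\}$, and $\pi_*(\mathbf{y}|x)$ uniform on the first $T-1$ bits with the last bit fixed to $0$. Since $f_d^{\text{AR}}(x)[-1]$ reads a fresh uniform coordinate of $x$ for each $d\ge 1$, every decoy has reward exactly $1/2$, and the empirical NTP losses become $D+1$ independent Binomials --- $f_0$ with $n(T-1)$ trials, each $f_d$ with $nT$ trials, all at rate $1/2$. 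The mechanism is then a binomial order-statistic argument: with probability at least $1/4$, $\min_{d\ge 1} Z_d < Z_0$ whenever $n < \tfrac{\log(|\cF|)\cdot T}{2}$, so ERM selects a decoy with reward exactly $1/2$. Note also that the formal threshold in the paper is $\log(|\cF|)\cdot T/2$, not the $|\cF|\cdot T/2$ of the informal statement you were targeting.
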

The analysis for Theorem \ref{thm: fin+ran} is done in Section \ref{sec: noisy response}. The following summarizes the answer to Question \ref{box:question} provided by Theorems \ref{thm: rel+det}, \ref{thm: ag+det}, \ref{thm: fin+ran}.

 \begin{tcolorbox}[enhanced,title=Answer (Realizable),
        colframe=blue!40!black,
        colback=blue!2!white,
        fonttitle=\bfseries,
      attach boxed title to top text left={xshift=30mm,yshift=-2.5mm},
      boxed title
      style={size=small,colframe=blue!40!black,colback=blue!40!black}]
      \label{box:answer1}
\begin{itemize}
    \item If the coverage constant $P_0(x, r_{f_*})$ is sufficiently bounded away from zero, then BoN improves with $n$ to $1$ but with a rate that is possibly linear in $T$. 
    \item The expected reward of SFT improves with $n$ to 1 at a rate that is nearly independent of $T$. 
\end{itemize}
\end{tcolorbox}
\begin{tcolorbox}[enhanced,title=Answer (Agnostic),
        colframe=blue!40!black,
        colback=blue!2!white,
        fonttitle=\bfseries,
      attach boxed title to top text left={xshift=30mm,yshift=-2.5mm},
      boxed title
      style={size=small,colframe=blue!40!black,colback=blue!40!black}]
      \label{box:answer2}
\begin{itemize}
    \item If the coverage constant $P_0(x, r_{f_*})$ is sufficiently bounded away from zero, then the expected reward of BoN improves with $n$ to a constant strictly greater than zero. 
    \item If $f_* \not \in \cF$ but $\pi_{\text{SFT}}(\mathbf{y}|x)=f^{\text{AR}}_*(x)$ then as $n$ grows, the expected  reward of SFT can remain zero even if there is function $g \in \cF$ with expected autoregressive reward of $1$.
    \item If $\cF$ is finite and $f_* \in \cF$, there exists a policy $\pi_{\text{SFT}}(\mathbf{y}|x) \neq f^{\text{AR}}_*(x)$ where both the expected reward of BoN and the expected reward of SFT grow to $1$, but BoN does so \emph{faster} in terms of dependence on $T$. 
\end{itemize}
\end{tcolorbox}

\subsection{Practical implication of findings}
Thus far, we have seen a theoretical comparison of BoN and SFT. We use this subsection to emphasize the practical implications of our findings. They can be summarized in four points:
\begin{itemize}
\item SFT can fail due to teacher forcing, the model is trained to predict $y[t]$ using $y[:t]$ but at test time it predicts $y[t]$ using $\hat{f}(x)[:t-1]$. This suggests an alternative approach to SFT without teacher forcing.

\item SFT can fail because the training objective equally weights all tokens but it may only be measured on the final token.

\item A good coverage constant improves BoN.

\item Our results suggest that SFT should be chosen in ``realizable" settings while BoN should be picked in "agnostic" settings. As model classes grow more expressive, SFT should perform better, especially as tasks require longer responses.
\end{itemize}
The inspiration for our investigation is Figure 5 of \citep{cobbe_training_2021} which we include as Figure \ref{fig:fine-tuning-vs-verification}. For the fine-tuning baseline, they fine-tune an LLM  with 6B and 175B parameters on GSM8K and use it to generate a solution for each problem at test time. For verification, they fine-tune an LLM to generate (multiple) candidate solutions to each problem and a (separate) trained verifier model to judge the correctness of the (candidate) solutions. Given our theoretical findings, it suggests that 175B parameter models may not achieve zero training loss on GSM8K, as BoN seems to enjoy a better rate. Additionally, they improve the coverage constant by briefly fine-tuning the base model, which corroborates our findings. On the other hand, teacher-forcing does not seem to prevent the models from learning the task as $n$ grows. 
\begin{figure}
\centering
\includegraphics[width=0.49\linewidth]{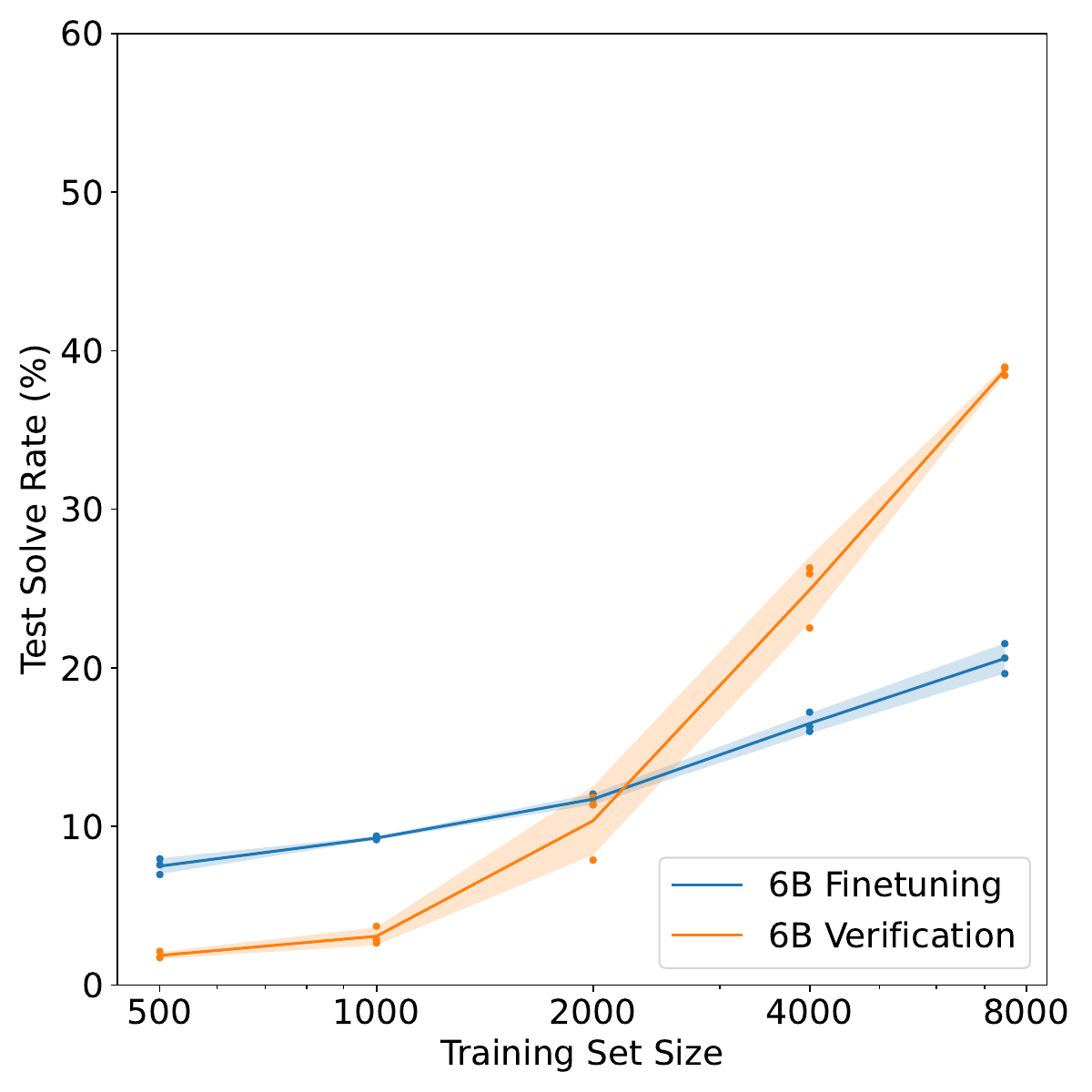}
\includegraphics[width=0.49\linewidth]{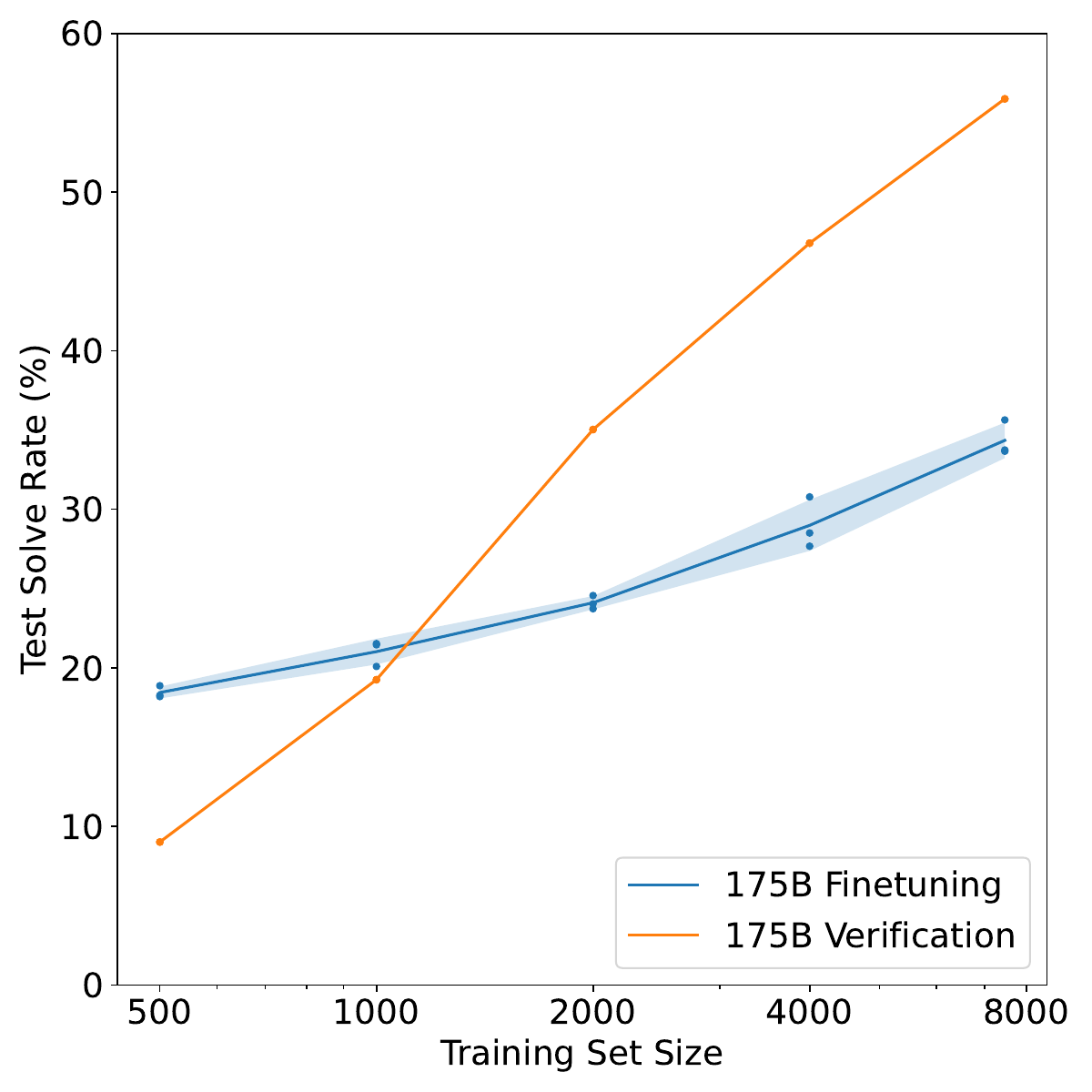}
\caption{\citet{cobbe_training_2021}'s experiment of fine-tuning vs verification on GSM8K.}
\label{fig:fine-tuning-vs-verification}
\end{figure}

\subsection{Related Work}

Much of the set-up in Section \ref{sec: set-up} is inspired by work done by \citet{joshi2025theorylearningautoregressivechain}. They study chain-of-thought (CoT) learning; the set-up is similar in that at test time you construct an autoregressive function using a base class $\mathcal{F}$. While SFT appears similar to their CoT method, there is a crucial difference: At train time, SFT is trained to predict the next token using \emph{ground truth} while in CoT, the model is trained to predict using its own predictions. \citet{malach2024autoregressivenexttokenpredictorsuniversal} also studies autoregressive learning; however, they provide results assuming \emph{time variance}. That is, the next-token predictor $f$ is allowed to vary at each step $t$. Since LLM weights are fixed at each step during generation, we will study time-invariant learning.

Our model for supervised fine-tuning can also be thought of as a special case of behavior cloning \citep{10.5555/647636.733043} where the models fit are deterministic in nature. Recent works, including 
\citep{block2023provableguaranteesgenerativebehavior} and
\citep{foster2024behaviorcloningneedunderstanding}, study generative autoregressive learning through the lens of behavior cloning. 

One important line of work seeks to study the optimality of BoN (as compared to other inference time alignment methods). Like us, the authors of \citet{huang2025bestofnbestthemcoverage} present results on the performance of BoN for a noisy reward model. In contrast with us, they study continuous rewards and perform the analysis conditioned on a prompt, finding that BoN can perform optimally for the proper choice of $N$. The works \citet{gui2024bonbonalignmentlargelanguage, beirami2025theoreticalguaranteesbestofnalignment} show that BoN can produce the model closest to some reference policy within a class of models with restricted KL divergence from the base model. \citep{yang2024asymptoticslanguagemodelalignment} provide a comparison of BoN and RLHF under a linear reward assumption. \citep{foster2025goodfoundationnecessaryefficient} introduce a reinforcement-learning framework for studying how base models explore possible solutions during inference time computation. Like us, an important quantity is the ability of the base model.

\section{ANALYSIS OF BoN}
\label{sec: BoN}
Conditioned on an input string $x$, the BoN procedure draws $N$ samples from a base policy $\pi_0(\cdot|x)$ and returns the draw $y_{j^*}$ that scores highest with some estimated verifier. The two important quantities are the quality of the estimated verifier and the probability that $\pi_0(\cdot|x)$ generates a correct response.  

We present a generalized result on the performance of BoN in terms of these quantities that hold for non-binary rewards.
\begin{assumption}
   Assume that the reward function induced by $f_*$ is a map $r_{f_*}: \Sigma' \times \Sigma^T \rightarrow [0,1]$. For each $x\in \Sigma'$, let $\mathcal{Y}^{*}(x)$ denote the argmax set of the reward function $r_{f_*}(x, y)$, and assume that the reward $r_{f_*}(x, y)$ satisfies $\text{argmax}_{y \in \mathcal{Y}} r_{f_*}(x, y) > \text{argmax}_{y \in \mathcal{Y}\backslash \mathcal{Y}^*(x)} r_{f_*}(x, y) + \sigma$ for $\sigma> 0$ and all $x \in \Sigma'$. 
\end{assumption}

 A similar margin assumption appears in \citep{huang2025selfimprovement}, but  is defined with respect to the base policy $\pi_0$. This difference in formulation leads to performance guaranties that differ from those in our work. See \citep[Appendix C]{huang2025selfimprovement} for further details on their BoN guaranty.

\begin{theorem}
\label{thm: BON}
    Let $\delta(x,y) = \hat{r}(x, y) - r(x, y)$, $P_0(X) = \mathbb{P}_{\mathbf{y} \sim \pi_0(\mathbf{y}|x)}(y \in \mathcal{Y}^*(x))$ and assume assumption 1. If $\hat{\pi}_{\text{BoN}}$ corresponds to BoN with $N$ draws at test time, it holds that
    \begin{align*}
        \mathbb{E}_{P_x} \mathbb{E}_{\mathbf{y}\sim \hat{\pi}_{\text{BoN}}(\mathbf{y}|x)} \mathbf{1}\{\mathbf{y} \in \cY^*(x)\} \geq\\
         1-[\mathbb{E}_{x, y \sim P_x \times \pi_0(\mathbf{y}|x)} \frac{2 N |\delta(x, y)|}{\sigma} + \mathbb{E}_X (1-P_0(x))^N]
    \end{align*}
\end{theorem}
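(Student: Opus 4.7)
The plan is to decompose the failure event $\{\mathbf{y}^* \notin \cY^*(x)\}$ into two sub-events and bound each by one of the two summands in the claimed upper bound. Condition on the input $x$ and on the $N$ candidate responses $\mathbf{y}_1,\ldots,\mathbf{y}_N$ drawn i.i.d.\ from $\pi_0(\cdot\mid x)$. Let $\mathcal{A}$ be the event that none of the $\mathbf{y}_j$ lies in $\cY^*(x)$, and $\mathcal{B}$ the event that at least one $\mathbf{y}_k$ lies in $\cY^*(x)$ but BoN's selection satisfies $\mathbf{y}^*\notin \cY^*(x)$. Then $\{\mathbf{y}^* \notin \cY^*(x)\} \subseteq \mathcal{A}\cup \mathcal{B}$, and $\mathbb{P}(\mathcal{A}\mid x)=(1-P_0(x))^N$ by independence; integrating over $x\sim P_x$ yields the second term of the bound.

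The main work is bounding $\mathbb{P}(\mathcal{B}\mid x)$. On $\mathcal{B}$ there is some $\mathbf{y}_k \in \cY^*(x)$ among the draws with $\hat{r}(x,\mathbf{y}^*) \geq \hat{r}(x,\mathbf{y}_k)$ by the argmax rule defining BoN. Writing $\hat{r}=r_{f_*}+\delta$ and rearranging gives
\[r_{f_*}(x,\mathbf{y}_k) - r_{f_*}(x,\mathbf{y}^*) \;\leq\; \delta(x,\mathbf{y}^*) - \delta(x,\mathbf{y}_k).\]
Since $\mathbf{y}_k\in\cY^*(x)$ and $\mathbf{y}^*\notin\cY^*(x)$, Assumption 1 forces the left-hand side to exceed $\sigma$, so by the triangle inequality $|\delta(x,\mathbf{y}^*)| + |\delta(x,\mathbf{y}_k)| > \sigma$, and hence at least one of the two exceeds $\sigma/2$. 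In particular, on $\mathcal{B}$ there exists an index $j\in\{1,\ldots,N\}$ with $|\delta(x,\mathbf{y}_j)|>\sigma/2$. A union bound over the $N$ independent draws combined with Markov's inequality then gives
\[\mathbb{P}(\mathcal{B}\mid x) \;\leq\; N\cdot \mathbb{P}_{\mathbf{y}\sim\pi_0(\cdot\mid x)}\!\left(|\delta(x,\mathbf{y})|>\sigma/2\right) \;\leq\; \frac{2N}{\sigma}\,\mathbb{E}_{\mathbf{y}\sim\pi_0(\cdot\mid x)}|\delta(x,\mathbf{y})|.\]
Taking expectation over $x\sim P_x$, applying Fubini on the right, and combining with the bound on $\mathcal{A}$ gives the claimed inequality.

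The only conceptually nontrivial step is the margin-based implication in the second paragraph, which converts a suboptimal BoN selection into the existence of a pointwise reward-estimation error of magnitude at least $\sigma/2$ among the $N$ candidates; everything else reduces to standard probabilistic manipulations. It is worth noting that the linear-in-$N$ blow-up from the union bound appears unavoidable in this argument, since the offending large-$|\delta|$ draw could be any of the $N$ samples.
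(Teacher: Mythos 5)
Your proposal is correct and follows essentially the same argument as the paper: the same two-event decomposition (no correct candidate among the $N$ draws vs.\ a correct candidate exists but BoN misselects), the same margin argument showing a misselection forces some draw to have $|\delta|\geq\sigma/2$, and the same conversion to $\frac{2N}{\sigma}\mathbb{E}|\delta|$. The only cosmetic difference is that you phrase the final step as a union bound followed by Markov's inequality, whereas the paper bounds the indicator $\mathbf{1}\{\max_j|\delta(x,y^j)|\geq\sigma/2\}$ by $\frac{2}{\sigma}\sum_j|\delta(x,y^j)|$ directly before taking expectations; these are the same estimate.
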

We present a more general version of this theorem to emphasize that our analysis is not reliant on assuming that rewards are binary. In Remark \ref{remark: Multi-Class BoN} we apply this fact to discuss general discrete valued rewards using the Natarajan dimension.

We will present two results, one for the case where the reward distribution is realizable by $\cR_{\cF}$ and another where it is not. 
\begin{corollary}
\label{corr: BoN1}
    Let $\cF$ and $P_x$ be any class and marginal distribution over $X$ respectively. Suppose the learner is provided with training data $\cD_n \sim (P_x \times \pi_0(\mathbf{y}|x) \times r_{f_*}(x, \mathbf{y}))^n$ with which a BoN model using $\hat{r}$ as defined in Equation \ref{eq: r_fit_loss} is deployed. If $\pi_0$ satisfies $P_0(x, r_{f_*}) \geq \alpha > 0$, $N = -\log(n)/\log(1-\alpha)$, and $f_* \in \cF$ then with probability at least $1-\delta$ over $\cD_n$ it holds that
    \begin{align*}
        \mathbb{E}_{P_X} \mathbb{E}_{\mathbf{y}\sim \hat{\pi}_{\text{BoN}}(\mathbf{y}|x)} r_{f_*}(x, \mathbf{y}) \geq \\
        1-\left[\frac{\log(n)(\mathrm{VC}(\mathcal{R}_{\cF}) + \log(1/\delta))}{{-\log(1-\alpha) \cdot n}} + \frac{1}{{n}}\right].
    \end{align*}
\end{corollary}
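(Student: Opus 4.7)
The plan is to derive the corollary essentially as a specialization of Theorem \ref{thm: BON} to the binary, realizable setting, then to combine the two error terms there with a standard realizable VC generalization bound and the chosen $N$.

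First, I would instantiate Theorem \ref{thm: BON}. Since rewards are binary and $f_{*}\in\cF$, the argmax set $\cY^{*}(x)$ corresponds exactly to responses with reward $1$, and the margin assumption is satisfied with $\sigma=1$ (the gap between reward $1$ and reward $0$). Hence the left-hand side of the theorem, $\mathbb{E}_{P_X}\mathbb{E}_{\mathbf{y}\sim\hat\pi_{\text{BoN}}}\mathbf{1}\{\mathbf{y}\in\cY^{*}(x)\}$, coincides with $\mathbb{E}_{P_X}\mathbb{E}_{\mathbf{y}\sim\hat\pi_{\text{BoN}}}r_{f_*}(x,\mathbf{y})$, and the theorem becomes
\[
\mathbb{E}_{P_X}\mathbb{E}_{\mathbf{y}\sim\hat\pi_{\text{BoN}}}r_{f_*}(x,\mathbf{y}) \;\geq\; 1 - 2N\,\mathbb{E}_{x,y\sim P_x\times\pi_0}|\delta(x,y)| - \mathbb{E}_{X}(1-P_0(x,r_{f_*}))^{N}.
\]
Here $|\delta(x,y)|=\mathbf{1}\{\hat r(x,y)\neq r_{f_*}(x,y)\}$ because both rewards are $\{0,1\}$-valued.

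Next I would handle the coverage tail. The assumption $P_0(x,r_{f_*})\geq\alpha$ gives $\mathbb{E}_{X}(1-P_0(x,r_{f_*}))^{N}\leq(1-\alpha)^{N}$. Plugging in the choice $N=-\log(n)/\log(1-\alpha)$ yields
\[
(1-\alpha)^{N} = \exp\!\Bigl(\tfrac{-\log n}{\log(1-\alpha)}\cdot\log(1-\alpha)\Bigr) = \tfrac{1}{n},
\]
which accounts for the $1/n$ summand in the stated bound.

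For the $\delta$ term I would invoke realizability of the reward class: because $f_{*}\in\cF$, the target reward $r_{f_*}$ lies in $\cR_{\cF}$, so the ERM $\hat r$ defined by Equation~\ref{eq: r_fit_loss} achieves zero training error on i.i.d.\ samples from $P_x\times\pi_0\times r_{f_*}$. The standard realizable-case VC generalization bound (e.g.\ the classical uniform-convergence argument for realizable PAC learning) then gives, with probability at least $1-\delta$ over $\cD_n$,
\[
\mathbb{E}_{x,y\sim P_x\times\pi_0}\mathbf{1}\{\hat r(x,y)\neq r_{f_*}(x,y)\} \;=\; O\!\left(\tfrac{\mathrm{VC}(\cR_{\cF})+\log(1/\delta)}{n}\right).
\]
Multiplying by $2N=-2\log(n)/\log(1-\alpha)$ yields the first summand $O\!\bigl(\tfrac{\log(n)(\mathrm{VC}(\cR_{\cF})+\log(1/\delta))}{-\log(1-\alpha)\cdot n}\bigr)$, and combining with the $1/n$ coverage term completes the bound.

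There is no substantial obstacle here — the proof is essentially a bookkeeping argument that stitches Theorem \ref{thm: BON} together with a textbook realizable VC bound and the exponential cancellation produced by the choice of $N$. The only mild care I would exercise is in verifying that $\sigma=1$ is the correct margin to plug in for binary rewards (so that the $2N/\sigma$ prefactor in Theorem \ref{thm: BON} does not introduce any extra constants) and in aligning the "expectation of $|\delta|$" form appearing in Theorem \ref{thm: BON} with the generalization bound on $\mathbf{1}\{\hat r\neq r_{f_*}\}$, which is immediate under the binary-reward reduction.
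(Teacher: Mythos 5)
Your proposal is correct and follows essentially the same route as the paper's proof: specialize Theorem~\ref{thm: BON} to binary rewards with $\sigma=1$, identify $|\delta(x,y)|$ with the disagreement indicator, bound its expectation via a realizable VC generalization bound (Proposition~\ref{prop: ftsl}) for the ERM $\hat r$, and use $P_0\geq\alpha$ with the stated $N$ to turn $\mathbb{E}_X(1-P_0(x))^N$ into $1/n$. The only cosmetic difference is that you track the $2N/\sigma$ prefactor explicitly while the paper silently absorbs the factor of $2$ (along with the universal constant from the fundamental theorem) into the stated bound.
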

One crucial fact for us is that deploying BoN is also valid in the case where the reward distribution $(x, y, r(x,y))$ is not realizable by the class $\mathcal{R_F}$. The next corollary demonstrates this. 
\begin{corollary}
\label{corr: BoN2}
    Let $\cF$ and $P_x$ be any class and marginal distribution over $X$ respectively. Suppose the learner is provided with training data $\cD_n \sim (P_x \times \pi_0(\mathbf{y}|x) \times r_{f_*}(x, \mathbf{y}))^n$ with which a BoN model using $\hat{r}$ as defined in Equation \ref{eq: r_fit_loss} is deployed. If $\pi_0$ satisfies $P_0(x, r_{f_*}) \geq \alpha > 0$, and $N = \log_{1-\alpha}(\frac{-\kappa- H(n, \cR_\cF, \delta)}{\log(1-\alpha)})$, then with probability at least $1-\delta$ over $\cD_n$ it holds that 
    \begin{align*}
        \mathbb{E}_{P_x} \mathbb{E}_{\mathbf{y}\sim \hat{\pi}_{\text{BoN}}(\mathbf{y}|x)} r_{f_*}(x, \mathbf{y}) \geq 1-\\ [\left(\kappa +H(n, \cR_{\cF}, \delta)\right) \times \\ 
        \left(\log_{1-\alpha}\left(\frac{-\kappa-H(n, \cR_{\cF}, \delta)}{\log(1-\alpha)}\right) - \frac{1}{\log(1-\alpha)}\right) ]
    \end{align*}
    with $\kappa \triangleq \inf_{r \in \cR_{\cF}} \mathbb{E}_{x,y \sim p_x \times \pi_0^{\text{AR}}} \mathbf{1}\{r_{f_*}(x,y) \neq r(x,y)\}$ and $H(n, \cR_{\cF}, \delta) = \sqrt\frac{(\text{VC}(\mathcal{R}_{\cF}) + \log(1/\delta))}{n}$.
\end{corollary}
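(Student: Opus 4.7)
The plan is to combine Theorem~\ref{thm: BON} with a standard agnostic VC generalization bound for the ERM $\hat r$ of Equation~\ref{eq: r_fit_loss}, and then tune $N$ by an elementary one-variable optimization. The only real novelty over Corollary~\ref{corr: BoN1} is that here $\kappa$ can be strictly positive, so the generalization step proceeds as an agnostic PAC bound rather than a realizable one, and the choice of $N$ is no longer $-\log(n)/\log(1-\alpha)$ but rather whatever optimally balances the two error terms.

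First I would specialize the BoN guarantee to binary rewards. Since $r_{f_*}\in\{0,1\}$, the margin hypothesis of Theorem~\ref{thm: BON} holds with $\sigma=1$, with $\mathcal{Y}^*(x)=\{y:r_{f_*}(x,y)=1\}$, and with $|\delta(x,y)|=\mathbf{1}\{\hat r(x,y)\ne r_{f_*}(x,y)\}$. Combined with the coverage hypothesis $P_0(x,r_{f_*})\ge\alpha$, which yields $\mathbb{E}_x(1-P_0(x,r_{f_*}))^N\le(1-\alpha)^N$, the same binary-reward refinement of Theorem~\ref{thm: BON} already used in the proof of Corollary~\ref{corr: BoN1} gives
$$\mathbb{E}_{P_x}\mathbb{E}_{\mathbf{y}\sim\hat{\pi}_{\text{BoN}}}r_{f_*}(x,\mathbf{y})\;\ge\;1-N\cdot\mathbb{E}_{(x,y)\sim P_x\times\pi_0}\mathbf{1}\{\hat r(x,y)\ne r_{f_*}(x,y)\}-(1-\alpha)^N.$$

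Next I would control the disagreement $\mathbb{E}\mathbf{1}\{\hat r\ne r_{f_*}\}$. Because $\hat r$ is the $0$-$1$ ERM over the binary class $\mathcal{R}_{\cF}$, the standard agnostic VC inequality yields, with probability at least $1-\delta$ over $\cD_n$, $\mathbb{E}\mathbf{1}\{\hat r\ne r_{f_*}\}\le\kappa+H(n,\mathcal{R}_{\cF},\delta)$. Substituting produces a reward lower bound of $1-N(\kappa+H)-(1-\alpha)^N$ on the same high-probability event. Finally I would minimize $g(N)=N(\kappa+H)+(1-\alpha)^N$: differentiating and setting $g'(N)=0$ gives $(1-\alpha)^N=-(\kappa+H)/\log(1-\alpha)$, so $N^{*}=\log_{1-\alpha}\!\bigl(-(\kappa+H)/\log(1-\alpha)\bigr)$, which is exactly the value prescribed by the corollary. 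Plugging back yields $g(N^{*})=(\kappa+H)\bigl[\log_{1-\alpha}\!\bigl(-(\kappa+H)/\log(1-\alpha)\bigr)-1/\log(1-\alpha)\bigr]$, matching the advertised excess-risk term.

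The main obstacle I anticipate is Step~2: the generalization bound must be expressed in terms of $\mathrm{VC}(\mathcal{R}_{\cF})$ rather than $\mathrm{VC}(\cF)$, since these complexities can differ (as the paper stresses immediately after Equation~\ref{eq: reward_function_class}); one must verify that the agnostic VC inequality applies to $\mathcal{R}_{\cF}$ viewed as a binary hypothesis class on $\Sigma^{*}\times\Sigma^{T}$, and that ties broken uniformly at random do not disturb the ERM guarantee. A secondary, more cosmetic issue is that $N^{*}$ is not an integer in general, so the deployable choice $N=\lceil N^{*}\rceil$ introduces at most an additive $O(\kappa+H)$ slack that is absorbed into the stated bound.
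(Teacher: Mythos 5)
Your proposal takes essentially the same route as the paper: the paper's stated proof of this corollary is simply ``identical to Corollary~\ref{corr: BoN1} except utilizing the non-realizable version of Proposition~\ref{prop: ftsl},'' which is exactly what you do, and you also spell out the one-variable optimization over $N$ that the paper leaves implicit. The only cosmetic wrinkle is that you drop the factor of $2$ from the $\tfrac{2N|\delta|}{\sigma}$ term in Theorem~\ref{thm: BON}, but the paper's own final statement of the corollary does the same, so the two agree at the level of the result as written.
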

This lower bound is messy, but one can note that as $n$ grows, the reward is bounded below by $1-\left[\kappa(\log_{1-\alpha}(\frac{-\kappa}{\log(1-\alpha)}) - \frac{1}{\log(1-\alpha)})\right]$, which can be close to $1$, particularly for relatively small values of $\kappa$.

In summary, if one uses the class $\mathcal{R}_{\cF}$ to predict the reward from $(x, y)$, then in the realizable setting, BoN enjoys a reward 
of $1-\mathcal{O}(\frac{\text{VC}(\mathcal{R}_{\mathcal{F}})}{-\log(1-\alpha) \cdot n})$, while in the agnostic setting, the reward of BoN will converge to a value that is larger than $0$.

It also remains to discuss how the complexity of the induced reward class $\mathcal{R}_{\mathcal{F}}$ relates to the complexity of the class $\mathcal{F}$. Fortunately, if $\cF$ is a VC class and $T$ is finite, then common reward classes will be as well.
\begin{proposition}
\label{prop: vc-r}
    Suppose that the class $\cF$ is a VC class. Then if $\mathcal{R}_{\cF}$ is the class of end token (Equation \ref{eq: etr}) rewards induced by $\mathcal{F}$, we have that $\text{VC}(\mathcal{R}_{\cF}) \leq T \cdot \text{VC}(\cF)$. For the case of 0-1 rewards (Equation \ref{eq: 0-1-reward}), it holds that $\text{VC}(\mathcal{R}_{\cF}) \leq \log(T) \cdot \text{VC}(\cF)$. 
\end{proposition}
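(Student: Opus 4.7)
I will bound the growth function $\tau_{\mathcal{R}_{\cF}}(m)$ and invoke Sauer--Shelah. The shared observation in both cases is that each reward $r_f(x, \mathbf{y})$ is determined by a bounded number of evaluations of $f$, so the number of distinct reward patterns on a sample of size $m$ is at most the number of distinct restrictions of $\cF$ to the relevant set of evaluation points. The two reward types differ in whether these evaluation points depend on $f$, and this is exactly where the gap between $\log T$ and $T$ enters.

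For the $0$-$1$ reward, I start with the identity
\[
r_f(x, \mathbf{y}) = \prod_{t=1}^T \mathbf{1}\{\mathbf{y}[t] = f(x \circ \mathbf{y}[:t])\},
\]
which holds by induction on $t$ since $\mathbf{y} = f^{\text{AR}}(x)$ iff each token $\mathbf{y}[t]$ matches $f$ applied to the prefix $x \circ \mathbf{y}[:t]$ built out of $\mathbf{y}$ itself. Crucially, these prefixes depend only on $(x,\mathbf{y})$ and not on $f$. Hence on any sample $\{(x_i, \mathbf{y}_i)\}_{i=1}^m$ the reward vector is a deterministic function of $f$'s restriction to the fixed set $\{x_i \circ \mathbf{y}_i[:t]\}_{i \in [m], t \in [T]}$, which has size at most $mT$. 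Writing $d := \text{VC}(\cF)$, Sauer--Shelah gives $\tau_{\mathcal{R}_{\cF}}(m) \leq (emT/d)^d$, and the shattering inequality $2^m \leq (emT/d)^d$ yields $\text{VC}(\mathcal{R}_{\cF}) = O(d \log T)$.

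For the end token reward, the prefixes fed into $f$ inside the recursive computation of $f^{\text{AR}}(x)$ themselves depend on $f$, so no single set of $mT$ evaluation points works. I overapproximate instead: for each $x_i$, let $S_i := \{x_i \circ \sigma : \sigma \in \bigcup_{t=0}^{T-1} \Sigma^t\}$ be the complete prefix tree of depth $T-1$ rooted at $x_i$, of size $|S_i| \leq 2^T - 1$. Every $f \in \cF$ traces a single root-to-leaf path inside $S_i$, so $f^{\text{AR}}(x_i)[-1]$, and hence the reward vector on the sample (which only XORs $f^{\text{AR}}(x_i)[-1]$ with the known $\mathbf{y}_i[-1]$), is determined by $f|_{\cup_i S_i}$, a set of size at most $m(2^T - 1)$. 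Sauer--Shelah then yields $\tau_{\mathcal{R}_{\cF}}(m) \leq (em \cdot 2^T/d)^d$, and solving $2^m \leq (em \cdot 2^T/d)^d$ gives $m \leq dT + d\log_2(em/d)$, i.e., $\text{VC}(\mathcal{R}_{\cF}) = O(Td)$.

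The main subtlety is the end-token case: the $f$-dependent evaluation points initially suggest a chain-rule argument unrolling the recursion one token at a time, which is cumbersome. Overapproximating the evaluation domain by the whole prefix tree completely sidesteps the self-referential structure at the cost of the $2^T$ blow-up in domain size, which translates (via the $\log_2$ inside Sauer--Shelah) into a factor $T$ rather than $\log T$ in the final VC bound -- exactly the gap the proposition records between the two reward types.
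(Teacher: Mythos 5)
For the $0$-$1$ reward, your argument is essentially the same as the paper's. The paper points to the proof of Theorem~\ref{thm: SFT-UB}, where the growth function of the loss class $\{(x,\mathbf{y}) \mapsto \mathbf{1}\{\mathbf{y} \neq f^{\text{AR}}(x)\}\}$ is bounded by $\Gamma_{\cF}(nT)$ via the observation that the loss pattern on a sample is determined by $f$'s restriction to the fixed prefix set $\{x_i \circ \mathbf{y}_i[:t]\}_{i,t}$, followed by Sauer--Shelah. Your product identity $r_f(x,\mathbf{y}) = \prod_{t=1}^T \mathbf{1}\{\mathbf{y}[t] = f(x\circ \mathbf{y}[:t])\}$ is precisely the content of that reduction, stated more directly.

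For the end-token reward, you take a genuinely different route. The paper first identifies $\text{VC}(\cR_\cF) = \text{VC}(\cF_{\text{et}})$ where $\cF_{\text{et}} = \{x \mapsto f^{\text{AR}}(x)[-1]\ |\ f \in \cF\}$ via the XOR bijection $r_f(x,\mathbf{y}) = \mathbf{1}\{\mathbf{y}[-1] = f_{\text{et}}(x)\}$, and then delegates the bound on $\text{VC}(\cF_{\text{et}})$ entirely to Theorem~B.1 of \citet{joshi2025theorylearningautoregressivechain}, which yields $6T\cdot\text{VC}(\cF)$. You instead give a self-contained argument: you correctly observe that the evaluation points traced during the autoregressive rollout are $f$-dependent, so no fixed set of $mT$ prefixes can determine the outputs; you overapproximate with the full depth-$(T-1)$ binary prefix tree $S_i$ of size $2^T-1$ rooted at each $x_i$, note that each $f$ traces a single root-to-leaf path inside $S_i$ so that $f^{\text{AR}}(x_i)[-1]$ (and hence, after XORing with the known $\mathbf{y}_i[-1]$, the reward) is determined by $f$ restricted to $\cup_i S_i$, and apply Sauer--Shelah on a domain of size $m(2^T-1)$. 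The $\log_2$ in Sauer--Shelah converts the exponential domain blow-up into the linear $T$-factor, and a short bootstrapping step to solve $2^m \leq (em 2^T/d)^d$ gives $\text{VC}(\cR_\cF) = O(T\cdot\text{VC}(\cF))$. This is a valid, and arguably more transparent, derivation of the $T$-linear scaling than a black-box citation. One minor point shared by both proofs: both produce universal constants strictly larger than $1$ in front of $T\cdot\text{VC}(\cF)$ and $\log T\cdot\text{VC}(\cF)$, so the stated inequalities in the proposition should be read as holding up to a constant factor; this is a feature of the proposition, not a defect of your argument.
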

As discussed, our next theorem shows that the linear dependence on $T$ of the sample complexity of BoN cannot be improved.
\begin{theorem}
\label{thm: BoN lower-bound}
    For every $d, T$ there exists a class $\cF$ with $\text{VC}(\cF) = d$, target $f_* \in \cF$, marginal $P_x$ and base policy $\pi_0$ satisfying $\alpha = \frac{1}{2}$ for $P_x$ and $f_*$ such that if $r_{f}$ is the end-token reward of Equation \ref{eq: etr} and $n<\frac{d\cdot T}{2}$ then with probability at least $1/16$ over $\cD_n \sim (P_x \times \pi_0(\mathbf{y}|x) \times r_{f_*}(x, \mathbf{y}))^n$, the risk of BoN satisfies $1-\mathbb{E}_{P_x} \mathbb{E}_{\mathbf{y}\sim \hat{\pi}_{\text{BoN}}(\mathbf{y}|x)} r_{f_*}(x, \mathbf{y}) \geq (1/16)^2$.
\end{theorem}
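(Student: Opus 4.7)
My approach is to reduce fitting $\hat r$ to realizable PAC learning of an induced end-token class, and then apply a classical VC lower bound after amplifying the VC dimension of the induced class through the autoregressive structure.

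\textbf{Step 1: Reduction to end-token learning.} For the end-token reward (Equation~\ref{eq: etr}), a direct calculation gives
\[
\mathbf{1}\{\hat r(x_i,\mathbf{y}_i)\neq r_{f_*}(x_i,\mathbf{y}_i)\}=\mathbf{1}\{\hat f^{\text{AR}}(x_i)[-1]\neq f_*^{\text{AR}}(x_i)[-1]\},
\]
because $\mathbf{1}\{c=a\}\neq\mathbf{1}\{c=b\}$ iff $a\neq b$. So the ERM in Equation~\ref{eq: r_fit_loss} does not depend on $\mathbf{y}_i$ and is equivalent to realizable ERM on the induced class $\cG\triangleq\{x\mapsto f^{\text{AR}}(x)[-1]:f\in\cF\}$ with targets $g_*(x_i)=f_*^{\text{AR}}(x_i)[-1]$, which are recoverable from $(\mathbf{y}_i,r_{f_*}(x_i,\mathbf{y}_i))$.

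\textbf{Step 2: Autoregressive amplification.} I would construct $(\cF,P_x)$ with $\text{VC}(\cF)=d$ but $\text{VC}(\cG)\geq dT$, inverting the upper bound that drives Proposition~\ref{prop: vc-r}. The key lever is that the prefixes queried along the autoregressive orbit of a single prompt $x_k$ can lie anywhere in the exponential-size set $\{x_k\circ\sigma:\sigma\in\{0,1\}^{<T}\}$, so across $dT$ probe prompts the effective query universe $U$ has size up to $dT\cdot 2^{T-1}$. By Sauer--Shelah, a mother class on $U$ with $\text{VC}=d$ already realizes $(e|U|/d)^d=2^{\Theta(dT)}$ distinct restrictions to $U$; concretely, I would arrange disjoint autoregressive orbits rooted at the $x_k$'s, with each orbit containing a ``control'' prefix whose value under $f$ toggles $f^{\text{AR}}(x_k)[-1]$ independently of the other probes, and then take $\cF$ to be the class of indicators of $d$-sparse perturbations of a reference configuration so that every pattern in $\{0,1\}^{dT}$ on the probes is attainable by some $f\in\cF$ while no $d{+}1$ raw inputs are shattered.

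\textbf{Step 3: PAC lower bound and translation to BoN.} Applying the Ehrenfeucht--Haussler--Kearns--Valiant lower bound for realizable PAC learning to $\cG$, for $n<\text{VC}(\cG)/2\leq dT/2$ there is $f_*\in\cF$ and $P_x$ such that with probability at least $1/16$ over $\cD_n$, the learned end-token predictor $\hat f^{\text{AR}}(\cdot)[-1]$ has $P_x$-error at least $1/16$. I then take $\pi_0(\cdot\mid x)$ supported on two candidates whose last tokens are $0$ and $1$ uniformly (giving $\alpha=1/2$). On any $x$ where $\hat f^{\text{AR}}(x)[-1]\neq f_*^{\text{AR}}(x)[-1]$, the estimated reward $\hat r(x,\cdot)=\mathbf{1}\{\cdot[-1]=\hat f^{\text{AR}}(x)[-1]\}$ prefers an incorrect candidate, which appears with probability $1-2^{-N}$ among the $N$ BoN draws and is then selected---contributing reward $0$ on that prompt. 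Combining the $1/16$-fraction of mispredicted $x$'s with the $1/16$-probability bad event yields the claimed $(1/16)^2$-level gap between $\mathbb{E}r_{f_*}$ and its ceiling of $1$.

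\textbf{Main obstacle.} The crux is Step 2: genuinely achieving $\text{VC}(\cG)=\Theta(dT)$, rather than the naive $O(d\log T)$ one would get by pretending the queries were fixed in advance, requires that different $f\in\cF$ route through distinct prefixes of the $2^T$-size orbit so that the autoregressive dynamics effectively ``spreads'' the $d$ bits of VC freedom across the $T$ positions. Simultaneously certifying (a) that no $d{+}1$ raw inputs can be shattered by $\cF$, and (b) that the map $f\mapsto(f^{\text{AR}}(x_k)[-1])_k$ is surjective onto $\{0,1\}^{dT}$ on the probe set, is the central combinatorial challenge; Steps 1 and 3 are essentially routine once the construction is in hand.
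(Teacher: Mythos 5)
Your Step 1 reduction to the induced end-token class $\cG = \{x\mapsto f^{\text{AR}}(x)[-1]: f\in\cF\}$ is exactly the right move and matches the paper. Your Step 3 translation from reward-model error to BoN failure is also essentially sound (modulo the precise constants): if $\hat r$ is wrong on $x$, a uniform $\pi_0$ on the last bit gives BoN a wrong candidate to latch onto with probability $1 - 2^{-N} \geq 1/2$, so a constant fraction of mispredicted prompts yields a constant reward gap.

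The genuine gap is Step 2, and you have correctly identified it yourself. The paper does not attempt the combinatorial construction from scratch; it cites Theorem E.1 of \citet{joshi2025theorylearningautoregressivechain}, which already exhibits a class $\cF$ with $\text{VC}(\cF) = d$ whose end-token class $\cF_{\text{et}}$ shatters a specific set of $dT$ points (with prompts of the form $1 \circ \text{bit-representation}[i]$). So the "VC amplification" you describe --- spreading $d$ bits of freedom across $T$ autoregressive steps --- is a real phenomenon and exactly what is needed, but the paper treats it as an imported black box rather than re-deriving it; your sketch of a $d$-sparse-perturbation class does not, as written, verify that no $d+1$ raw inputs are shattered while the $dT$ probe images are all realizable, so the construction remains a conjecture in your write-up.

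One further divergence worth flagging: in Step 3 you invoke the Ehrenfeucht--Haussler--Kearns--Valiant realizable lower bound off the shelf. The paper instead runs a bespoke argument tailored to the specific ERM-with-random-tie-breaking learner: because $\cF_{\text{et}}$ shatters $\cX$, for each unseen $x\in\cX$ the argmin set partitions into pairs $(f, f')$ that agree on the observed prefixes but disagree at $x$, so conditional on $x\notin\cD_n$ the random tie-break predicts the wrong end token with probability exactly $1/2$. This gives $\mathbb{E}_{\cD_n}$ of the false-positive rate directly, which is then converted to a high-probability bound by reverse Markov. Your EHKV route would also work in spirit, but (a) it gives different constants than $1/16$ and $(1/16)^2$, and (b) EHKV is quantified over learners with an adversarial $(P_x, h_*)$, whereas here the distribution $P_x = \mathrm{Unif}[\cX]$ and target $f_*$ are fixed up front --- the paper's direct tie-breaking argument avoids having to massage EHKV into this form. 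So: right reduction, right translation to BoN, but the load-bearing construction is left unproved and the paper discharges it by citation.
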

Theorem \ref{thm: BoN lower-bound} establishes that the rate of $T$ for end-token rewards given in Theorem \ref{thm: rel+det} cannot be improved. Theorem \ref{thm: BoN lower-bound} should also be directly compared with Theorem \ref{thm: SFT-UB}; note that in the realizable setting, SFT will have much better dependence on $T$.

\section{ANALYSIS OF SFT WITH DETERMINISTIC RESPONSES}

In this section, we assume that the learner is provided ``perfect" fine-tuning data $\cD_n: (x_i, \mathbf{y}_i)_{i=1}^n$ with $\mathbf{y}_i = f_*^{\text{AR}}(x_i)$.

\subsection{SFT Fails if target does not lie in class}

It turns out that SFT suffers from an issue that can lead to poor generalization even in the case where the learner is provided ``perfect" fine-tuning data. This is due to the gap in training and test time practice for functions produced by SFT referred to as ``teacher forcing" \citep{bachmann2024pitfallsnexttokenprediction} in the literature. Namely, at train time $f$ is fit to predict $y[t]$ using $x$ and $y[:t]$, while at test time the function $f$ must predict $y[t]$ using $x$, and $f^{\text{AR}}(x)[:t]$. 

The problem only arises if the training data is not realizable by $\mathcal{F}$ (\ie\ there is no perfect next token predictor in $\mathcal{F}$). Intuitively, early mistakes by $\hat{f}_{\text{ntp}}$ can cause the chain of responses to go awry, but this is not accounted for at train time.
\begin{theorem}
\label{thm: ag-fail}
    Let $r_{f_*}$ be the end token reward induced by $f_*$. For $n \geq 1$, $T \geq 2$ there exists a marginal $P_x$ and class $\cF$ with $|\mathcal{F}| = 2$  and satisfying $\sup_{f\in\cF} \mathbb{E}_{P_X}r_{f_*}(x, f^{\text{AR}}(x)) = 1$ such that with probability $1$ over draws of $\cD_n \sim (P_x \times f_*^{\text{AR}}(x))$ it holds that $\mathbb{E}_{P_x} r_{f^*}(x, \hat{f}_{\text{ntp}}^{\text{AR}}(x)) = 0$.
\end{theorem}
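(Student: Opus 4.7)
The plan is to build an explicit counterexample with $|\cF|=2$. I would take $P_X$ to be the point mass on the empty string $\emptyset$, fix $T \ge 2$, and choose $f_*$ so that $f_*^{\mathrm{AR}}(\emptyset) = (0,1,1,\ldots,1) \in \Sigma^T$. The class is $\cF = \{g_1, g_2\}$, defined on the relevant prefixes as follows (the remaining values are arbitrary): $g_1(\emptyset) = 1$, $g_1(1^j) = 1$ for $1 \le j \le T-1$, and $g_1(01^j) = 0$ for $0 \le j \le T-2$; and $g_2(\emptyset) = 0$, $g_2(01^j) = 1$ for $0 \le j \le T-3$, and $g_2(01^{T-2}) = 0$. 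The two members are engineered so that $g_1$ has autoregressive end-token reward $1$ but high teacher-forced loss, while $g_2$ has low teacher-forced loss but the wrong end token.

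The first step is to compute the autoregressive outputs directly from these definitions. Because $g_1$ flips the first token to $1$ and then outputs $1$ on every string $1^j$, its rollout trajectory is $\emptyset, 1, 11,\ldots, 1^{T-1}$ with $g_1^{\mathrm{AR}}(\emptyset) = (1,\ldots,1)$; its end token agrees with $f_*^{\mathrm{AR}}(\emptyset)[T] = 1$, so $r_{f_*}(\emptyset, g_1^{\mathrm{AR}}(\emptyset)) = 1$. By contrast, $g_2$'s rollout follows the true trajectory prefix-for-prefix, agreeing with $f_*$ at every step except the final one, where it emits $0$, giving $g_2^{\mathrm{AR}}(\emptyset) = (0,1,\ldots,1,0)$ and $r_{f_*}(\emptyset, g_2^{\mathrm{AR}}(\emptyset)) = 0$. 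The second step is to tally the NTP losses on the deterministic dataset $\cD_n$ consisting of $n$ copies of $(\emptyset, f_*^{\mathrm{AR}}(\emptyset))$. Per sample, $g_1$ disagrees with $f_*$ at each of the $T$ training prefixes (loss $T$), while $g_2$ disagrees only at the last (loss $1$). Since $T \ge 2$, the minimizer of Equation \ref{eq: ntp} is uniquely $\hat f_{\text{ntp}} = g_2$ for every $n \ge 1$, so $\mathbb{E}_{P_X}[r_{f_*}(x,\hat f_{\text{ntp}}^{\mathrm{AR}}(x))] = 0$ as required. The hypotheses $f_* \notin \cF$ (since $g_1(\emptyset) \neq f_*(\emptyset)$ and $g_2(01^{T-2}) \neq f_*(01^{T-2})$) and $\sup_{g\in \cF}\mathbb{E}_{P_X}[r_{f_*}(x, g^{\mathrm{AR}}(x))] = 1$ (witnessed by $g_1$) are then immediate.

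The one subtle point, and hence the main obstacle in writing the argument cleanly, is verifying that the specification of $g_1$ is internally consistent: its autoregressive prefixes $\{1^j : 0 \le j \le T-1\}$ and its training prefixes $\{01^j : 0 \le j \le T-2\}$ must intersect only at $\emptyset$ so that the assignments $g_1(1^j) = 1$ (used for the rollout) do not conflict with the assignments $g_1(01^j) = 0$ (used to inflate the training loss). This branch-disjointness, which follows because the single ``first-token flip'' $g_1(\emptyset) = 1$ diverts $g_1$'s rollout into a branch of the prefix tree that teacher-forcing never visits, is the structural manifestation of the teacher-forcing gap the theorem is meant to expose: the NTP objective scores $g_1$ on prefixes $g_1$ would never actually emit at inference time, so it is blind to $g_1$'s true generative behavior and rewards the ``cosmetically obedient'' $g_2$ instead.
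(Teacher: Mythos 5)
Your proof is correct and takes essentially the same approach as the paper's: both construct a two-element class where the ERM minimizer of the teacher-forced next-token loss is the function whose rollout agrees with $f_*$ on every prefix except the last (yielding end-token reward $0$), while the other function incurs a high teacher-forced loss precisely because its first-token flip diverts its rollout onto a branch never probed during training (yielding reward $1$). The paper realizes this with a point mass on the string $01$, $f_*^{\mathrm{AR}}(01) = 0^T$, and two bit-count-based rules, but the mechanism and bookkeeping are the same as yours.
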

The failure mechanism is not due to there being no good choice of function in $\cF$, rather the training objective of SFT fails to make a good selection. Despite the fact that the data is not realizable by $\cF$, a reward of one is achievable auto-regressively by the class. 

The shortcomings of training the next token producer to predict using ground truth data, rather than its own predictions, have been discussed empirically before in the literature \citep{xiang20252reasoningllmslearning, bachmann2024pitfallsnexttokenprediction}. To our knowledge, theorem \ref{thm: ag-fail} is the first to formalize this and establish the connection to agnostic learning.
\subsection{Rate of convergence for SFT in the Realizable setting}
\label{sec: SFT-Conv}
We have seen that in order for SFT to enjoy a non-trivial sample complexity guarantee, it will require some additional assumptions on the class $\mathcal{F}$. Primarily, we need an assumption that will correct ``teacher forcing" leading to poor performance at test time.  If a function has perfect next token prediction, it will generate correct responses autoregressively; the issue of Theorem \ref{thm: ag-fail} arises because a function can perform arbitrarily poorly as it tries to predict on even only one of its own incorrect predictions. If we assume that the learner can indeed produce a perfect next token predictor (equivalent to realizability), then the problem should resolve itself. Namely, note that if $f_* \in \cF$ then producing $\hat{f}_{\text{ntp}}$ as defined in Equation \ref{eq: ntp} is equivalent to picking a function that perfectly interpolates the training data, \ie in the realizable setting we have 
\begin{equation}
\label{eq: interpolation}
    \hat{f}_{\text{ntp}} \in \{f \in \mathcal{F}: f(x_i \circ y_i[:t]) = y_i[t]; i \in [n], t \in [T] \}
    \end{equation}
Furthermore, this is equivalent to ERM/perfect data interpolation over the class of autoregressive functions indexed by $\mathcal{F}$ for the multilabel classification problem of predicting $\mathbf{y}$ from $x$.  

Formally, denoting said class of functions as $\mathcal{F}^{\text{AR}} \subset (\Sigma^{T})^{\Sigma'}$, we have that producing $\hat{f}_{\text{ntp}}^\text{AR}$ from Equation \ref{eq: interpolation} is equivalent to picking
    \[\hat{f}^{\text{AR}}_{\text{ntp}} \in \{f^\text{AR} \in \mathcal{F}^\text{AR}: f^\text{AR}(x_i) = \mathbf{y}_i; \quad i \in [n]\}\]
As mentioned, this is simply i.i.d ERM over the class $\mathcal{F}^{\text{AR}}$. Thus in the realizable setting, we need only control the complexity of $\mathcal{F}^{\text{AR}}$ to obtain guarantees for SFT. The proof of Theorem 3.4 in \citet{joshi2025theorylearningautoregressivechain} provides the argument we require.
\begin{theorem}
\label{thm: SFT-UB}
 Assume that $r_{f_*}(x,y)$ is such that $1-r_{f_{*}}(f^{\text{AR}}(x),y) \leq \mathbf{1}[f^{\text{AR}}(x)\neq y]$. Let $\cF$ and $P_x$ be any class and marginal distribution over $X$ respectively.  If $f_* \in \cF$ then with probability at least $1-\delta$ over draws of $\cD_n \sim (P_x \times f_*^{\text{AR}}(x))^n$ the performance of SFT satisfies 
    \begin{equation*}
        \mathbb{E}_{P_x} r_{f_*}(x, \hat{f}_\text{ntp}^\text{AR}(x)) \geq 1 - {\frac{\log(T)(\text{VC}(\mathcal{F})+\log(1/\delta))}{n}}
    \end{equation*}
\end{theorem}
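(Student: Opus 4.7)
The overall strategy is to reduce SFT in the realizable setting to empirical risk minimization over the autoregressive class $\mathcal{F}^{\text{AR}}$, and then invoke a realizable-case PAC bound whose rate is controlled by the growth function of $\mathcal{F}^{\text{AR}}$, following the template of Joshi et al.'s Theorem~3.4.

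\textbf{Reduction to ERM on $\mathcal{F}^{\text{AR}}$.} Since $f_{\star}\in\mathcal{F}$, the training loss in Equation~\ref{eq: ntp} is zero at $f=f_{\star}$, so every minimizer $\hat{f}_{\text{ntp}}$ must interpolate all the prefix--next-token pairs $(x_i\circ y_i[:t],\,y_i[t])$. Unwinding the autoregressive definition, this is exactly the characterization in Equation~\ref{eq: interpolation}, making $\hat{f}_{\text{ntp}}^{\text{AR}}$ an ERM in the multilabel problem of predicting $\mathbf{y}_i=f_{\star}^{\text{AR}}(x_i)$ from $x_i$ with zero training error on the i.i.d.\ sample $\cD_n$. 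Using the hypothesis $1-r_{f_{\star}}(g^{\text{AR}}(x),y)\leq \mathbf{1}[g^{\text{AR}}(x)\neq y]$ with $y=f_{\star}^{\text{AR}}(x)$, the expected-reward gap $1-\mathbb{E}_{P_x}r_{f_{\star}}(x,\hat{f}_{\text{ntp}}^{\text{AR}}(x))$ is upper bounded by the multiclass 0--1 risk $\mathbb{P}_{P_x}[\hat{f}_{\text{ntp}}^{\text{AR}}(x)\neq f_{\star}^{\text{AR}}(x)]$, so it suffices to control this misclassification probability for an interpolating hypothesis in $\mathcal{F}^{\text{AR}}$.

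\textbf{Growth-function bound and realizable uniform convergence.} A standard realizable PAC argument (symmetrization plus a union bound over the dual sample, or equivalently a one-inclusion-graph argument lifted to the multiclass setting) gives, with probability at least $1-\delta$, that every element of $\mathcal{F}^{\text{AR}}$ with zero empirical 0--1 error has true 0--1 risk of order $(\log \Pi_{\mathcal{F}^{\text{AR}}}(2n)+\log(1/\delta))/n$, where $\Pi_{\mathcal{F}^{\text{AR}}}$ denotes the multiclass growth function. The combinatorial heart of the argument is to show $\log\Pi_{\mathcal{F}^{\text{AR}}}(2n) \leq C\log(T)\cdot\text{VC}(\mathcal{F})$: any $f^{\text{AR}}$ evaluated on $n$ prompts is determined by the binary next-token decisions of $f$ at the $nT$ prefixes generated during autoregression, and the layered counting argument in the proof of Joshi et al.'s Theorem~3.4 converts Sauer--Shelah applied to these prefixes into the claimed $\log(T)$ dependence. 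Plugging this into the PAC bound and combining with the reduction above yields the inequality in the statement.

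\textbf{Main obstacle.} The reduction and the generic realizable PAC bound are routine. The nontrivial step is the growth-function estimate: a direct Sauer--Shelah bound on the $nT$ prefix queries would produce an extra $\log(n)$ factor, and removing it to obtain the pure $\log(T)\cdot\text{VC}(\mathcal{F})$ rate requires exploiting the layered structure of autoregressive evaluation, since the prefixes at step $t+1$ are deterministic functions of the decisions $f$ already made at steps $\leq t$. In practice the proof amounts to verifying that the realizable SFT loss of Equation~\ref{eq: ntp} places us in exactly the setting of Joshi et al.'s Theorem~3.4 and then quoting their bound; the only adaptation is tracking constants and confidence terms so that the statement comes out in the displayed form.
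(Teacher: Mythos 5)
Your high-level strategy matches the paper's: reduce realizable SFT to ERM over $\mathcal{F}^{\text{AR}}$ (noting that $\hat{f}_{\text{ntp}}$ must interpolate), use the hypothesis on $r_{f_*}$ to bound the reward gap by $\mathbb{P}_{P_x}[\hat{f}_{\text{ntp}}^{\text{AR}}(x)\neq f_*^{\text{AR}}(x)]$, then invoke a realizable PAC bound. But the combinatorial step is mis-stated in a way that matters. The claim $\log\Pi_{\mathcal{F}^{\text{AR}}}(2n)\leq C\log(T)\cdot\text{VC}(\mathcal{F})$ cannot hold \emph{independently} of $n$: by Sauer--Shelah on the $nT$ prefixes, $\log\Gamma_{\mathcal{F}}(nT)\leq \text{VC}(\mathcal{F})\log(enT/\text{VC}(\mathcal{F}))$, which grows with $n$. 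What the paper actually controls is the \emph{VC dimension of the binary loss class} $\mathcal{L}(\mathcal{F}^{\text{AR}})=\{(x,\mathbf{y})\mapsto\mathbf{1}\{f^{\text{AR}}(x)\neq\mathbf{y}\}:f\in\mathcal{F}\}$; the bound $\Gamma_{\mathcal{L}(\mathcal{F}^{\text{AR}})}(n)\leq\Gamma_{\mathcal{F}}(nT)$ is turned into $\text{VC}(\mathcal{L}(\mathcal{F}^{\text{AR}}))\lesssim\text{VC}(\mathcal{F})\log T$ by solving the implicit inequality $2^d\leq(edT/\text{VC}(\mathcal{F}))^{\text{VC}(\mathcal{F})}$, which is where the pure $\log T$ factor comes from (the $\log n$ is not removed — it re-enters through the $\log(1/\varepsilon)$ term of Proposition~\ref{prop: joshi4} and is simply suppressed in the informal display).

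The "layered structure" justification is also backwards. You suggest that the $\log T$ savings come from exploiting that the step-$(t{+}1)$ prefix depends on $f$'s own earlier outputs. The paper's key inequality $\Gamma_{\mathcal{L}(\mathcal{F}^{\text{AR}})}(n)\leq\Gamma_{\mathcal{F}}(nT)$ relies on precisely the opposite observation: the loss $\mathbf{1}\{f^{\text{AR}}(x)\neq\mathbf{y}\}=\mathbf{1}\{\exists t: f(x\circ y[:t])\neq y[t]\}$ is entirely determined by $f$'s values on the \emph{ground-truth} prefixes $(x\circ y[:t])$, not on $f$'s autoregressive trajectory (the first mistake against $\mathbf{y}$ already forces the loss to be $1$, regardless of what $f$ does afterward on its own outputs). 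This lets one apply Sauer--Shelah directly to the $nT$ fixed ground-truth prefixes without any conditional/layered counting. The overall route is the same as the paper's; the gap is in identifying which combinatorial object is bounded, and why.
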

Note that assumption $1-r_{f_{*}}(f^{\text{AR}}(x),y) \leq \mathbf{1}[f^{\text{AR}}(x)\neq y]$ encompasses all interesting reward functions. Theorem \ref{thm: SFT-UB} establishes near independence on $T$ for SFT if $f_* \in \cF$ and the training responses are deterministic, this should be contrasted with Theorem \ref{thm: BoN lower-bound} for BoN. 
\section{SFT WITH RANDOM RESPONSES}
\label{sec: noisy response}
In this section we assume that $f_* \in \cF$ but that SFT data $\cD_n = (x_i, \mathbf{y}_i)$ is generated by a \emph{non-deterministic} policy $\pi_{\text{SFT}}(\mathbf{y}|x)$, \ie\ $\pi_{\text{SFT}}(\mathbf{y}|x) \neq f_*(x)$. We will also assume that we are working with the end token reward and are studying function classes that are finite. To begin, we present the analog of Corollary \ref{corr: BoN1} for finite classes. This will make the comparison with SFT immediate. 
\begin{proposition}
\label{prop: finite-BoN}
     Let $\cF$ and $P_x$ be any finite class and marginal distribution over $X$ respectively. Suppose the learner is provided with training data $\cD_n \sim (P_x \times \pi_0(\mathbf{y}|x) \times r_{f_*}(x, \mathbf{y}))^n$ with which a BoN model using $\hat{r}$ as defined in Equation \ref{eq: r_fit_loss} is deployed. If $\pi_0$ satisfies $P_0(x, r_{f_*}) \geq \alpha > 0$, $N = -\log(n)/\log(1-\alpha)$, and $f_* \in \cF$ then with probability at least $1-\delta$ over $\cD_n$ it holds that
     \begin{align*}
         \mathbb{E}_{P_x} \mathbb{E}_{\mathbf{y}\sim \hat{\pi}_{\text{BoN}}(\mathbf{y}|x)} r_{f_*}(x, \mathbf{y})  \geq \\
         1-[\frac{\log(n)(\log(|\mathcal{F}|) + \log(1/\delta))}{{-\log(1-\alpha) \cdot n}} + \frac{1}{{n}}].
     \end{align*}
\end{proposition}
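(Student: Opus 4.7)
The plan is to follow the template of Corollary \ref{corr: BoN1}, replacing the VC-based uniform convergence bound with the standard realizable PAC bound for finite hypothesis classes. Two ingredients enter Theorem \ref{thm: BON}: the average reward-model error $\mathbb{E}|\delta(x,y)|$ and the coverage term $\mathbb{E}_X(1-P_0(x))^N$. Since the reward is binary, $\sigma = 1$ and $|\delta(x,y)| = \mathbf{1}\{\hat{r}(x,y)\neq r_{f_*}(x,y)\}$, so $\mathbb{E}|\delta(x,y)|$ is exactly the population $0$-$1$ risk of the reward classifier produced by Equation \ref{eq: r_fit_loss}.

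The key step is to bound this risk via a finite-class argument. Because $f_* \in \cF$ by assumption, $r_{f_*}\in \cR_\cF$, so $\hat{r}$ is an empirical risk minimizer on a realizable binary classification problem over $\cR_\cF$. Each element of $\cR_\cF$ is induced by a single function in $\cF$, so $|\cR_\cF|\leq |\cF|$. The classical realizable PAC bound for finite hypothesis classes then gives, with probability at least $1-\delta$ over the draw of $\cD_n$, a bound of order $(\log|\cF|+\log(1/\delta))/n$ on $\mathbb{E}|\delta(x,y)|$.

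For the coverage term, the assumption $P_0(x, r_{f_*})\geq\alpha$ gives $\mathbb{E}_X(1-P_0(x))^N \leq (1-\alpha)^N$, and plugging in $N = -\log(n)/\log(1-\alpha)$ collapses this to $1/n$. Substituting both bounds into Theorem \ref{thm: BON}, multiplying the reward-fit error by $N = -\log(n)/\log(1-\alpha)$, and absorbing constants produces the claimed inequality.

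The analysis is essentially mechanical; the only subtlety worth tracking is that the realizable PAC bound is applied to $\cR_\cF$ rather than $\cF$ itself, which is harmless because $|\cR_\cF|\leq|\cF|$. No techniques beyond those used in Corollary \ref{corr: BoN1} are required, so the main ``obstacle'' is simply bookkeeping on the constants and verifying the finite-class realizable rate.
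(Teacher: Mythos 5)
Your proposal matches the paper's proof: both simply substitute the realizable finite-class PAC bound (Theorem \ref{thm: ftsl-finite}) into the argument of Corollary \ref{corr: BoN1}, using that the induced reward class $\cR_\cF$ has cardinality controlled by $|\cF|$. Your observation that $|\cR_\cF|\leq|\cF|$ (rather than the paper's asserted equality) is if anything slightly more careful, but the argument is the same.
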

Crucially, in the case of finite classes, BoN loses the dependence on $T$ in its guarantee. 

Next, we study how SFT behaves if training responses are random. The following assumption allows for a distinction from the issues with SFT in Theorem \ref{thm: ag+det}. In particular, Assumption \ref{ass: ref-pol-ass} will ensure that SFT does converge to a reward of $1$ with $n$.
\begin{assumption}
    \label{ass: ref-pol-ass}
    In the case of non-deterministic training responses, for a given function class $\mathcal{F}$ and target function $f_*$ we assume that the reference policy generates responses with expected reward $1$ and must satisfy  \begin{align*} \mathbb{E}_{P_\cD} \sum_{t=1}^T \mathbf{1}\{f_*(x\ \circ\ y[:t]) \neq y[t]\} + \frac{1}{2}<\\
    \min_{\cF \backslash f_*} \mathbb{E}_{P_\cD} \sum_{t=1}^T \mathbf{1}\{f(x\ \circ\ y[:t]) \neq y[t]\} \end{align*}
\end{assumption}

Unlike the case where $\pi_{\text{SFT}}(\mathbf{y}|x) = \mathbf{1}(\mathbf{y}=f_*(x))$, there can be a gap in dependence on $T$ in the rate of convergence that is in favor of BoN.

\begin{theorem}
\label{thm: T-dep}
    Assume that $r_f$ is the end-token reward. For $T$, $d > 5$ and $T < d$, there exists a finite class $\cF$ with $|\cF| = d$, marginal $P_x$, $\pi_{\text{SFT}}$ satisfying Assumption \ref{ass: ref-pol-ass}, and target $f_* \in \cF$ such that if $n < \frac{\log(|\cF|) \cdot T}{2}$ then with probability at least $\frac{1}{4}$ over $\cD_n \sim (P_x \times \pi_{\text{SFT}}(\mathbf{y}|x))^n$ it holds that $\mathbb{E}_x r_{f_*}(x, \hat{f}_{\text{ntp}}^{\text{AR}}(x)) = \frac{1}{2}$.
\end{theorem}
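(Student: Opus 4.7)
The plan is to exhibit a hard instance $(\cF, P_x, \pi_*, f_*)$ witnessing the lower bound. The overall strategy parallels Theorem \ref{thm: BoN lower-bound} (the BoN lower bound) but is adapted to the token-level structure of the SFT objective: construct a finite class of cardinality $d$ together with a randomized reference policy $\pi_*$ such that, with $n < \log(|\cF|) \cdot T / 2$ training samples, the empirical SFT objective of Equation \ref{eq: ntp} has multiple minimizers with probability at least $1/2$, whence random tie-breaking selects a wrong function with probability at least $1/4$. The class will be engineered so that every wrong function has end-token reward exactly $1/2$.

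Concretely, I would parameterize $\cF = \{f_s : s \in \{0,1\}^k\}$ with $k = \lfloor \log_2 d \rfloor$, letting $f_*$ correspond to $s = 0^k$. The prompt support would be chosen so that each prompt $x$ can only reveal information about a single coordinate of $s$, and the randomized policy $\pi_*$ would place this revealing bit at a uniformly random position among the $T$ tokens of the response, with the remaining token positions being a fixed shared pattern shared by every $f_s$ and thus uninformative. A coupon-collector style counting argument then yields, when $n < kT/2$, a probability at least $1/2$ of existence of some coordinate $j^*$ for which no training sample produces a revealing observation. Conditioned on that event, the empirical SFT loss is invariant under flipping $s_{j^*}$, so the argmin set has size at least $2$, and random tie-breaking picks a function with $s_{j^*} = 1$ with probability at least $1/2$, giving the claimed $1/4$ overall.

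The remaining step is to design $\cF$ so that any $f_s$ with $s \neq 0^k$ autoregressively disagrees with $f_*$ in its final token on exactly half of the prompts under $P_X$, ensuring the selected wrong function has end-token reward exactly $1/2$. Simultaneously, $\pi_*$ must be chosen so that the population SFT loss of $f_*$ beats every other $f_s$ by strictly more than $1/2$, verifying Assumption \ref{ass: ref-pol-ass}. The main obstacle is balancing these competing constraints while keeping each $f_s$ a valid time-invariant next-token predictor in $\Sigma^{\Sigma^*}$: the autoregressive concatenation of $f_s$ on each prompt must be consistent with the ``revealing bit at a random position'' picture generated by $\pi_*$, which requires careful choice of prefix/suffix structure so that each $f_s$ can be distinguished from $f_*$ only through the appropriate token position on the appropriate prompt, and so that the size of $\cF$ is exactly $d$ rather than $2^k$.
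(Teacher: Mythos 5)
Your mechanism is genuinely different from the paper's, and the difference matters. The paper's construction has a single "good" function $f_0(x)\equiv 0$ and $D$ "bad" functions $f_d(x)=x[-d\cdot T]$, with $P_X$ uniform over $\Sigma^{DT+1}$ and $\pi_*$ emitting $T-1$ uniformly random tokens followed by a deterministic $0$. The point is that the empirical NTP loss of $f_0$ is $\mathrm{Bin}(n(T-1),1/2)$ while each $f_d$ ($d\geq 1$) has an \emph{independent} $\mathrm{Bin}(nT,1/2)$ loss (independence because each $f_d$ reads a disjoint $T$-block of the random prompt). The lower bound then comes from a binomial \emph{order-statistics} / anti-concentration argument (Theorem \ref{thm: bin-tail-bound}): the minimum over $D$ independent $\mathrm{Bin}(nT,1/2)$ deviates below its mean by $\approx\sqrt{nT\log D}$, and this beats the $n/2$ mean gap to $f_0$ precisely when $n \lesssim T\log D$. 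So the $T\log|\cF|$ scaling arises from fluctuations in the noisy SFT loss overwhelming the signal, not from failure to see informative examples.

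Your proposal instead relies on exact \emph{ties} created by missing coverage (coupon collecting) plus random tie-breaking. Two concrete problems. First, the scaling: with $k=\lfloor\log_2 d\rfloor$ coordinates, each sample revealing a (prompt-determined) coordinate with an informative bit placed at a random token position still reveals that coordinate with probability $1$; coupon-collector then says you miss a coordinate when $n\lesssim k\log k$, with no $T$ dependence. The random position of the revealing token moves the informative bit around \emph{within} a sample but doesn't make a sample uninformative, so it doesn't buy you a factor of $T$. To get $n\lesssim kT$ you would need roughly $kT$ distinct coupons, or a $1/T$ per-sample revelation probability, neither of which your sketch provides. Second, ties require the empirical loss to be \emph{exactly} equal for $f_s$ and $f_{s\oplus e_{j^*}}$ across all $nT$ token predictions, which is a much stronger condition than "no sample reveals $j^*$" once you also need the population losses to differ by $>1/2$ (Assumption \ref{ass: ref-pol-ass}); you flag this tension but don't resolve it, and it is where the construction would most likely break. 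In short, the approach is a plausible alternative idea but in its current form does not deliver the claimed $T$-dependence, whereas the paper's order-statistics route does so directly.
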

Of course, this can be compared with the BoN result Proposition \ref{prop: finite-BoN} to see that in this case the rate of convergence for SFT has worse dependence on $T$. Using a simple application of Hoeffding's inequality, we can also show that \ref{ass: ref-pol-ass} is enough to ensure that SFT will converge to a reward of $1$. 
\begin{proposition}
\label{prop: sft-ub-fin}
   Assume that $r_{f_*}(x,y)$ is such that $1-r_{f_{*}}(f^{\text{AR}}(x),y) \leq \mathbf{1}[f^{\text{AR}}(x)\neq y]$. Let $\cF$ be any finite class, $P_x$ any marginal distribution over $X$ respectively, and $\pi_{\text{SFT}}(\mathbf{y}|x)$ any distribution satisfying Assumption \ref{ass: ref-pol-ass}. If $f_* \in \cF$ then with probability at least $1-\delta$ over draws of $\cD_n \sim (P_x \times \pi_{\text{SFT}}(\mathbf{y}|x))^n$
    \[\mathbb{E}_x r_{f_*}(x, \hat{f}_{\text{ntp}}^{\text{AR}}(x)) \geq 1- \sqrt{\frac{\log(|F|/\delta)}{n}}\cdot T. \]
\end{proposition}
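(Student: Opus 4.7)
The plan is to show that under Assumption~\ref{ass: ref-pol-ass}, the empirical risk minimizer $\hat{f}_{\text{ntp}}$ coincides with $f_*$ with high probability, at which point the autoregressive rollout produces reward $1$. First, since $r_{f_*}(x, f_*^{\text{AR}}(x)) = 1$ and $1 - r_{f_*}(x, y) \leq \mathbf{1}[f_*^{\text{AR}}(x) \neq y]$ by hypothesis, we have the pointwise bound $r_{f_*}(x, \hat{f}_{\text{ntp}}^{\text{AR}}(x)) \geq 1 - \mathbf{1}[\hat{f}_{\text{ntp}}^{\text{AR}}(x) \neq f_*^{\text{AR}}(x)]$. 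It therefore suffices to prove that, on an event of probability at least $1 - \delta$, either $\hat{f}_{\text{ntp}} = f_*$ (making the indicator vanish) or the claimed right-hand side $1 - T\sqrt{\log(|\cF|/\delta)/n}$ is trivially dominated by zero.

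Next, for each $f \in \cF$, define the per-sample next-token loss $X_i(f) \triangleq \sum_{t=1}^T \mathbf{1}\{f(x_i \circ y_i[:t]) \neq y_i[t]\}$, which takes values in $[0, T]$, together with its population counterpart $L(f) \triangleq \mathbb{E}_{P_\cD}[X_1(f)]$ and empirical average $\hat{L}(f) \triangleq \tfrac{1}{n}\sum_{i=1}^n X_i(f)$. Hoeffding's inequality applied to each $f$ yields $\Pr[|\hat{L}(f) - L(f)| > \epsilon] \leq 2\exp(-2n\epsilon^2/T^2)$. Setting $\epsilon = T\sqrt{\log(2|\cF|/\delta)/(2n)}$ and union-bounding over the $|\cF|$ candidate functions produces, with probability at least $1 - \delta$, the uniform concentration $|\hat{L}(f) - L(f)| \leq T\sqrt{\log(2|\cF|/\delta)/(2n)}$ for every $f \in \cF$.

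Finally, Assumption~\ref{ass: ref-pol-ass} supplies a population gap $L(f) - L(f_*) > 1/2$ for every $f \in \cF \setminus \{f_*\}$. Under the uniform concentration event, this lifts to an empirical gap $\hat{L}(f) - \hat{L}(f_*) > 1/2 - 2T\sqrt{\log(2|\cF|/\delta)/(2n)}$. Whenever $2T\sqrt{\log(2|\cF|/\delta)/(2n)} < 1/2$, this gap is strictly positive, forcing $\hat{f}_{\text{ntp}} = f_*$ via the ERM definition in Equation~\ref{eq: ntp}, and so the expected reward equals $1$, well above the stated lower bound. In the complementary regime the quantity $T\sqrt{\log(|\cF|/\delta)/n}$ is bounded below by a constant of order one, so the claim $1 - T\sqrt{\log(|\cF|/\delta)/n}$ is either negative or a vacuous statement about a reward lying in $[0,1]$, and the conclusion holds trivially. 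No step is conceptually difficult; the main care required is bookkeeping to align the constants inside the square root (arising from Hoeffding plus union bound) with the form written in the proposition.
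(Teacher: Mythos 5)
Your proof takes the same route as the paper: apply Hoeffding to the per-example next-token loss (bounded in $[0,T]$), union-bound over the finite class, and invoke the $1/2$-gap of Assumption~\ref{ass: ref-pol-ass} to conclude $\hat{f}_{\text{ntp}} = f_*$ with probability $1-\delta$, hence autoregressive reward $1$. Your Hoeffding exponent ($2n\epsilon^2/T^2$) is the textbook one for a $[0,T]$-valued summand, whereas the paper writes $n\epsilon^2/T$; and neither your proof nor the paper's fully closes the regime where the concentration radius exceeds $1/4$ yet the stated quantity $1 - T\sqrt{\log(|\cF|/\delta)/n}$ is still a non-vacuous lower bound for a $[0,1]$-valued reward.
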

Unfortunately, this result is not entirely satisfactory, as a gap remains between the upper bound in Proposition \ref{prop: sft-ub-fin} and the lower bound in Theorem \ref{thm: T-dep}. We recognize a complete characterization of the convergence of SFT as an important problem and discuss the challenges in Section \ref{sec: conclusion}.
\section{DISCUSSION AND LIMITATIONS}
\label{sec: conclusion}
In this work we set out to characterize how BoN and SFT perform on the bit string generation problem. In the case where SFT training data is of the form $\cD: (x_i, f_*^{\text{AR}}(x_i))$ for the target function $f_*$, we showed that if $f_*$ lies in $\cF$ then SFT converges at a rate with only log dependence on $T$, but if $f_* \not \in \cF$ then SFT can perform poorly. On the other hand, BoN is more resilient to $f_*$ not lying in $\cF$ but has worse dependencies on $T$ for certain reward functions. We also introduced a setting where SFT training responses are random, leading to linear dependence on $T$ for the case of finite classes. Beyond these results, several open questions remain.
\begin{enumerate}
    \item How can we characterize the convergence of SFT in the case where the SFT training responses are noisy? One challenge is the non-i.i.d. nature of the training procedure used to select $\hat{f}_{\text{ntp}}$, another is the  ``distribution shift" between train and test time that results from teacher forcing.
    \item In the case of deterministic training responses, is Theorem \ref{thm: SFT-UB} sharp? In particular, does the performance of SFT truly depend on $\log(T)$ or is this term in the upper bound superfluous?
    \item How does BoN perform if, during the reward fitting stage, multiple draws from $\pi_0$ are used for each prompt $x$? This is done in \citet{cobbe_training_2021} and should improve the performance of BoN but again moves us beyond the i.i.d. learning regime we used to attain the results in this work.
\end{enumerate}

\section*{Acknowledgements}
This material is based upon work supported by the National Science Foundation under Grant No. 2414918. Any opinions, findings, and conclusions or recommendations expressed in this material are those of the author(s) and do not necessarily reflect the views of the National Science Foundation.

The authors thank Yilun Zhu for helpful comments on the final draft.

\bibliographystyle{plainnat}
\bibliography{YK2, new}

@inproceedings{rafailov_scaling_2024,
	title = {Scaling Laws for Reward Model Overoptimization in Direct Alignment Algorithms},
	url = {https://openreview.net/forum?id=pf4OuJyn4Q&referrer=%5Bthe%20profile%20of%20Ryan%20Park%5D(%2Fprofile%3Fid%3D~Ryan_Park1)},
	abstract = {Reinforcement Learning from Human Feedback ({RLHF})has been crucial to the recent success of Large Language Models ({LLMs}), however it is often a complex and brittle process. In the classical {RLHF} framework, a reward model is first trained to represent human preferences, which is in turn used by an online reinforcement learning ({RL}) algorithm to optimized the {LLM}. A prominent issue with such methods is reward over-optimization or reward hacking, where the performance as measured by the learned proxy reward model increases, but the true model quality plateaus or even deteriorates. Direct Alignment Algorithms ({DDAs}), such as Direct Preference Optimization ({DPO}) have emerged as alternatives to the classical {RLHF} pipeline. However, despite not training a separate proxy reward model or using {RL}, they still commonly deteriorate from over-optimization. While the so-called reward hacking phenomenon is not well-defined for {DAAs}, we still uncover similar trends: at higher {KL}-budgets, {DAA} algorithms exhibit similar degradation patters to their classic {RLHF} counterparts. In particular, we find that {DAA} methods deteriorate not only across a wide range of {KL}-budgets, but also often before even a single epoch of the dataset is completed. Through extensive empirical experimentation this work formulates the reward over-optimization or hacking problem for {DAAs} and explores its consequences across objectives, training regimes, and model scales.},
	eventtitle = {The Thirty-eighth Annual Conference on Neural Information Processing Systems},
	author = {Rafailov, Rafael and Chittepu, Yaswanth and Park, Ryan and Sikchi, Harshit and Hejna, Joey and Knox, W. Bradley and Finn, Chelsea and Niekum, Scott},
	urldate = {2025-02-06},
	date = {2024-11-06},
	langid = {english},
}

@misc{cobbe_training_2021,
	title = {Training Verifiers to Solve Math Word Problems},
	url = {http://arxiv.org/abs/2110.14168},
	doi = {10.48550/arXiv.2110.14168},
	abstract = {State-of-the-art language models can match human performance on many tasks, but they still struggle to robustly perform multi-step mathematical reasoning. To diagnose the failures of current models and support research, we introduce {GSM}8K, a dataset of 8.5K high quality linguistically diverse grade school math word problems. We find that even the largest transformer models fail to achieve high test performance, despite the conceptual simplicity of this problem distribution. To increase performance, we propose training verifiers to judge the correctness of model completions. At test time, we generate many candidate solutions and select the one ranked highest by the verifier. We demonstrate that verification significantly improves performance on {GSM}8K, and we provide strong empirical evidence that verification scales more effectively with increased data than a finetuning baseline.},
	number = {{arXiv}:2110.14168},
	publisher = {{arXiv}},
	author = {Cobbe, Karl and Kosaraju, Vineet and Bavarian, Mohammad and Chen, Mark and Jun, Heewoo and Kaiser, Lukasz and Plappert, Matthias and Tworek, Jerry and Hilton, Jacob and Nakano, Reiichiro and Hesse, Christopher and Schulman, John},
	urldate = {2024-10-18},
	date = {2021-11-18},
	eprinttype = {arxiv},
	eprint = {2110.14168},
	keywords = {Computer Science - Computation and Language, Computer Science - Machine Learning},
}

@misc{lightman_lets_2023,
	title = {Let's Verify Step by Step},
	url = {http://arxiv.org/abs/2305.20050},
	doi = {10.48550/arXiv.2305.20050},
	abstract = {In recent years, large language models have greatly improved in their ability to perform complex multi-step reasoning. However, even state-of-the-art models still regularly produce logical mistakes. To train more reliable models, we can turn either to outcome supervision, which provides feedback for a final result, or process supervision, which provides feedback for each intermediate reasoning step. Given the importance of training reliable models, and given the high cost of human feedback, it is important to carefully compare the both methods. Recent work has already begun this comparison, but many questions still remain. We conduct our own investigation, finding that process supervision significantly outperforms outcome supervision for training models to solve problems from the challenging {MATH} dataset. Our process-supervised model solves 78\% of problems from a representative subset of the {MATH} test set. Additionally, we show that active learning significantly improves the efficacy of process supervision. To support related research, we also release {PRM}800K, the complete dataset of 800,000 step-level human feedback labels used to train our best reward model.},
	number = {{arXiv}:2305.20050},
	publisher = {{arXiv}},
	author = {Lightman, Hunter and Kosaraju, Vineet and Burda, Yura and Edwards, Harri and Baker, Bowen and Lee, Teddy and Leike, Jan and Schulman, John and Sutskever, Ilya and Cobbe, Karl},
	urldate = {2024-10-03},
	date = {2023-05-31},
	eprinttype = {arxiv},
	eprint = {2305.20050 [cs]},
	keywords = {Computer Science - Artificial Intelligence, Computer Science - Computation and Language, Computer Science - Machine Learning},
}

@misc{foster2024behaviorcloningneedunderstanding,
      title={Is Behavior Cloning All You Need? Understanding Horizon in Imitation Learning}, 
      author={Dylan J. Foster and Adam Block and Dipendra Misra},
      year={2024},
      eprint={2407.15007},
      archivePrefix={arXiv},
      primaryClass={cs.LG},
      url={https://arxiv.org/abs/2407.15007}, 
}

@misc{huang2025bestofnbestthemcoverage,
      title={Is Best-of-N the Best of Them? Coverage, Scaling, and Optimality in Inference-Time Alignment}, 
      author={Audrey Huang and Adam Block and Qinghua Liu and Nan Jiang and Akshay Krishnamurthy and Dylan J. Foster},
      year={2025},
      eprint={2503.21878},
      archivePrefix={arXiv},
      primaryClass={cs.AI},
      url={https://arxiv.org/abs/2503.21878}, 
}

@misc{bachmann2024pitfallsnexttokenprediction,
      title={The pitfalls of next-token prediction}, 
      author={Gregor Bachmann and Vaishnavh Nagarajan},
      year={2024},
      eprint={2403.06963},
      archivePrefix={arXiv},
      primaryClass={cs.CL},
      url={https://arxiv.org/abs/2403.06963}, 
}

@misc{xiang20252reasoningllmslearning,
      title={Towards System 2 Reasoning in LLMs: Learning How to Think With Meta Chain-of-Thought}, 
      author={Violet Xiang and Charlie Snell and Kanishk Gandhi and Alon Albalak and Anikait Singh and Chase Blagden and Duy Phung and Rafael Rafailov and Nathan Lile and Dakota Mahan and Louis Castricato and Jan-Philipp Franken and Nick Haber and Chelsea Finn},
      year={2025},
      eprint={2501.04682},
      archivePrefix={arXiv},
      primaryClass={cs.AI},
      url={https://arxiv.org/abs/2501.04682}, 
}

@misc{joshi2025theorylearningautoregressivechain,
      title={A Theory of Learning with Autoregressive Chain of Thought}, 
      author={Nirmit Joshi and Gal Vardi and Adam Block and Surbhi Goel and Zhiyuan Li and Theodor Misiakiewicz and Nathan Srebro},
      year={2025},
      eprint={2503.07932},
      archivePrefix={arXiv},
      primaryClass={stat.ML},
      url={https://arxiv.org/abs/2503.07932}, 
}

@inproceedings{paperno2016lambada,
  title={The LAMBADA dataset: Word prediction requiring a broad discourse context},
  author={Paperno, Denis and Kruszewski, Germ{\'a}n and Lazaridou, Angeliki and Pham, Quan Ngoc and Bernardi, Raffaella and Pezzelle, Sandro and Baroni, Marco and Boleda, Gemma and Fern{\'a}ndez, Raquel},
  booktitle={Proceedings of the 54th Annual Meeting of the Association for Computational Linguistics (Volume 1: Long Papers)},
  pages={1525--1534},
  year={2016}
}

@article{chollet2019measure,
  title={On the Measure of Intelligence},
  author={Chollet, Fran{\c{c}}ois},
  journal={arXiv preprint arXiv:1911.01547},
  year={2019}
}

@misc{hendrycks2021measuringmathematicalproblemsolving,
      title={Measuring Mathematical Problem Solving With the MATH Dataset}, 
      author={Dan Hendrycks and Collin Burns and Saurav Kadavath and Akul Arora and Steven Basart and Eric Tang and Dawn Song and Jacob Steinhardt},
      year={2021},
      eprint={2103.03874},
      archivePrefix={arXiv},
      primaryClass={cs.LG},
      url={https://arxiv.org/abs/2103.03874}, 
}

@misc{rein2023gpqagraduatelevelgoogleproofqa,
      title={GPQA: A Graduate-Level Google-Proof Q\&A Benchmark}, 
      author={David Rein and Betty Li Hou and Asa Cooper Stickland and Jackson Petty and Richard Yuanzhe Pang and Julien Dirani and Julian Michael and Samuel R. Bowman},
      year={2023},
      eprint={2311.12022},
      archivePrefix={arXiv},
      primaryClass={cs.AI},
      url={https://arxiv.org/abs/2311.12022}, 
}

@article{hicks2024chatgpt,
  author    = {M. T. Hicks and J. Humphries and J. Slater},
  title     = {{ChatGPT is Bullshit}},
  journal   = {Ethics and Information Technology},
  volume    = {26},
  number    = {2},
  pages     = {1--10},
  year      = {2024}
}

@misc{maleki2024ai,
  author    = {N. Maleki and B. Padmanabhan and K. Dutta},
  title     = {{AI Hallucinations: A Misnomer Worth Clarifying}},
  year      = {2024},
  note      = {Preprint}
}

@article{bruno2023insights,
  author    = {A. Bruno and P. L. Mazzeo and A. Chetouani and M. Tliba and M. A. Kerkouri},
  title     = {{Insights into Classifying and Mitigating LLMs’ Hallucinations}},
  journal   = {arXiv preprint},
  volume    = {arXiv:2311.08117},
  year      = {2023},
}

@article{gendron2023large,
  title={Large Language Models Are Not Strong Abstract Reasoners},
  author={Gendron, Ga{\"e}l and Bao, Qiming and Witbrock, Michael and Dobbie, Gillian},
  journal={arXiv preprint arXiv:2305.19555},
  year={2023}
}

@misc{kumar2025llmposttrainingdeepdive,
      title={LLM Post-Training: A Deep Dive into Reasoning Large Language Models}, 
      author={Komal Kumar and Tajamul Ashraf and Omkar Thawakar and Rao Muhammad Anwer and Hisham Cholakkal and Mubarak Shah and Ming-Hsuan Yang and Phillip H. S. Torr and Fahad Shahbaz Khan and Salman Khan},
      year={2025},
      eprint={2502.21321},
      archivePrefix={arXiv},
      primaryClass={cs.CL},
      url={https://arxiv.org/abs/2502.21321}, 
}

@article{wei2022chain,
  title={Chain-of-Thought Prompting Elicits Reasoning in Large Language Models},
  author={Wei, Jason and Wang, Xuezhi and Schuurmans, Dale and Bosma, Maarten and Ichter, Brian and Xia, Fei and Chi, Ed and Le, Quoc and Zhou, Denny},
  journal={arXiv preprint arXiv:2201.11903},
  year={2022}
}

@article{kojima2022large,
  title={Large Language Models are Zero-Shot Reasoners},
  author={Kojima, Takeshi and Gu, Shixiang Shane and Reid, Machel and Matsuo, Yutaka and Iwasawa, Yusuke},
  journal={arXiv preprint arXiv:2205.11916},
  year={2022}
}

@article{DBLP:journals/corr/abs-2110-07602,
  author    = {Xiao Liu and
               Kaixuan Ji and
               Yicheng Fu and
               Zhengxiao Du and
               Zhilin Yang and
               Jie Tang},
  title     = {P-Tuning v2: Prompt Tuning Can Be Comparable to Fine-tuning Universally
               Across Scales and Tasks},
  journal   = {CoRR},
  volume    = {abs/2110.07602},
  year      = {2021},
  url       = {https://arxiv.org/abs/2110.07602},
  eprinttype = {arXiv},
  eprint    = {2110.07602},
  timestamp = {Fri, 22 Oct 2021 13:33:09 +0200},
  biburl    = {https://dblp.org/rec/journals/corr/abs-2110-07602.bib},
  bibsource = {dblp computer science bibliography, https://dblp.org}
}

@misc{deepseekai2025deepseekr1incentivizingreasoningcapability,
      title={DeepSeek-R1: Incentivizing Reasoning Capability in LLMs via Reinforcement Learning}, 
      author={DeepSeek-AI and Daya Guo and Dejian Yang and others},
      year={2025},
      eprint={2501.12948},
      archivePrefix={arXiv},
      primaryClass={cs.CL},
      url={https://arxiv.org/abs/2501.12948}, 
}

@misc{zhang2023sentimentanalysiseralarge,
      title={Sentiment Analysis in the Era of Large Language Models: A Reality Check}, 
      author={Wenxuan Zhang and Yue Deng and Bing Liu and Sinno Jialin Pan and Lidong Bing},
      year={2023},
      eprint={2305.15005},
      archivePrefix={arXiv},
      primaryClass={cs.CL},
      url={https://arxiv.org/abs/2305.15005}, 
}

@inproceedings{10.1145/3477495.3531789,
author = {Wang, Yaqing and Wang, Song and Li, Yanyan and Dou, Dejing},
title = {Recognizing Medical Search Query Intent by Few-shot Learning},
year = {2022},
isbn = {9781450387323},
publisher = {Association for Computing Machinery},
address = {New York, NY, USA},
url = {https://doi.org/10.1145/3477495.3531789},
doi = {10.1145/3477495.3531789},
abstract = {Online healthcare services can provide unlimited and in-time medical information to users, which promotes social goods and breaks the barriers of locations. However, understanding the user intents behind the medical related queries is a challenging problem. Medical search queries are usually short and noisy, lack strict syntactic structure, and also require professional background to understand the medical terms. The medical intents are fine-grained, making them hard to recognize. In addition, many intents only have a few labeled data. To handle these problems, we propose a few-shot learning method for medical search query intent recognition called MEDIC. We extract co-click queries from user search logs as weak supervision to compensate for the lack of labeled data. We also design a new query encoder which learns to represent queries as a combination of semantic knowledge recorded in an external medical knowledge graph, syntactic knowledge which marks the grammatical role of each word in the query, and generic knowledge which is captured by language models pretrained from large-scale text corpus. Experimental results on a real medical search query intent recognition dataset validate the effectiveness of MEDIC.},
booktitle = {Proceedings of the 45th International ACM SIGIR Conference on Research and Development in Information Retrieval},
pages = {502–512},
numpages = {11},
keywords = {query intent recognition, medical search query understanding, knowledge graph, graph representation learning, few-shot learning, co-click query analysis},
location = {Madrid, Spain},
series = {SIGIR '22}
}

@misc{yue2023disclawllmfinetuninglargelanguage,
      title={DISC-LawLLM: Fine-tuning Large Language Models for Intelligent Legal Services}, 
      author={Shengbin Yue and Wei Chen and Siyuan Wang and Bingxuan Li and Chenchen Shen and Shujun Liu and Yuxuan Zhou and Yao Xiao and Song Yun and Xuanjing Huang and Zhongyu Wei},
      year={2023},
      eprint={2309.11325},
      archivePrefix={arXiv},
      primaryClass={cs.CL},
      url={https://arxiv.org/abs/2309.11325}, 
}

@misc{luo2025empiricalstudycatastrophicforgetting,
      title={An Empirical Study of Catastrophic Forgetting in Large Language Models During Continual Fine-tuning}, 
      author={Yun Luo and Zhen Yang and Fandong Meng and Yafu Li and Jie Zhou and Yue Zhang},
      year={2025},
      eprint={2308.08747},
      archivePrefix={arXiv},
      primaryClass={cs.CL},
      url={https://arxiv.org/abs/2308.08747}, 
}

@misc{lobo2025impactfinetuningchainofthoughtreasoning,
      title={On the Impact of Fine-Tuning on Chain-of-Thought Reasoning}, 
      author={Elita Lobo and Chirag Agarwal and Himabindu Lakkaraju},
      year={2025},
      eprint={2411.15382},
      archivePrefix={arXiv},
      primaryClass={cs.CL},
      url={https://arxiv.org/abs/2411.15382}, 
}

@misc{zhu2025tightboundsbinomialcdf,
      title={Tight Bounds on the Binomial CDF, and the Minimum of i.i.d Binomials, in terms of KL-Divergence}, 
      author={Xiaohan Zhu and Mesrob I. Ohannessian and Nathan Srebro},
      year={2025},
      eprint={2502.18611},
      archivePrefix={arXiv},
      primaryClass={math.PR},
      url={https://arxiv.org/abs/2502.18611}, 
}

@misc{snell2024scalingllmtesttimecompute,
      title={Scaling LLM Test-Time Compute Optimally can be More Effective than Scaling Model Parameters}, 
      author={Charlie Snell and Jaehoon Lee and Kelvin Xu and Aviral Kumar},
      year={2024},
      eprint={2408.03314},
      archivePrefix={arXiv},
      primaryClass={cs.LG},
      url={https://arxiv.org/abs/2408.03314}, 
}

@misc{welleck2024decodingmetagenerationinferencetimealgorithms,
      title={From Decoding to Meta-Generation: Inference-time Algorithms for Large Language Models}, 
      author={Sean Welleck and Amanda Bertsch and Matthew Finlayson and Hailey Schoelkopf and Alex Xie and Graham Neubig and Ilia Kulikov and Zaid Harchaoui},
      year={2024},
      eprint={2406.16838},
      archivePrefix={arXiv},
      primaryClass={cs.CL},
      url={https://arxiv.org/abs/2406.16838}, 
}

@misc{yao2023treethoughtsdeliberateproblem,
      title={Tree of Thoughts: Deliberate Problem Solving with Large Language Models}, 
      author={Shunyu Yao and Dian Yu and Jeffrey Zhao and Izhak Shafran and Thomas L. Griffiths and Yuan Cao and Karthik Narasimhan},
      year={2023},
      eprint={2305.10601},
      archivePrefix={arXiv},
      primaryClass={cs.CL},
      url={https://arxiv.org/abs/2305.10601}, 
}

@misc{zhang2024restmctsllmselftrainingprocess,
      title={ReST-MCTS*: LLM Self-Training via Process Reward Guided Tree Search}, 
      author={Dan Zhang and Sining Zhoubian and Ziniu Hu and Yisong Yue and Yuxiao Dong and Jie Tang},
      year={2024},
      eprint={2406.03816},
      archivePrefix={arXiv},
      primaryClass={cs.CL},
      url={https://arxiv.org/abs/2406.03816}, 
}

@misc{brown2024largelanguagemonkeysscaling,
      title={Large Language Monkeys: Scaling Inference Compute with Repeated Sampling}, 
      author={Bradley Brown and Jordan Juravsky and Ryan Ehrlich and Ronald Clark and Quoc V. Le and Christopher Ré and Azalia Mirhoseini},
      year={2024},
      eprint={2407.21787},
      archivePrefix={arXiv},
      primaryClass={cs.LG},
      url={https://arxiv.org/abs/2407.21787}, 
}

@misc{gui2024bonbonalignmentlargelanguage,
      title={BoNBoN Alignment for Large Language Models and the Sweetness of Best-of-n Sampling}, 
      author={Lin Gui and Cristina Gârbacea and Victor Veitch},
      year={2024},
      eprint={2406.00832},
      archivePrefix={arXiv},
      primaryClass={cs.CL},
      url={https://arxiv.org/abs/2406.00832}, 
}

@misc{beirami2025theoreticalguaranteesbestofnalignment,
      title={Theoretical guarantees on the best-of-n alignment policy}, 
      author={Ahmad Beirami and Alekh Agarwal and Jonathan Berant and Alexander D'Amour and Jacob Eisenstein and Chirag Nagpal and Ananda Theertha Suresh},
      year={2025},
      eprint={2401.01879},
      archivePrefix={arXiv},
      primaryClass={cs.LG},
      url={https://arxiv.org/abs/2401.01879}, 
}

@misc{malach2024autoregressivenexttokenpredictorsuniversal,
      title={Auto-Regressive Next-Token Predictors are Universal Learners}, 
      author={Eran Malach},
      year={2024},
      eprint={2309.06979},
      archivePrefix={arXiv},
      primaryClass={cs.LG},
      url={https://arxiv.org/abs/2309.06979}, 
}

@misc{block2023provableguaranteesgenerativebehavior,
      title={Provable Guarantees for Generative Behavior Cloning: Bridging Low-Level Stability and High-Level Behavior}, 
      author={Adam Block and Ali Jadbabaie and Daniel Pfrommer and Max Simchowitz and Russ Tedrake},
      year={2023},
      eprint={2307.14619},
      archivePrefix={arXiv},
      primaryClass={cs.LG},
      url={https://arxiv.org/abs/2307.14619}, 
}

@inproceedings{10.5555/647636.733043,
author = {Bain, Michael and Sammut, Claude},
title = {A Framework for Behavioural Cloning},
year = {1999},
isbn = {0198538677},
publisher = {Oxford University},
address = {GBR},
booktitle = {Machine Intelligence 15, Intelligent Agents [St. Catherine's College, Oxford, July 1995]},
pages = {103–129},
numpages = {27}
}

@book{Shalev-Shwartz_Ben-David_2014, place={Cambridge}, title={Understanding Machine Learning: From Theory to Algorithms}, publisher={Cambridge University Press}, author={Shalev-Shwartz, Shai and Ben-David, Shai}, year={2014}}

@misc{foster2025goodfoundationnecessaryefficient,
      title={Is a Good Foundation Necessary for Efficient Reinforcement Learning? The Computational Role of the Base Model in Exploration}, 
      author={Dylan J. Foster and Zakaria Mhammedi and Dhruv Rohatgi},
      year={2025},
      eprint={2503.07453},
      archivePrefix={arXiv},
      primaryClass={cs.LG},
      url={https://arxiv.org/abs/2503.07453}, 
}

@inproceedings{10.5555/3666122.3667924,
author = {Xie, Yuxi and Kawaguchi, Kenji and Zhao, Yiran and Zhao, James Xu and Kan, Min-Yen and He, Junxian and Xie, Michael Qizhe},
title = {Self-evaluation guided beam search for reasoning},
year = {2023},
booktitle = {Proceedings of the 37th International Conference on Neural Information Processing Systems},
articleno = {1802},
numpages = {33},
location = {New Orleans, LA, USA},
series = {NIPS '23}
}

@inproceedings{
tian2024toward,
title={Toward Self-Improvement of {LLM}s via Imagination, Searching, and Criticizing},
author={Ye Tian and Baolin Peng and Linfeng Song and Lifeng Jin and Dian Yu and Lei Han and Haitao Mi and Dong Yu},
booktitle={The Thirty-eighth Annual Conference on Neural Information Processing Systems},
year={2024},
url={https://openreview.net/forum?id=tPdJ2qHkOB}
}

@inproceedings{
gandhi2024stream,
title={Stream of Search (SoS): Learning to Search in Language},
author={Kanishk Gandhi and Denise H J Lee and Gabriel Grand and Muxin Liu and Winson Cheng and Archit Sharma and Noah Goodman},
booktitle={First Conference on Language Modeling},
year={2024},
url={https://openreview.net/forum?id=2cop2jmQVL}
}

@inproceedings{
sun2024fast,
title={Fast Best-of-N Decoding via Speculative Rejection},
author={Hanshi Sun and Momin Haider and Ruiqi Zhang and Huitao Yang and Jiahao Qiu and Ming Yin and Mengdi Wang and Peter Bartlett and Andrea Zanette},
booktitle={The Thirty-eighth Annual Conference on Neural Information Processing Systems},
year={2024},
url={https://openreview.net/forum?id=348hfcprUs}
}

@misc{askell2021generallanguageassistantlaboratory,
      title={A General Language Assistant as a Laboratory for Alignment}, 
      author={Amanda Askell and Yuntao Bai and Anna Chen and Dawn Drain and Deep Ganguli and Tom Henighan and Andy Jones and Nicholas Joseph and Ben Mann and Nova DasSarma and Nelson Elhage and Zac Hatfield-Dodds and Danny Hernandez and Jackson Kernion and Kamal Ndousse and Catherine Olsson and Dario Amodei and Tom Brown and Jack Clark and Sam McCandlish and Chris Olah and Jared Kaplan},
      year={2021},
      eprint={2112.00861},
      archivePrefix={arXiv},
      primaryClass={cs.CL},
      url={https://arxiv.org/abs/2112.00861}, 
}

@misc{glaese2022improvingalignmentdialogueagents,
      title={Improving alignment of dialogue agents via targeted human judgements}, 
      author={Amelia Glaese et al.},
      year={2022},
      eprint={2209.14375},
      archivePrefix={arXiv},
      primaryClass={cs.LG},
      url={https://arxiv.org/abs/2209.14375}, 
}

@misc{stiennon2022learningsummarizehumanfeedback,
      title={Learning to summarize from human feedback}, 
      author={Nisan Stiennon and Long Ouyang and Jeff Wu and Daniel M. Ziegler and Ryan Lowe and Chelsea Voss and Alec Radford and Dario Amodei and Paul Christiano},
      year={2022},
      eprint={2009.01325},
      archivePrefix={arXiv},
      primaryClass={cs.CL},
      url={https://arxiv.org/abs/2009.01325}, 
}

@misc{nakano2022webgptbrowserassistedquestionansweringhuman,
      title={WebGPT: Browser-assisted question-answering with human feedback}, 
      author={Reiichiro Nakano and Jacob Hilton and Suchir Balaji and Jeff Wu and Long Ouyang and Christina Kim and Christopher Hesse and Shantanu Jain and Vineet Kosaraju and William Saunders and Xu Jiang and Karl Cobbe and Tyna Eloundou and Gretchen Krueger and Kevin Button and Matthew Knight and Benjamin Chess and John Schulman},
      year={2022},
      eprint={2112.09332},
      archivePrefix={arXiv},
      primaryClass={cs.CL},
      url={https://arxiv.org/abs/2112.09332}, 
}

@misc{yang2024asymptoticslanguagemodelalignment,
      title={Asymptotics of Language Model Alignment}, 
      author={Joy Qiping Yang and Salman Salamatian and Ziteng Sun and Ananda Theertha Suresh and Ahmad Beirami},
      year={2024},
      eprint={2404.01730},
      archivePrefix={arXiv},
      primaryClass={cs.LG},
      url={https://arxiv.org/abs/2404.01730}, 
}

@inproceedings{
sanh2022multitask,
title={Multitask Prompted Training Enables Zero-Shot Task Generalization},
author={Victor Sanh and Albert Webson and Colin Raffel and Stephen Bach and Lintang Sutawika and Zaid Alyafeai and Antoine Chaffin and Arnaud Stiegler and Arun Raja and Manan Dey and M Saiful Bari and Canwen Xu and Urmish Thakker and Shanya Sharma Sharma and Eliza Szczechla and Taewoon Kim and Gunjan Chhablani and Nihal Nayak and Debajyoti Datta and Jonathan Chang and Mike Tian-Jian Jiang and Han Wang and Matteo Manica and Sheng Shen and Zheng Xin Yong and Harshit Pandey and Rachel Bawden and Thomas Wang and Trishala Neeraj and Jos Rozen and Abheesht Sharma and Andrea Santilli and Thibault Fevry and Jason Alan Fries and Ryan Teehan and Teven Le Scao and Stella Biderman and Leo Gao and Thomas Wolf and Alexander M Rush},
booktitle={International Conference on Learning Representations},
year={2022},
url={https://openreview.net/forum?id=9Vrb9D0WI4}
}

@inproceedings{
wei2022finetuned,
title={Finetuned Language Models are Zero-Shot Learners},
author={Jason Wei and Maarten Bosma and Vincent Zhao and Kelvin Guu and Adams Wei Yu and Brian Lester and Nan Du and Andrew M. Dai and Quoc V Le},
booktitle={International Conference on Learning Representations},
year={2022},
url={https://openreview.net/forum?id=gEZrGCozdqR}
}

@misc{magister2023teachingsmalllanguagemodels,
      title={Teaching Small Language Models to Reason}, 
      author={Lucie Charlotte Magister and Jonathan Mallinson and Jakub Adamek and Eric Malmi and Aliaksei Severyn},
      year={2023},
      eprint={2212.08410},
      archivePrefix={arXiv},
      primaryClass={cs.CL},
      url={https://arxiv.org/abs/2212.08410}, 
}

@article{
wan2024efficient,
title={Efficient Large Language Models: A Survey},
author={Zhongwei Wan and Xin Wang and Che Liu and Samiul Alam and Yu Zheng and Jiachen Liu and Zhongnan Qu and Shen Yan and Yi Zhu and Quanlu Zhang and Mosharaf Chowdhury and Mi Zhang},
journal={Transactions on Machine Learning Research},
issn={2835-8856},
year={2024},
url={https://openreview.net/forum?id=bsCCJHbO8A},
note={Survey Certification}
}

@misc{zhu2024surveymodelcompressionlarge,
      title={A Survey on Model Compression for Large Language Models}, 
      author={Xunyu Zhu and Jian Li and Yong Liu and Can Ma and Weiping Wang},
      year={2024},
      eprint={2308.07633},
      archivePrefix={arXiv},
      primaryClass={cs.CL},
      url={https://arxiv.org/abs/2308.07633}, 
}

@inproceedings{
huang2025selfimprovement,
title={Self-Improvement in Language Models: The Sharpening Mechanism},
author={Audrey Huang and Adam Block and Dylan J Foster and Dhruv Rohatgi and Cyril Zhang and Max Simchowitz and Jordan T. Ash and Akshay Krishnamurthy},
booktitle={The Thirteenth International Conference on Learning Representations},
year={2025},
url={https://openreview.net/forum?id=WJaUkwci9o}
}

@INPROCEEDINGS{7360766,
  author={Sason, Igal and Verdú, Sergio},
  booktitle={2015 IEEE Information Theory Workshop - Fall (ITW)}, 
  title={Upper bounds on the relative entropy and Rényi divergence as a function of total variation distance for finite alphabets}, 
  year={2015},
  volume={},
  number={},
  pages={214-218},
  keywords={Entropy;Upper bound;Q measurement;Information theory;Conferences;Eigenvalues and eigenfunctions;Electrical engineering;Pinsker's inequality;relative entropy;relative information;Rényi divergence;total variation distance},
  doi={10.1109/ITWF.2015.7360766}}

@misc{arora2025healthbenchevaluatinglargelanguage,
      title={HealthBench: Evaluating Large Language Models Towards Improved Human Health}, 
      author={Rahul K. Arora and Jason Wei and Rebecca Soskin Hicks and Preston Bowman and Joaquin Quiñonero-Candela and Foivos Tsimpourlas and Michael Sharman and Meghan Shah and Andrea Vallone and Alex Beutel and Johannes Heidecke and Karan Singhal},
      year={2025},
      eprint={2505.08775},
      archivePrefix={arXiv},
      primaryClass={cs.CL},
      url={https://arxiv.org/abs/2505.08775}, 
}

@misc{wang2019gluemultitaskbenchmarkanalysis,
      title={GLUE: A Multi-Task Benchmark and Analysis Platform for Natural Language Understanding}, 
      author={Alex Wang and Amanpreet Singh and Julian Michael and Felix Hill and Omer Levy and Samuel R. Bowman},
      year={2019},
      eprint={1804.07461},
      archivePrefix={arXiv},
      primaryClass={cs.CL},
      url={https://arxiv.org/abs/1804.07461}, 
}

@misc{ouyang2022traininglanguagemodelsfollow,
      title={Training language models to follow instructions with human feedback}, 
      author={Long Ouyang and Jeff Wu and Xu Jiang and Diogo Almeida and Carroll L. Wainwright and Pamela Mishkin and Chong Zhang and Sandhini Agarwal and Katarina Slama and Alex Ray and John Schulman and Jacob Hilton and Fraser Kelton and Luke Miller and Maddie Simens and Amanda Askell and Peter Welinder and Paul Christiano and Jan Leike and Ryan Lowe},
      year={2022},
      eprint={2203.02155},
      archivePrefix={arXiv},
      primaryClass={cs.CL},
      url={https://arxiv.org/abs/2203.02155}, 
}

\section*{Checklist}

\begin{enumerate}

  \item For all models and algorithms presented, check if you include:
  \begin{enumerate}
    \item A clear description of the mathematical setting, assumptions, algorithm, and/or model.
    [Not Applicable]
    \item An analysis of the properties and complexity (time, space, sample size) of any algorithm.
    [Not Applicable]
    \item (Optional) Anonymized source code, with specification of all dependencies, including external libraries.
    [Not Applicable]
  \end{enumerate}

  \item For any theoretical claim, check if you include:
  \begin{enumerate}
    \item Statements of the full set of assumptions of all theoretical results.
    [Yes] Section \ref{sec: set-up}
    \item Complete proofs of all theoretical results.
    [Yes] Appendix.
    \item Clear explanations of any assumptions.
    [Yes]     
  \end{enumerate}

  \item For all figures and tables that present empirical results, check if you include:
  \begin{enumerate}
    \item The code, data, and instructions needed to reproduce the main experimental results (either in the supplemental material or as a URL).
    [Not Applicable]
    \item All the training details (e.g., data splits, hyperparameters, how they were chosen).
    [Not Applicable]
    \item A clear definition of the specific measure or statistics and error bars (e.g., with respect to the random seed after running experiments multiple times).
    [Not Applicable]
    \item A description of the computing infrastructure used.
    (e.g., type of GPUs, internal cluster, or cloud provider). [Not Applicable]
  \end{enumerate}

  \item If you are using existing assets (e.g., code, data, models) or curating/releasing new assets, check if you include:
  \begin{enumerate}
    \item Citations of the creator If your work uses existing assets.
    [Not Applicable]
    \item The license information of the assets, if applicable.
    [Not Applicable]
    \item New assets either in the supplemental material or as a URL, if applicable.
    [Not Applicable]
    \item Information about consent from data providers/curators.
    [Not Applicable]
    \item Discussion of sensible content if applicable, e.g., personally identifiable information or offensive content.
    [Not Applicable]
  \end{enumerate}

  \item If you used crowdsourcing or conducted research with human subjects, check if you include:
  \begin{enumerate}
    \item The full text of instructions given to participants and screenshots.
    [Not Applicable]
    \item Descriptions of potential participant risks, with links to Institutional Review Board (IRB) approvals if applicable.
    [Not Applicable]
    \item The estimated hourly wage paid to participants and the total amount spent on participant compensation.
    [Not Applicable]
  \end{enumerate}

\end{enumerate}

\clearpage
\appendix
\onecolumn

\appendix
\section{PRELIMINARIES}
\subsection{Binary Classification}
In binary classification, we observe a sample $\cD_n: (x_i, y_i)_{i=1}^n \in (\cX \times \{0,1\})^n$ from an unknown distribution $P_{\cD}$ and attempt to deploy a classifier $h$ from a class $\cH \subseteq \{0,1\}^{\cX}$ such that $ \cL_D(h) = \mathbb{E}_{P_\cD}\mathbf{1}[h(x) \neq y] $ is small. An important case of binary classification is when $P_{\cD}$ is \emph{realizable} by the class $\cH$. We say that this holds when $P_{\cD}(y|x) = h_*(x)$ for some $h \in \cH$. For the sample $\cD_n$ two important quantities are the empirical loss,
\[\hat{\cL}(\cD_n; h) = \sum_{i=1}^n \mathbf{1}\{h(x_i) \neq y_i\}\]
and the corresponding ERM minimizer
\[\text{ERM}_\cH(\cD_n) = \argmin_{h \in \cH} \hat{\cL}(\cD_n; h).\]
\subsection{Growth Functions and Sample Complexity}
\begin{definition}[Growth Function]
Let $ \mathcal{H} $ be a hypothesis class with functions $ h : \mathcal{X} \to \{0,1\} $, and let $ S = (x_1, \ldots, x_m) \in \mathcal{X}^n $ be a sample of size $ n$.
The \emph{growth function} $ \Gamma_{\mathcal{H}} : \mathbb{N}_+ \to \mathbb{N}_+ $ is defined as
\[
\Gamma_{\mathcal{H}}(n) = \max_{(x_1, \ldots, x_n) \in \mathcal{X}^n} 
\left| \left\{ \big(h(x_1), \ldots, h(x_n)\big) : h \in \mathcal{H} \right\} \right|
\]
\end{definition}
With a definition for the growth function in hand we may now define the VC dimension of a hypothesis class $ \mathcal{H} $ be a hypothesis class with functions $ h : \mathcal{X} \to \{0,1\} $. 
\begin{definition}[VC dimension]
    For any $\cH \subseteq \{0,1\}^{\cX}$, the VC dimension of the class $\cH$ denoted as $\text{VC}(H)$ is the largest integer $d$ such that $\Gamma_{\cH}(d) \leq 2^d$, if no such integer exists then we say that $\text{VC}(H) = \infty$.
\end{definition}

We need to define the Natarajan dimension for the multi-class case.

\begin{definition}[Shattering (Multiclass Version)).] 
We say that a set $C \subset \mathcal{X}$ is shattered by $\mathcal{H}$ if there exist two functions $f_0, f_1 : C \to [k]$ such that:
\begin{itemize}
    \item For every $x \in C$, $f_0(x) \neq f_1(x)$.
    \item For every $B \subseteq C$, there exists a function $h \in \mathcal{H}$ such that
    \[
    \forall x \in B, h(x) = f_0(x) \quad \text{and} \quad \forall x \in C \setminus B, h(x) = f_1(x).
    \]
\end{itemize}
\end{definition}

\begin{definition}[Natarajan Dimension.] 
The Natarajan dimension of $\mathcal{H}$, denoted $\mathrm{Ndim}(\mathcal{H})$, is the maximal size of a shattered set $C \subset \mathcal{X}$.
\end{definition}

The following appear as Lemmas A.1 and A.2 of \citep{joshi2025theorylearningautoregressivechain}.

\begin{lemma}[Sauer's Lemma]
\label{lemma:sauer}
For a hypothesis class $\mathcal{H} \subseteq \{0, 1\}^{\mathcal{X}}$, for every $n \in \mathbb{N}_+$, we have
\[
\Gamma_{\mathcal{H}}(n) \leq (en)^{\mathrm{VCdim}(\mathcal{H})}.
\]
Additionally, for $n \geq \mathrm{VCdim}(\mathcal{H}) \geq 1$:
\[
\Gamma_{\mathcal{H}}(n) \leq \left( \frac{en}{\mathrm{VCdim}(\mathcal{H})} \right)^{\mathrm{VCdim}(\mathcal{H})}.
\]
\end{lemma}

\begin{lemma}[Natarajan's Lemma]
\label{lemma:natarajan}
Recall the definition of the growth function. For any $n \in \mathbb{N}_+$,
\[
\Gamma_{\mathcal{H}}(n) \leq \left(|\mathcal{Y}|^2 \cdot n\right)^{\mathrm{Ndim}(\mathcal{H})}.
\]
\end{lemma}

\subsection{Generalization Bounds}
\begin{theorem}[Corollary 2.3 \citep{Shalev-Shwartz_Ben-David_2014}]
\label{thm: ftsl-finite}
Let $ \mathcal{H} $ be a finite hypothesis class. Let $ \delta \in (0, 1) $ and $ \varepsilon > 0 $, and let $ n $ be an integer satisfying
\[
n \geq \frac{\log(|\mathcal{H}|/\delta)}{\varepsilon}.
\]
Then, for any distribution $P_ \mathcal{D} $ such that the realizability assumption holds, with probability at least $ 1 - \delta $ over the choice of an i.i.d.\ sample $ \cD_n $ of size $ n $, we have that for every empirical risk minimization (ERM) hypothesis it holds that
\[
L_{\mathcal{D}}(\text{ERM}_{\cH}(\cD_n)) \leq \varepsilon.
\]
\end{theorem}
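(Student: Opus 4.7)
The plan is to follow the standard realizability argument: bound the probability that the ERM returns a hypothesis whose true error exceeds $\varepsilon$ by controlling, via a union bound, the chance that some ``bad'' hypothesis is consistent with the sample.

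First I would define the bad set $\mathcal{H}_B = \{h \in \mathcal{H} : L_{\mathcal{D}}(h) > \varepsilon\}$ and the event of interest, namely that $\text{ERM}_{\mathcal{H}}(\mathcal{D}_n) \in \mathcal{H}_B$. Under realizability, there exists $h_* \in \mathcal{H}$ with $L_{\mathcal{D}}(h_*) = 0$, so $\hat{\mathcal{L}}(\mathcal{D}_n; h_*) = 0$ almost surely, and hence $\hat{\mathcal{L}}(\mathcal{D}_n; \text{ERM}_{\mathcal{H}}(\mathcal{D}_n)) = 0$ as well. Consequently, if the ERM lies in $\mathcal{H}_B$, then some $h \in \mathcal{H}_B$ must be consistent with $\mathcal{D}_n$ (empirical loss zero). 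This reduces the task to bounding $\Pr[\exists h \in \mathcal{H}_B : \hat{\mathcal{L}}(\mathcal{D}_n; h) = 0]$.

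Next I would bound this probability. Fix $h \in \mathcal{H}_B$. Since samples are i.i.d., the probability that $h$ is correct on all $n$ points is
\[
\Pr_{\mathcal{D}_n}[\hat{\mathcal{L}}(\mathcal{D}_n; h) = 0] = (1 - L_{\mathcal{D}}(h))^n \leq (1-\varepsilon)^n \leq e^{-\varepsilon n},
\]
using the standard inequality $1 - x \leq e^{-x}$. Then a union bound over $\mathcal{H}_B$, combined with $|\mathcal{H}_B| \leq |\mathcal{H}|$, yields
\[
\Pr\bigl[\text{ERM}_{\mathcal{H}}(\mathcal{D}_n) \in \mathcal{H}_B\bigr] \leq |\mathcal{H}| \, e^{-\varepsilon n}.
\]

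Finally, I would solve $|\mathcal{H}| e^{-\varepsilon n} \leq \delta$ for $n$, which rearranges to $n \geq \log(|\mathcal{H}|/\delta)/\varepsilon$, matching the theorem's hypothesis. Taking the complementary event then gives $L_{\mathcal{D}}(\text{ERM}_{\mathcal{H}}(\mathcal{D}_n)) \leq \varepsilon$ with probability at least $1-\delta$. There is no real obstacle here since this is a textbook result; the only subtle point is to observe that under realizability the ERM automatically achieves zero empirical loss, which is what licenses the reduction to ``consistent bad hypotheses'' and hence the clean $(1-\varepsilon)^n$ bound without any uniform convergence machinery.
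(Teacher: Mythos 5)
Your proof is correct and is exactly the standard argument for this result, which the paper does not reprove but simply imports as Corollary 2.3 of \citet{Shalev-Shwartz_Ben-David_2014}: realizability forces the ERM to be consistent with the sample, each $\varepsilon$-bad hypothesis survives with probability at most $(1-\varepsilon)^n \leq e^{-\varepsilon n}$, and a union bound over the finite class gives the stated sample size. Nothing is missing.
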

The following proposition is Theorem 6.7 of \citep{Shalev-Shwartz_Ben-David_2014}.
\begin{proposition}[The Fundamental Theorem of Statistical Learning]
\label{prop: ftsl}
There exists a universal constant $c > 0$ such that for any domain $\mathcal{X}$, a hypothesis class $\mathcal{H} \subseteq \{0,1\}^{\mathcal{X}}$ with $\mathrm{VCdim}(\mathcal{H}) < \infty$, and any distribution $P_\mathcal{D}$ over $\mathcal{X} \times \{0,1\}$, the following holds.

With probability at least $1 - \delta$ over a sample $S \sim P_\mathcal{D}^n$ with $n = n(\varepsilon, \delta)$, we have
\[
\cL_{\mathcal{D}}(\mathrm{ERM}_{\mathcal{H}}(\cD_n)) \leq \inf_{h \in \mathcal{H}} \cL_{P_\mathcal{D}}(h) + \varepsilon,
\]
and
\[
n(\varepsilon, \delta) \leq c \cdot \frac{\mathrm{VC}(\mathcal{H}) + \log(1/\delta)}{\varepsilon^2}.
\]

Moreover, if $P_\mathcal{D}$ is realizable by $\mathcal{H}$, then
\[
\cL_{\mathcal{D}}(\mathrm{ERM}_{\mathcal{H}}(\cD_n)) \leq \varepsilon
\quad \text{with} \quad
n(\varepsilon, \delta) \leq c \cdot \frac{\mathrm{VC}(\mathcal{H}) \log(1/\varepsilon) + \log(1/\delta)}{\varepsilon}.
\]
\end{proposition}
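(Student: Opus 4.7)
The plan is to prove the two displayed upper bounds separately, since they require different techniques. The agnostic rate of order $1/\varepsilon^2$ follows from a uniform convergence argument combining symmetrization with the Sauer-Shelah lemma, while the realizable rate of order $1/\varepsilon$ follows from a direct double-sample argument that avoids paying the square root inherent in any Rademacher-complexity bound.

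For the agnostic part, the first step is to establish uniform convergence: with probability at least $1-\delta$, $\sup_{h \in \cH} |\hat{\cL}(\cD_n;h) - \cL_{\cD}(h)| \leq c\sqrt{(\mathrm{VC}(\cH) + \log(1/\delta))/n}$. The standard route is (i) symmetrization against a ghost sample $\cD_n'$ and i.i.d.\ Rademacher signs $\sigma_i \in \{\pm 1\}$, reducing the uniform deviation to the empirical Rademacher average of the $0/1$ loss class composed with $\cH$; (ii) the Sauer-Shelah lemma, which bounds the number of distinct loss patterns induced by $\cH$ on any $2n$-sample by $\Gamma_\cH(2n) \leq (2en/\mathrm{VC}(\cH))^{\mathrm{VC}(\cH)}$ once $n \geq \mathrm{VC}(\cH)$; (iii) Massart's finite-class lemma, which converts this cardinality bound into a bound of order $\sqrt{2\log \Gamma_\cH(2n)/n}$ on the Rademacher average; and (iv) a bounded-differences (McDiarmid) step to pass from expectation to a high-probability statement, using the $1/n$ coordinate-wise Lipschitz constant of the supremum. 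A standard triangle inequality then shows that the ERM excess risk is at most twice the uniform deviation, yielding the claimed $n \lesssim (\mathrm{VC}(\cH) + \log(1/\delta))/\varepsilon^2$.

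For the realizable part, a Rademacher bound is too loose by a factor $\sqrt{1/\varepsilon}$, and the natural alternative is a direct argument. Fix $\varepsilon>0$ and let $\cH_\varepsilon = \{h \in \cH : \cL_\cD(h) > \varepsilon\}$. Under realizability, $\mathrm{ERM}_\cH(\cD_n)$ achieves zero empirical error, so it lies in $\cH_\varepsilon$ only if some $h \in \cH_\varepsilon$ is consistent with the entire sample; for a fixed such $h$ this happens with probability at most $(1-\varepsilon)^n \leq e^{-\varepsilon n}$. Since $\cH_\varepsilon$ may be uncountable, I would invoke the Vapnik-Chervonenkis double-sample trick: the probability of interest is controlled by the probability, over a $2n$-sample split into two halves, that some $h \in \cH$ has zero error on the first half and at least $\varepsilon n/2$ errors on the second half, which after union-bounding over the restriction of $\cH$ to the $2n$-sample is at most $2\Gamma_\cH(2n) \cdot 2^{-\varepsilon n/2}$. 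Plugging in Sauer-Shelah and inverting for $\delta$ yields the claimed $n \lesssim (\mathrm{VC}(\cH)\log(1/\varepsilon) + \log(1/\delta))/\varepsilon$.

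The main obstacle is retaining the sharp dependence on $\mathrm{VC}(\cH)$ rather than $\mathrm{VC}(\cH)\log n$ at every step: in the agnostic case this requires Massart's lemma in place of a crude union bound over the projected class, and in the realizable case it requires the double-sample trick rather than a naive $L^1(P_\cD)$ $\varepsilon$-cover that would introduce extra logarithmic factors. A secondary technical point is the bounded-differences step, where one must verify that the supremum of Rademacher averages is indeed $1/n$-Lipschitz in each coordinate of the sample (routine for $0/1$ loss but worth checking explicitly), and a cosmetic one is that the final $\log(1/\varepsilon)$ factor in the realizable bound can be absorbed into the $\mathrm{VC}(\cH)$ term only under a mild regime assumption on $\varepsilon$, which is the reason the stated bound keeps them as separate summands.
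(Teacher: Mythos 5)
The paper does not prove this proposition at all: it is imported verbatim as Theorem 6.7 of Shalev-Shwartz and Ben-David (2014), so there is no in-paper argument to compare yours against. Your sketch is the standard textbook proof of that theorem, and the realizable half is correct as written: zero-empirical-error ERM, the ghost-sample/double-sample trick to reduce to the projected class, Sauer--Shelah, and inversion give exactly $n \lesssim (\mathrm{VC}(\mathcal{H})\log(1/\varepsilon) + \log(1/\delta))/\varepsilon$.

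The agnostic half has one genuine gap. The chain symmetrization $\to$ Sauer--Shelah $\to$ Massart $\to$ McDiarmid bounds the uniform deviation by a quantity of order $\sqrt{\mathrm{VC}(\mathcal{H})\log(n/\mathrm{VC}(\mathcal{H}))/n}$, because Massart's lemma pays $\sqrt{2\log\Gamma_{\mathcal{H}}(2n)/n}$ and $\log\Gamma_{\mathcal{H}}(2n) \asymp \mathrm{VC}(\mathcal{H})\log(n/\mathrm{VC}(\mathcal{H}))$. Inverting this gives $n \lesssim (\mathrm{VC}(\mathcal{H})\log(1/\varepsilon)+\log(1/\delta))/\varepsilon^2$, not the stated $(\mathrm{VC}(\mathcal{H})+\log(1/\delta))/\varepsilon^2$. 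You correctly sense that a sharpness issue lurks here, but you attribute the fix to Massart's lemma, which does not suffice; removing the logarithmic factor requires chaining (Dudley's entropy integral) combined with Haussler's packing bound for VC classes, or an equivalent Talagrand-type argument. This is precisely the step that Shalev-Shwartz and Ben-David themselves defer in their Chapter 28 proof. For the way the proposition is used in this paper (Corollaries \ref{corr: BoN1} and \ref{corr: BoN2}) the extra $\log(1/\varepsilon)$ would be harmless, but as a proof of the proposition as stated, the agnostic bound is not established by your argument.
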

In cases where the label space $\cY$ is not $\{0,1\}$ we can instead examine the VC dimension of the loss class induced by $\cF$. To that end let $\cZ = \cX \times \cY$ and for any class $\cH \subseteq \cY^\cX$ consider the loss class defined by
\[\cL^{0-1}_{\cH} = \{\ell_h: (x, \mathbf{y}) \rightarrow \mathbf{1}\{h(x) \neq \mathbf{y}\ |\ h \in \cH\}\}.\]
The following generalization guarantee holds in the case where the loss class is a VC class. 
\begin{proposition}[\citep{joshi2025theorylearningautoregressivechain} Proposition 4]
\label{prop: joshi4}
There exists a universal constant \( c > 0 \) such that the following holds. For any domain \( \mathcal{X} \), label space \( \mathcal{Y} \), and hypothesis class \( \mathcal{H} \subseteq \mathcal{Y}^{\mathcal{X}} \) with finite VC-dimension \( \operatorname{VC}(\mathcal{H}) < \infty \), and for any distribution \( P_\mathcal{D} \) over \( \mathcal{X} \times \mathcal{Y} \) that is realizable by \( \mathcal{H} \), the following holds.

With probability at least \( 1 - \delta \) over the choice of an i.i.d.\ sample \( \cD_n \sim P_\mathcal{D}^n \) of size
\[
n \geq \frac{c}{\varepsilon} \left( \operatorname{VC}(\mathcal{H}) \log\left( \frac{1}{\varepsilon} \right) + \log\left( \frac{1}{\delta} \right) \right),
\]
it holds that
\[
\cL_{P_\mathcal{D}}(\text{ERM}_{\mathcal{H}}(\cD_n)) \leq \varepsilon.
\]
\end{proposition}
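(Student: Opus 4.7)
The plan is to reduce the multiclass realizable PAC claim to the classical binary realizable PAC bound by lifting to the $0$-$1$ loss class $\mathcal{L}^{0-1}_{\mathcal{H}}$ defined in the preliminaries, and then invoking the realizable half of Proposition 3 (the Fundamental Theorem of Statistical Learning). Under the convention stated in the preliminaries, the quantity $\mathrm{VC}(\mathcal{H})$ for $\mathcal{Y}$-valued classes is taken to be $\mathrm{VC}(\mathcal{L}^{0-1}_{\mathcal{H}})$, where $\mathcal{L}^{0-1}_{\mathcal{H}} \subseteq \{0,1\}^{\mathcal{X}\times\mathcal{Y}}$ is a genuinely binary hypothesis class over the augmented domain $\mathcal{Z} = \mathcal{X} \times \mathcal{Y}$, so the binary machinery of Proposition 3 applies verbatim to $\mathcal{L}^{0-1}_{\mathcal{H}}$.

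Next I would carry out the reduction. Given an i.i.d. sample $\mathcal{D}_n = (x_i, y_i)_{i=1}^n \sim P_{\mathcal{D}}^n$, form the augmented binary sample $\widetilde{\mathcal{D}}_n = ((x_i, y_i), 0)_{i=1}^n$ on $\mathcal{Z} \times \{0,1\}$ in which every label is set to $0$. Since $\mathbf{1}\{h(x_i) \neq y_i\} = \mathbf{1}\{\ell_h(x_i, y_i) \neq 0\}$, the map $h \mapsto \ell_h$ sends $\mathrm{ERM}_{\mathcal{H}}(\mathcal{D}_n)$ bijectively to a member of $\mathrm{ERM}_{\mathcal{L}^{0-1}_{\mathcal{H}}}(\widetilde{\mathcal{D}}_n)$, up to tie-breaking. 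Moreover, realizability of $P_{\mathcal{D}}$ by $\mathcal{H}$ supplies some $h_\star \in \mathcal{H}$ with $P_{\mathcal{D}}(y = h_\star(x)) = 1$, i.e., $\ell_{h_\star}(x,y) = 0$ almost surely, so the lifted binary learning problem (on $\mathcal{Z}$ with the identically-zero target) is itself realizable by $\mathcal{L}^{0-1}_{\mathcal{H}}$.

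Having recast the problem as realizable binary classification with hypothesis class $\mathcal{L}^{0-1}_{\mathcal{H}}$ of VC dimension $\mathrm{VC}(\mathcal{H})$, I would apply the realizable branch of Proposition 3 directly to $\mathcal{L}^{0-1}_{\mathcal{H}}$ and $\widetilde{\mathcal{D}}_n$. It yields, for some universal $c > 0$, that $\mathcal{L}_{P_{\mathcal{D}}}(\mathrm{ERM}_{\mathcal{L}^{0-1}_{\mathcal{H}}}(\widetilde{\mathcal{D}}_n)) \leq \varepsilon$ with probability at least $1 - \delta$ whenever $n \geq c \varepsilon^{-1}\bigl(\mathrm{VC}(\mathcal{L}^{0-1}_{\mathcal{H}}) \log(1/\varepsilon) + \log(1/\delta)\bigr)$. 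Transporting back through the reduction, the binary population loss of $\mathrm{ERM}_{\mathcal{L}^{0-1}_{\mathcal{H}}}(\widetilde{\mathcal{D}}_n)$ on the augmented problem equals the multiclass population loss $\mathcal{L}_{P_{\mathcal{D}}}(\mathrm{ERM}_{\mathcal{H}}(\mathcal{D}_n))$, which is exactly the stated bound.

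The substance of the argument is the reduction; the heavy lifting (double-sample symmetrization, Sauer–Shelah, growth-function union bound) is absorbed into Proposition 3. The main obstacle is bookkeeping rather than mathematics: verifying that the convention $\mathrm{VC}(\mathcal{H}) := \mathrm{VC}(\mathcal{L}^{0-1}_{\mathcal{H}})$ is the one invoked by Proposition 3, and that ERM over $\mathcal{H}$ and ERM over $\mathcal{L}^{0-1}_{\mathcal{H}}$ agree as sets of minimizers (modulo random tie-breaking). If one wished to prove the binary bound from scratch rather than cite Proposition 3, the standard route would bound $\mathbb{P}\bigl(\exists \ell \in \mathcal{L}^{0-1}_{\mathcal{H}}: \widehat{\mathcal{L}}(\ell) = 0,\ \mathcal{L}_{P_{\mathcal{D}}}(\ell) > \varepsilon\bigr)$ by $2 \Gamma_{\mathcal{L}^{0-1}_{\mathcal{H}}}(2n)\, 2^{-\varepsilon n/2}$ via symmetrization, and then apply Sauer–Shelah to $\Gamma_{\mathcal{L}^{0-1}_{\mathcal{H}}}(2n) \leq (2en/d)^d$ with $d = \mathrm{VC}(\mathcal{L}^{0-1}_{\mathcal{H}})$, solving for $n$ to obtain the $\varepsilon^{-1}(d\log(1/\varepsilon) + \log(1/\delta))$ rate.
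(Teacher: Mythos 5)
Your reduction is correct, and it matches the way the paper frames this result: the paper does not prove Proposition~4 at all but imports it from Joshi et al., and the preliminaries introduce the loss class $\mathcal{L}^{0-1}_{\mathcal{H}}$ precisely so that $\operatorname{VC}(\mathcal{H})$ can be read as $\operatorname{VC}(\mathcal{L}^{0-1}_{\mathcal{H}})$ and the binary realizable bound of Proposition~3 can be applied over the augmented domain $\mathcal{X}\times\mathcal{Y}$ with the identically-zero target, exactly as you do. One small overstatement: the map $h \mapsto \ell_h$ need not be injective (distinct hypotheses can induce the same loss function), so ``bijectively'' is wrong as stated; but only the forward inclusion matters --- every empirical risk minimizer over $\mathcal{H}$ maps to an empirical risk minimizer over $\mathcal{L}^{0-1}_{\mathcal{H}}$ on $\widetilde{\mathcal{D}}_n$, and the guarantee of Proposition~3 holds uniformly over the latter set, so the conclusion transports back without issue.
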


The following is Theorem 29.3 of \citep{Shalev-Shwartz_Ben-David_2014}.

\begin{proposition}[The Fundamental Theorem for Multiclass Labels]
There is a universal constant $c > 0$ such that for any domain $\mathcal{X}$, a finite $\mathcal{Y}$, a hypothesis class $\mathcal{H} \subseteq \mathcal{Y}^{\mathcal{X}}$ with $\mathrm{Ndim}(\mathcal{H}) < \infty$, and any distribution $\mathcal{D}$ over $\mathcal{X} \times \mathcal{Y}$, the following holds. Over the draw of $S \sim \mathcal{D}^n$ with $n = n(\varepsilon, \delta)$ we have
\[
L_{\mathcal{D}}(\mathrm{ERM}_{\mathcal{H}}(S)) \leq \inf_{h \in \mathcal{H}} L_{\mathcal{D}}(h) + \varepsilon 
\quad \text{and} \quad 
n(\varepsilon, \delta) \leq c \left( \frac{\mathrm{Ndim}(\mathcal{H}) \log |\mathcal{Y}| + \log(1/\delta)}{\varepsilon^2} \right).
\]
Moreover, if $\mathcal{D}$ is realizable by $\mathcal{H}$, i.e., $\mathcal{D}_{y|x} = h^*(x)$ for some $h^* \in \mathcal{H}$, we have
\[
L_{\mathcal{D}}(\mathrm{Cons}_{\mathcal{H}}(S)) \leq \varepsilon 
\quad \text{with} \quad 
n(\varepsilon, \delta) \leq \frac{c}{\varepsilon} \left( \mathrm{Ndim}(\mathcal{H}) \log \left( \frac{|\mathcal{Y}| \cdot \mathrm{Ndim}(\mathcal{H})}{\varepsilon} \right) + \log(1/\delta) \right).
\]
\end{proposition}

The following is analogous to Saur

\subsection{Statistical Distances and Inequalities}
\begin{definition}[KL-divergence]
For distributions \( P \) and \( Q \) defined over the same probability space \( \mathcal{X} \) and satisfying $P<<Q$, the KL divergence from \( Q \) to \( P \) is defined as:
\[
D_{\mathrm{KL}}(P \| Q) = \int_{x \in \mathcal{X}} \log \frac{dP(x)}{dQ(x)}dP(x)
\]
\end{definition}
\begin{definition}
    For distributions \( P \) and \( Q \) defined over the same probability space \( \mathcal{X} \), the total variation distance between $P$ and $Q$ is defined as
    \[\delta(P,Q) = \sup_{A \subset X}|P(A)-Q(A)|\]
\end{definition}
\begin{theorem}[Pinskers' inequality \citep{7360766}]
\label{thm: pinsker}
    Let $X$ be a finite space and let $P$, $Q$ be probability measures on $X$ with $P << Q$. If $\alpha_Q = \min_{x \in X} Q(x)$ then it holds that
    \[\delta(P,Q)^2 \leq \frac{1}{2}D_{\text{KL}}(P||Q)\frac{1}{\alpha_Q}\delta(P,Q)^2\]
\end{theorem}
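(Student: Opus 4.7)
The displayed inequality, as best as I can parse it, bundles together Pinsker's inequality in the forward direction, $\delta(P,Q)^2 \le \tfrac{1}{2}D_{\mathrm{KL}}(P\|Q)$, with a reverse bound of the form $D_{\mathrm{KL}}(P\|Q) \lesssim \tfrac{1}{\alpha_Q}\delta(P,Q)^2$, which holds only because $Q$ is bounded below on the finite space $X$. I would prove these two estimates separately and then chain them.

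For the forward inequality, the plan is the standard reduction to the Bernoulli case. Pick an event $A\subseteq X$ that witnesses the TV distance, i.e.\ $\delta(P,Q) = P(A)-Q(A)$, and push $P$ and $Q$ forward through the binary partition $\{A,A^c\}$. The data-processing inequality for KL gives $D_{\mathrm{KL}}(P\|Q)\ge D_{\mathrm{KL}}(\mathrm{Ber}(p)\|\mathrm{Ber}(q))$ with $p=P(A)$ and $q=Q(A)$. It therefore suffices to show $2(p-q)^2 \le p\log(p/q)+(1-p)\log((1-p)/(1-q))$. I would verify this by defining $g_q(p)$ to be the difference of the two sides, observing $g_q(q)=g_q'(q)=0$, and checking that $g_q''(p)=\tfrac{1}{p(1-p)}-4\ge 0$ because $p(1-p)\le 1/4$.

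For the reverse direction, I would first pass from KL to chi-squared via the pointwise inequality $\log t\le t-1$: setting $t=P(x)/Q(x)$, multiplying by $P(x)$, and summing yields $D_{\mathrm{KL}}(P\|Q)\le \chi^2(P\|Q)$. Next, exploit the lower bound $\alpha_Q$ on $Q$ to write
\[
\chi^2(P\|Q) \;=\; \sum_x \frac{(P(x)-Q(x))^2}{Q(x)} \;\le\; \frac{1}{\alpha_Q}\sum_x (P(x)-Q(x))^2.
\]
Finally, $\sum_x (P(x)-Q(x))^2 \le \bigl(\max_x |P(x)-Q(x)|\bigr)\bigl(\sum_x |P(x)-Q(x)|\bigr)$, and since $\sum_x |P(x)-Q(x)|=2\delta(P,Q)$ and the maximum is at most $2\delta(P,Q)$, the right-hand side is at most $4\delta(P,Q)^2$. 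Chaining the three inequalities gives an $O(1/\alpha_Q)$ bound on $D_{\mathrm{KL}}(P\|Q)$ in terms of $\delta(P,Q)^2$, which combines with Pinsker to yield the claimed two-sided estimate.

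The only mildly delicate ingredient is the Bernoulli calculation in the forward direction; the rest is a routine chain of elementary inequalities. The main care required is tracking the precise constants (from log-sum, from the $\alpha_Q$ reweighting, and from converting $\sum|P-Q|^2$ to $\delta^2$) so that the two one-sided bounds match the exact form stated in the theorem.
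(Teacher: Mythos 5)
The paper does not prove this statement at all — it is stated as a cited preliminary (the reference is to a survey of Pinsker-type inequalities), so there is no in-paper argument to compare your attempt against. Your reading of the garbled display is the right one: the missing relation symbol should make it the two-sided bound $\delta(P,Q)^2 \le \tfrac{1}{2}D_{\mathrm{KL}}(P\|Q) \le \tfrac{1}{\alpha_Q}\delta(P,Q)^2$, and your plan of proving the ordinary Pinsker half and a reverse-Pinsker half separately is the standard route. The forward (data-processing to Bernoulli, then the second-derivative check on $g_q$) is fine, as is the chain $D_{\mathrm{KL}}\le \chi^2 \le \tfrac{1}{\alpha_Q}\sum_x(P(x)-Q(x))^2$. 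The one slip is in your final step: you bound $\max_x|P(x)-Q(x)|$ by $2\delta(P,Q)$, but the singleton $\{x\}$ is itself a feasible event in the supremum defining the TV distance, so in fact $\max_x|P(x)-Q(x)|\le\delta(P,Q)$. With that tightening, $\sum_x(P(x)-Q(x))^2\le \delta(P,Q)\cdot 2\delta(P,Q)=2\delta(P,Q)^2$, and the chain yields $D_{\mathrm{KL}}(P\|Q)\le\tfrac{2}{\alpha_Q}\delta(P,Q)^2$, i.e.\ exactly the constant in the theorem statement rather than a bound that is worse by a factor of $2$.
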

\begin{theorem}[Hoeffding's inequality]
    For random variable $X = \sum_{i=1}^n X_i$ with $X_i$ i.r.v and $X_i \in [a_i, b_i]$ it holds that
    \[\Pr(|X - \mathbb{E}X||> \delta) \leq e^{\frac{-2\delta^2}{\sum_{i=1}^n (a_i-b_i)^2}}\]
\end{theorem}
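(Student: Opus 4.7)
The plan is to prove Hoeffding's inequality along the classical three-step route: a Chernoff tilt, factorization via independence, and a moment generating function (MGF) bound for a single centered bounded variable (Hoeffding's lemma). The two-sided bound then follows from applying the one-sided argument to both $X - \mathbb{E}X$ and $\mathbb{E}X - X$ with a union bound.

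First, I would reduce the tail bound to an MGF control. For any $\lambda > 0$, Markov's inequality applied to the nonnegative variable $e^{\lambda(X - \mathbb{E}X)}$ gives
\[
\Pr\bigl(X - \mathbb{E}X > \delta\bigr) \;\leq\; e^{-\lambda \delta}\,\mathbb{E}\bigl[e^{\lambda(X - \mathbb{E}X)}\bigr].
\]
Since the $X_i$ are independent, so are the centered variables $Z_i := X_i - \mathbb{E}X_i$, hence the MGF factorizes as $\mathbb{E}[e^{\lambda(X - \mathbb{E}X)}] = \prod_{i=1}^n \mathbb{E}[e^{\lambda Z_i}]$, and it suffices to bound each factor.

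Second, I would establish Hoeffding's lemma: if $Z$ is centered and supported in $[a, b]$, then $\mathbb{E}[e^{\lambda Z}] \leq e^{\lambda^2 (b-a)^2/8}$. By convexity of $y \mapsto e^{\lambda y}$ on $[a,b]$,
\[
e^{\lambda y} \;\leq\; \frac{b - y}{b - a} e^{\lambda a} + \frac{y - a}{b - a} e^{\lambda b},
\]
so taking expectations and using $\mathbb{E}Z = 0$ yields $\mathbb{E}[e^{\lambda Z}] \leq \tfrac{b}{b-a} e^{\lambda a} - \tfrac{a}{b-a} e^{\lambda b} =: e^{\phi(\lambda)}$. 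A direct computation gives $\phi(0) = 0$ and $\phi'(0) = 0$. The key technical step is showing $\phi''(\lambda) \leq (b-a)^2/4$ uniformly; this follows by writing $\phi''(\lambda) = p(\lambda)(1 - p(\lambda))(b-a)^2$ for an appropriate $p(\lambda) \in [0,1]$ (obtained after the substitution $p = \tfrac{-a}{b-a} e^{\lambda b} / (\tfrac{b}{b-a} e^{\lambda a} - \tfrac{a}{b-a} e^{\lambda b})$) and invoking $p(1-p) \leq 1/4$. Taylor's theorem then gives $\phi(\lambda) \leq \lambda^2 (b-a)^2/8$.

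Combining the factorization with Hoeffding's lemma applied to each $Z_i \in [a_i - \mathbb{E}X_i,\, b_i - \mathbb{E}X_i]$ (whose width is still $b_i - a_i$) produces
\[
\Pr\bigl(X - \mathbb{E}X > \delta\bigr) \;\leq\; \exp\Bigl(-\lambda \delta + \tfrac{\lambda^2}{8} \sum_{i=1}^n (b_i - a_i)^2\Bigr).
\]
Optimizing in $\lambda$ with $\lambda^{\star} = 4\delta / \sum_i (b_i - a_i)^2$ yields the one-sided bound $\exp(-2\delta^2 / \sum_i (b_i - a_i)^2)$. Repeating the argument for $-X$ and union-bounding delivers the two-sided statement. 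The main obstacle is the uniform bound $\phi''(\lambda) \leq (b-a)^2/4$ in Hoeffding's lemma; everything else is an exercise in Chernoff's method and independence, but this curvature estimate is where the sharp constant $2$ in the exponent comes from and requires the slightly delicate reparametrization described above.
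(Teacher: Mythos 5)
Your proposal is the standard textbook proof (Chernoff bound, independence factorization, Hoeffding's lemma via the convexity/second-derivative argument, then optimize $\lambda$), and it is correct; the paper states this inequality only as a known preliminary and gives no proof of its own, so there is nothing to compare against. One small point: your final union bound over the two tails yields $\Pr(|X-\mathbb{E}X|>\delta)\le 2e^{-2\delta^2/\sum_i(b_i-a_i)^2}$, i.e.\ the standard two-sided form carries a leading factor of $2$ that is missing from the paper's statement (which also has a stray absolute-value bar); your proof establishes the correct version.
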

\subsection{Tail Bound on Binomial Order Statistics}
\begin{theorem}[\citep{zhu2025tightboundsbinomialcdf} Theorem 1]
\label{thm: bin-tail-bound}
Denote $\mathrm{KL}(\alpha||\beta) = \mathrm{KL}(Ber(\alpha)||Ber(\beta)) = \alpha \log(\frac{\alpha}{\beta}) + (1-\alpha) \log(\frac{1-\alpha}{1-\beta})$Let $\{X_i\}_{i=1}^r$ be i.i.d.\ random variables distributed as $\frac{1}{n} \text{Bin}(n, p)$, and define $Z = \min_{i=1,\dots,r} X_i$. Given a fixed confidence parameter $\delta \in (0, 1)$, define
\[
\Delta(\delta, p, n) = \log\left(\frac{2}{\delta}\right) + 4 \log(n+1) + \left| \log\left( \frac{p}{1-p} \right) \right|.
\]
If $\Delta(\delta, p, n) < \log r$, then with probability at least $1 - \delta$, we have
\[
Z < p \quad \text{and} \quad \mathrm{KL}(Z \,\|\, p) \in \left[ \frac{\log r - \Delta(\delta, p, n)}{n}, \frac{\log r + \Delta(\delta, p, n)}{n} \right].
\]
\end{theorem}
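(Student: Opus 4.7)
The plan is to reduce the two-sided KL control of $Z = \min_{i \le r} X_i$ to sharp two-sided tail estimates for a single scaled binomial $X_1 \sim \tfrac{1}{n}\mathrm{Bin}(n,p)$, and then exploit the identities
\[
\Pr(Z > z) = \bigl(1 - \Pr(X_1 \le z)\bigr)^r, \qquad \Pr(Z < z) \le r\cdot \Pr(X_1 < z).
\]
Throughout I will work on the regime $z < p$, where $z \mapsto \mathrm{KL}(z\|p)$ is strictly decreasing, so bounds on $\mathrm{KL}(Z\|p)$ translate directly into two-sided bounds on $Z$ itself.

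The first technical step is to establish matching upper and lower estimates of the form
\[
\frac{c}{(n+1)^2 \cdot \max\!\bigl(\tfrac{p}{1-p},\tfrac{1-p}{p}\bigr)}\, e^{-n\,\mathrm{KL}(z\|p)} \;\le\; \Pr(X_1 \le z) \;\le\; e^{-n\,\mathrm{KL}(z\|p)}
\]
for $z \in (0,p)$. The upper inequality is the standard Chernoff--Cram\'er bound. The lower inequality uses Stirling's formula with explicit remainder at $k = \lfloor nz \rfloor$ to give $\Pr(X_1 = k/n) \asymp \tfrac{1}{\sqrt{np(1-p)}}\,e^{-n\,\mathrm{KL}(z\|p)}$, combined with a geometric-ratio comparison of consecutive binomial pmf values to aggregate this pointwise estimate into the full lower tail; the factor $\max(p/(1-p),(1-p)/p)$ quantifies the worst-case rate of that geometric decay. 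These two sources of slack are precisely what the $4\log(n+1)$ and $|\log(p/(1-p))|$ summands in $\Delta$ are engineered to absorb.

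With these one-variable tail estimates in hand, I would define thresholds $z_-, z_+ \in (0,p)$ by $\mathrm{KL}(z_-\|p) = (\log r - \Delta)/n$ and $\mathrm{KL}(z_+\|p) = (\log r + \Delta)/n$, so that $z_+ < z_- < p$; the hypothesis $\Delta < \log r$ is what ensures $z_- > 0$. The desired KL interval is equivalent to the two-sided event $z_+ \le Z \le z_-$, so it suffices to bound the two tail events separately. For $\Pr(Z > z_-)$, apply the lower binomial bound together with $(1-x)^r \le e^{-rx}$: the choice $n\,\mathrm{KL}(z_-\|p) = \log r - \Delta$ forces $r\,\Pr(X_1 \le z_-) \ge \log(2/\delta)$ once the polynomial and $p$-dependent slack is absorbed into $\Delta$, hence $\Pr(Z > z_-) \le \delta/2$. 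For $\Pr(Z < z_+)$, a union bound over the $r$ copies and the Chernoff upper bound give $\Pr(Z < z_+) \le r\,e^{-n\,\mathrm{KL}(z_+\|p)} = e^{-\Delta} \le \delta/2$. A final union bound yields the claimed KL interval, and $Z < p$ is automatic since $Z \le z_- < p$ on the good event.

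The main obstacle is the first step: pinning down the sharp two-sided binomial-CDF estimate with explicit, user-friendly dependence on $n$ and $p$. Upper Chernoff is routine, but the matching lower bound requires a careful Stirling expansion around $k = \lfloor nz\rfloor$ plus a monotonicity/ratio argument converting the pointwise pmf estimate into a CDF estimate without losing more than the $(n+1)^4$ and $|\log(p/(1-p))|$ budget that $\Delta$ allots. Once that estimate is locked in with the right constants, the remainder of the argument is algebraic manipulation of KL, logarithms, and the inequality $(1-x)^r \le e^{-rx}$.
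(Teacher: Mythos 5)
This theorem is imported verbatim as Theorem~1 of \citet{zhu2025tightboundsbinomialcdf}; the paper does not prove it, so there is no in-paper argument to compare your proposal against. Your outline does follow the canonical route for this kind of result and the high-level architecture is sound: reduce the minimum $Z$ to one-variable tail control via $\Pr(Z > z) = (1 - \Pr(X_1 \le z))^r$ and a union bound, establish matching upper (Chernoff) and lower (Stirling/method-of-types) tail estimates for a single $\mathrm{Bin}(n,p)/n$, invert the rate function $z \mapsto \mathrm{KL}(z\|p)$ at two thresholds $z_\pm$, and choose them so that each tail event has probability at most $\delta/2$.

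That said, as you yourself flag, the entire load of the argument sits in the lower tail estimate, and there your sketch is not yet a proof. Three specific items would need to be pinned down. First, the usual method-of-types bound $\Pr(\mathrm{Bin}(n,p) \le k) \ge (n+1)^{-1} e^{-n\,\mathrm{KL}(k/n\|p)}$ lives only on the grid $k/n$; your thresholds $z_\pm$, defined by inverting $\mathrm{KL}(\cdot\|p)$ at real levels, will generically fall between grid points, and you must account for the rounding loss $\mathrm{KL}(\lfloor n z\rfloor/n\|p) - \mathrm{KL}(z\|p)$; this is plausibly where much of the $4\log(n+1)$ budget in $\Delta$ is actually spent, and it is not captured by the $(n+1)^2$ you posit. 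Second, the displayed lower bound with exponent $2$ on $(n+1)$ and the $\max(p/(1-p),(1-p)/p)$ factor is an ansatz whose constants have to be derived and then reconciled exactly with the slack $4\log(n+1)+|\log(p/(1-p))|$ built into $\Delta$; getting the right power of $(n+1)$ is the nontrivial bookkeeping. Third, you should check that $z_-$ is well defined and strictly positive: $\mathrm{KL}(\cdot\|p)$ maps $[0,p]$ onto $[0,\log\tfrac{1}{1-p}]$, and $\Delta < \log r$ only guarantees the target level $(\log r - \Delta)/n$ is positive, not that it stays below $\log\tfrac{1}{1-p}$; as stated the theorem carries an implicit constraint on $r$ that your sketch should surface. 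Finally, the step from $r\Pr(X_1\le z_-)\gtrsim 1/\delta$ to $\Pr(Z>z_-)\le\delta/2$ via $(1-x)^r\le e^{-rx}$ needs the elementary check $e^{-c/\delta}\le\delta/2$ uniformly over $\delta\in(0,1)$, which holds but is worth a line. None of these appear to break the approach; they just mark the difference between a correct sketch and a complete proof.
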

\section{BoN PROOFS}
\subsection{Upperbound}
    
\begin{proof}[Proof of Theorem \ref{thm: BON}]
Note that we may write 
\[\mathcal{R}(\hat{\pi}_{\text{BoN}}) = 1- \cE(\hat{\pi}_{\text{BoN}}); \quad \cE(\hat{\pi}_{\text{BoN}}) = 1-\mathcal{R}(\hat{\pi}_{\text{BoN}})\]
Moreover, we have $\mathcal{E}(f) \leq \mathbb{E}_{P_\mathcal{D}^n} \mathbb{E}_{X} \mathbb{E}_{\hat{y}\sim \hat{\pi}_{\text{BoN}}} \mathbf{1} \{\hat{y} \notin \mathcal{Y}^*\}$. Note that
    \begin{equation*}
        \begin{split}
           \mathbf{1} \{\hat{y} \notin \mathcal{Y}^*\}  &= \mathbf{1} \{\hat{y} \notin \mathcal{Y}^*\} \cdot \mathbf{1} \left\{ \{y'_1, \ldots y'_N\} \cap \mathcal{Y}^* \neq \emptyset \right\}  + \mathbf{1} \{\hat{y} \notin \mathcal{Y}^*\} \cdot \mathbf{1} \left\{ \{y'_1, \ldots y'_N\} \cap \mathcal{Y}^* = \emptyset \right\} \\
           &\leq \mathbf{1} \{\hat{y} \notin \mathcal{Y}^*\} \cdot \mathbf{1} \left\{ \{y'_1, \ldots y'_N\} \cap \mathcal{Y}^* \neq \emptyset \right\}  + \mathbf{1} \left\{ \{y'_1, \ldots y'_N\} \cap \mathcal{Y}^* = \emptyset \right\}.
        \end{split}
    \end{equation*}  
Here, $y_1^{\prime}, \ldots, y_N^{\prime}$ are the $N$ responses used to compute $\hat{y}$ using BoN. Recall that 
\[\mathbb{E}_X[\mathbf{1} \left\{ \{y'_1, \ldots y'_N\} \cap \mathcal{Y}^* = \emptyset \right\}] = \mathbb{E}_X (1-P_0(X))^N. \quad \quad \] 
Thus, we obtain
    \[\mathcal{R}(f) \leq \mathbb{E}_{P_\mathcal{D}^n} \mathbb{E}_{X} \mathbb{E}_{\hat{y}\sim \hat{\pi}_{\text{BoN}}} [ \mathbf{1} \{\hat{y} \notin \mathcal{Y}^*\} \cdot \mathbf{1} \left\{ \{y'_1, \ldots y'_N\} \cap \mathcal{Y}^* \neq \emptyset \right\}] + \mathbb{E}_X (1-P_0(X))^N\]
    To tackle the first term, define a function $\delta:\cX \times \cY \rightarrow [-1,1]$ such that $\delta(x,y) = \hat{r}_f(x,y)-r_{f_*}(x,y)$. 
    Then, it is easy to see that
    \[ \mathbf{1} \{\hat{y} \notin \mathcal{Y}^*\} \cdot \mathbf{1} \left\{ \{y'_1, \ldots y'_N\} \cap \mathcal{Y}^* \neq \emptyset \right\} \leq \mathbf{1}\left\{\max_{j \in [{N}]} |\delta(x, y^j)| \geq \frac{\sigma}{2} \right\}.\]
    To see why this is inequality is true, we can consider two cases. First, when $\mathbf{1} \left\{ \{y'_1, \ldots y'_N\} \cap \mathcal{Y}^* \neq \emptyset \right\}=0$, this inequality is trivially true. In the case that   $\mathbf{1} \left\{ \{y'_1, \ldots y'_N\} \cap \mathcal{Y}^* \neq \emptyset \right\}=1$, it is easy to see that the BoN must pick one of the $y_j^{\prime}$'s in the argmax set as long as $\max_{j \in [{N}]} |\delta(x, y^j)| < \frac{\sigma}{2}$. This is because the estimated score $v(x,y_j^{\prime})$ of such $y_j^{\prime} \in \cY^{*}(x)$ can only decrease by at most $\sigma/2$ and such score of next best $y_k^{\prime} \notin \cY^{*}$ can only increase by $\sigma/2$.  So, even the noisy verifier will assign maximum scores to $y_j^{\prime} \in \cY^{*}(x)$. Therefore, $\mathbf{1} \{\hat{y} \notin \mathcal{Y}^*\}$ can be $1$ only when $\max_{j \in [{N}]} |\delta(x, y^j)| \geq \frac{\sigma}{2}$, proving our claim. Thus, combining everything,
    \begin{equation*}
        \begin{split}
            \mathcal{R}(f) &\leq \mathbb{E}_{P_\mathcal{D}^n} \mathbb{E}_{X} \mathbb{E}_{y^1, \ldots y^N \sim \pi_{0}(y|x)} \left[\mathbf{1}\left\{\max_{j \in [{N}]} |\delta(x, y^j)| \geq \frac{\sigma}{2} \right\} \right]+ \mathbb{E}_X (1-P_0(X))^N\\
            &\leq \mathbb{E}_{P_\mathcal{D}^n} \mathbb{E}_{X} \mathbb{E}_{y^1, \ldots y^N \sim \pi_{0}(y|x)} \frac{2\max_{j \in [{N}]} |\delta(x, y^j)|}{\sigma} + \mathbb{E}_X (1-P_0(X))^N\\
            &\leq \mathbb{E}_{P_\mathcal{D}^n} \mathbb{E}_{X} \mathbb{E}_{y^1, \ldots y^N \sim \pi_{0}(y|x)} \left[\frac{2\sum_j |\delta(x, y^j)|}{\sigma} \right]+ \mathbb{E}_X (1-P_0(X))^N\\
            &= \frac{2N}{\sigma}    \mathbb{E}_{P_\mathcal{D}^n} \mathbb{E}_{X, Y \sim P_X \times \pi_{0}(y|x)}\left[ |\delta(x,y)| \right]+ \mathbb{E}_X (1-P_0(X))^N\\
            &= \frac{2N}{\sigma}\mathbb{E}_{P_\mathcal{D}^n} \mathbb{E}_{X, Y \sim P_X \times \pi_{0}(y|x)}\left[ |\hat{r}_f(x,y)-r_{f_*}(x,y)| \right]+ \mathbb{E}_X (1-P_0(X))^N
        \end{split}
    \end{equation*}
\end{proof}
\begin{proof}[Proof of Corollary \ref{corr: BoN1}]
Consider the result of Theorem \ref{thm: BON}. For a binary reward $r_{f^*}$ we have $\sigma = 1$, $\mathbf{1}\{\mathbf{y} \in \cY^*(x)\} = r_{f^*}(x, \mathbf{y})$, and $|\hat{r}(x, \mathbf{y}) - r_{f_*}(x, \mathbf{y})| = \mathbf{1}\{\hat{r}(x, \mathbf{y}) \neq r_{f_*}(x, \mathbf{y})\}$. Thus, by Theorem \ref{thm: BON} for binary rewards, it holds that 
\[\mathbb{E}_{P_x} \mathbb{E}_{\mathbf{y}\sim \hat{\pi}_{\text{BoN}}(\mathbf{y}|x)} r_{f_*}(x, \mathbf{y}) \geq \]
\[1-[\mathbb{E}_{x, y \sim P_x \times \pi_0(\mathbf{y}|x)} {2 N\mathbf{1}\{\hat{r}(x,\mathbf{y}) \neq r_{f_*}(x, \mathbf{y})\}} + \mathbb{E}_X (1-P_0(x))^N]\]
Now note that the reward fitting stage which produces $\hat{r}$ from Equation \ref{eq: r_fit_loss} is standard ERM for a binary classification. Since by assumption the reward distribution is realizable by $\cR_\cF$, we may apply Proposition \ref{prop: ftsl} to say that with probability $1-\delta$ over a draw of training set $D_n: \{(x^i, \mathbf{y}^i), r_{f_*}(x^i, \mathbf{y}^i)\}_{i=1}^n \sim \left( P_x \times \pi_0(\mathbf{y}|x) \times r_{f_*}(x, \mathbf{y}) \right)^{\otimes n}$ it holds that
\[\mathbb{E}_{P_x} \mathbb{E}_{\mathbf{y}\sim \hat{\pi}_{\text{BoN}}(\mathbf{y}|x)} r_{f_*}(x, \mathbf{y}) \geq \]
\[1-[\mathbb{E}_{x, y \sim P_x \times \pi_0(\mathbf{y}|x)} \frac{2N \cdot [\text{VC}(\cR_\cF) + \log(1/\delta)]}{n} + \mathbb{E}_X (1-P_0(x))^N]\]
Now we make use of the assumption $P_0(x) \geq \alpha > 0$ and plug in $N = - \frac{\log(n)}{\log(1-\alpha)}$ to arrive at 
\[\mathbb{E}_{P_x} \mathbb{E}_{\mathbf{y}\sim \hat{\pi}_{\text{BoN}}(\mathbf{y}|x)} r_{f_*}(x, \mathbf{y}) \geq \]
\[1-[\mathbb{E}_{x, y \sim P_x \times \pi_0(\mathbf{y}|x)} \frac{2N \cdot [\text{VC}(\cR_\cF) + \log(1/\delta)]}{n} + \mathbb{E}_X (1-\alpha)^N]\] 
\[= \frac{-2 \log(n) \cdot [\text{VC}(\cR_\cF) + \log(1/\delta)]}{n \log(1-\alpha)} + \mathbb{E}_X e^{\frac{-\log(n)}{\log(1-\alpha)}\log(1-\alpha)}]\]
\end{proof}
\begin{proof}[Proof of Corollary \ref{corr: BoN2}]
    The proof of this Corollary is identical to that of Corollary \ref{corr: BoN1}, except utilizing the non-realizable version of Proposition \ref{prop: ftsl}.
\end{proof}
\begin{proof}[Proof of Proposition \ref{prop: finite-BoN}]
    This follows immediately from substituting in Theorem \ref{thm: ftsl-finite} to the proof of Corollary \ref{corr: BoN1} and noting that in the case of finite classes $|F| = |\cR_\cF|$.
\end{proof}

\begin{proof}[Proof of Proposition \ref{prop: vc-r}]
\phantom{text text}

    \begin{enumerate}
        \item \emph{End Token Reward}:
        Consider the class of end token rewards $\cR_\cF$ induced by the class $\cF$.
\[\cR_\cF = \{r_f: \Sigma' \times \Sigma^T \rightarrow \Sigma, \ \ r_f(x\ \circ\ y) = \mathbf{1}\{y[-1] = f^{\text{AR}}(x)[-1]\}| f \in \cF\}\]
Likewise, consider the class of functions
\[\cF_{\text{et}} = \{f_{\text{et}}: \Sigma' \rightarrow \Sigma, \ \ f_{et}(x) = f^{\text{AR}}(x)[-1]| f \in \cF\}\]
Clearly, the VC dimension of these two classes are equivalent to one another. In Theorem B.1. of \citet{joshi2025theorylearningautoregressivechain} it is shown that the first class has VC dimension bounded by $6 \cdot T \cdot \text{VC}(\cF)$.
\item \emph{0-1 Reward}: This result follows directly from the Proof of Theorem \ref{thm: SFT-UB} provided in Appendix \ref{a: SFT proofs}.
    \end{enumerate}
\end{proof}
The following remark shows that one can make a similar conclusion to Corrollary \ref{corr: BoN1} in the case of non-binary rewards.
\begin{remark}(Multi-class rewards)
\label{remark: Multi-Class BoN}
    If $r_{f_*}$ is a map from $\Sigma' \times \Sigma^T \rightarrow \{1, \ldots, R\}$ then $\sigma = 1$, and $|\hat{r}(x, \mathbf{y}) - r_{f_*}(x, \mathbf{y})| \leq R \cdot \mathbf{1}\{\hat{r}(x, \mathbf{y}) \neq r_{f_*}(x, \mathbf{y})\}$ Thus, by Theorem \ref{thm: BON} for binary rewards, it holds that 
\[\mathbb{E}_{P_x} \mathbb{E}_{\mathbf{y}\sim \hat{\pi}_{\text{BoN}}(\mathbf{y}|x)} r_{f_*}(x, \mathbf{y}) \geq \]
\[1-[\mathbb{E}_{x, y \sim P_x \times \pi_0(\mathbf{y}|x)} {2 R\cdot N\cdot \mathbf{1}\{\hat{r}(x,\mathbf{y}) \neq r_{f_*}(x, \mathbf{y})\}} + \mathbb{E}_X (1-P_0(x))^N]\]
Now note that the reward fitting stage which produces $\hat{r}$ from Equation \ref{eq: r_fit_loss} is standard ERM for a binary classification. Since by assumption the reward distribution is realizable by $\cR_\cF$, we may apply Proposition \ref{prop: ftsl} to say that with probability $1-\delta$ over a draw of training set $D_n: \{(x^i, \mathbf{y}^i), r_{f_*}(x^i, \mathbf{y}^i)\}_{i=1}^n \sim \left( P_x \times \pi_0(\mathbf{y}|x) \times r_{f_*}(x, \mathbf{y}) \right)^{\otimes n}$ it holds that
\[\mathbb{E}_{P_x} \mathbb{E}_{\mathbf{y}\sim \hat{\pi}_{\text{BoN}}(\mathbf{y}|x)} r_{f_*}(x, \mathbf{y}) \geq \]
\[1-[\mathbb{E}_{x, y \sim P_x \times \pi_0(\mathbf{y}|x)} \frac{2 N \cdot R \cdot [\mathrm{Ndim}(\mathcal{R}_{\cF}) \log \left( {|R| \cdot \mathrm{Ndim}(\mathcal{R}_{\cF})} \right) + \log(1/\delta)]}{n} + \mathbb{E}_X (1-P_0(x))^N]\]
\end{remark}

\subsection{Lowerbound}

\begin{proof}[Proof of Theorem \ref{thm: BoN lower-bound}]
Consider the class of end token rewards $\cR_\cF$ induced by the class $\cF$.
\[\cR_\cF = \{r_f: \Sigma' \times \Sigma^T \rightarrow \Sigma, \ \ r_f(x\ \circ\ y) = \mathbf{1}\{y[-1] = f^{\text{AR}}(x)[-1]\}| f \in \cF\}\]
Likewise, consider the class of functions
\[\cF_{\text{et}} = \{f_{\text{et}}: \Sigma' \rightarrow \Sigma, \ \ f_{et}(x) = f^{\text{AR}}(x)[-1]| f \in \cF\}\]

Thus, by Theorem E.1. of \citet{joshi2025theorylearningautoregressivechain} there exists a class $\mathcal{F}$ with $\text{VC}(\mathcal{F}) = d$ such that the class $\cF_{\text{et}}$ shatters the set of points $\cX = \{x_1, \ldots, x_{d \cdot T}\}$ with $x_i = (1\ \circ\ \text{bit-respresentation}[i])$. 
   
We can consider the distribution $P_x = \text{Unif}[\cX]$ and the base policy $\pi_0$ to satisfy $\pi_0^{\text{AR}}(\mathbf{y}|x) = \text{unif}[\Sigma^T]$. Through out, we will fix some $f^* \in \cF$ to be the target function. Finally, assume that during the reward modeling step, the only observes at most $n = dT/2$ points $D^n = (z_i, r(z_i))_{i=1}^n$. At the reward modeling stage the learner selects
   \[\hat{r} = \argmin_{r\in \cR_{\cF}} \sum_{i=1}^n \mathbf{1}\{r_f(x_i, y_i) \neq r_{f_*}(x_i, y_i)\}.\] By assumption, if multiple functions exist in the argmin set, ties are broken at random. Let $\cS = \cX \backslash D^n_X$ be the set of $x$ in $\cX$ that do not appear in $D^n$. Because the class $\cF$ shatters $\cX$, for any \emph{fixed} x in $\cS$ we can fully partition $\cF$ into pairs $(f, f')$ such that $f$, $f'$ only disagree on $x$. By definition, note that for any such pair we have $\sum_{i=1}^n \mathbf{1}\{r_f(x_i, y_i) \neq r_{f_*}(x_i, y_i)\} = \sum_{i=1}^n \mathbf{1}\{r_{f'}(x_i, y_i) \neq r_{f_*}(x_i, y_i)\}$, and in particular, half of all the reward models in the argmin set must satisfy $\hat{r}(x, y) = \mathbf{1}\{r_{f^*}(x, y) = 0\}$ for the fixed $x\in \cS$.
   
   The quantity of interest for lower bounding BoN is
   \[\mathbb{E}_{P_\mathcal{D}^n} \mathbb{E}_{\mathbf{y}\sim \pi_0} \mathbb{E}_{P_X} \mathbf{1}\{\hat{r}_f(x, y)=1, r_{f^*}(x,y) = 0\} = \mathbb{E}_{P_\mathcal{D}^n} \frac{1}{2dT}\sum_{i=1}^{dT} \mathbf{1}\{\hat{r}_f(x_i, y_i) \neq r_{f^*}(x_i,y_i)\}.\]
   In the last line we plugged in the distbutions for $P_x$ and $\pi_0$ and dropped all the $y_i$ that result in a $r_{f_*}(x,y) = 1$. Since half of all the reward models in the argmin set must satisfy $\hat{r}(x, y) = \mathbf{1}\{r_{f^*}(x, y) = 0\}$ for $x\in \cS$, and $|\cS|$ is at least $1/2$ by the assumption on $n$ we see that $\mathbb{E}_{P_\mathcal{D}^n} \mathbf{1}\{\hat{r}_f(x_i, y_i) \neq r_{f^*}(x_i,y_i)\} \geq \frac{1}{4}$ . Thus we have shown that
   \[\mathbb{E}_{P_\mathcal{D}^n} \mathbb{E}_{\mathbf{y}\sim \pi_0} \mathbb{E}_{P_X} \mathbf{1}\{\hat{r}_f(x, y)=1, r_{f^*}(x,y) = 0\} \geq 1/8\]
    It is easy to see that this implies $\mathbb{P}(\mathbb{E}_{\mathbf{y}\sim \pi_0} \mathbb{E}_{P_X} \mathbf{1}\{\hat{r}_f(z)=1, r_{f^*}(z) = 0\} \geq 1/16) \geq 1/16$ over draws of $D^n$.

   The final step is to show that if the probability of a false positive draw from $\pi_0$ is bounded below, then BoN will perform poorly. To that end, let $S$ denote the set of points in $z_1, \ldots, z_{dT}$ that satisfy $\hat{r}(z) = 1, r_{f_*}(z)=0$ and note that for $c \in [0, N]$ we have 
   \[\mathbb{P}_{\mathbf{y}\sim \hat{\pi}_{\text{BoN}}}({r}_{f_*}(\mathbf{y}) = 0) \geq c/N \cdot \mathbb{P}_{\mathbf{y}_i \sim \pi_0} (|\{\mathbf{y}_1, \ldots \mathbf{y}_N: \mathbf{y}_i \in S\}| \geq c)\]
   From which we can pick $c = 1/16 \times N$ and that for this choice of $c$ note that $\mathbb{P}_{\mathbf{y}_i \sim \pi_0} (|\{\mathbf{y}_1, \ldots \mathbf{y}_N: \mathbf{y}_i \in S\}| \geq c)$ is minimzed at $N=1$ to complete the proof.
\end{proof}
\section{SFT PROOFS}
\label{a: SFT proofs}
\subsection{Lower Bounds}
\begin{proof}[Proof of Theorem \ref{thm: ag-fail}]
    Let $P_X(\{0,1\}) = 1$, and assume that $f_*^{\text{AR}}(\{0,1\}) = \{0 ,0 ,\ldots, 0\}$. Take $\mathcal{F}$ to be a function class with two elements $\{f_1,f_2\}$. For convenience, also define $\text{CNT}(\sigma, b)$ to be the function which counts the instances of bit $b$ in $\sigma$. Define the first function as follows:
     $$
f_1(\sigma)=
\begin{cases}
1 \text{ if } \text{CNT}(\sigma, 1) \geq \text{CNT}(\sigma, 0)  \\
0 \text{ otherwise }
\end{cases}
$$
Note that if we define $y_i = f^*(x_i)$ then $\sum_{i} \sum_t \mathbf{1}\{f^1(x\ \circ\ y[:t])\neq y[t]\} = n$. Next define the second function as
$$
f_2(\sigma)=
\begin{cases}
1 \text{ if } \text{CNT}(\sigma,1) = 1  \\
0 \text{ if } \text{CNT}(\sigma,1) \geq 2 \\
\end{cases}
$$
Now note that $\sum_{i} \sum_t \mathbf{1}\{f^2(x, y[:t])\neq y[t]\} = nT$ so the next token prediction ERM step will always select $f_1$. Note, however, that $f_{1 \text{ AR}}((0,1)) = (0, 1, 1, \ldots, 1)$ so the expected reward of $\hat{f}_{\text{ntp}}^\text{AR}$ is zero.

On the other hand note that $f_{2}^{\text{AR}}((0,1)) = (1, 0 , 0, \dots, 0)$ so that point 2$\sup_{f\in\cF} \mathbb{E}_{P_X}r_{f_*}(x, f^{\text{AR}}(x)) = 1$.
\end{proof}
\subsection{Upper Bounds}
\begin{proof}[Proof of Theorem \ref{thm: SFT-UB}]
     Consider a draw of the training data set $(x^i, (y_1^i, \ldots y_T^i)) $ the function $\hat{f}_{\text{ntp}}$ produced in the erm/ consistency stages must satisfy $\hat{f}_{\text{ntp}}(x^i, y^i_{<t}) = y^i_t$ for $i\in [n]$ and $t \in [T]$. 
    
    Now consider the class of functions $f^{\text{AR}} \in \mathcal{F}^{\text{AR}}: \Sigma' \rightarrow \Sigma^T$, where $f^{\text{AR}}(\sigma) = (f(\sigma), f(\sigma, f(\sigma)), \ldots, f(f_{\text{ap}}^{\circ T-1}(\sigma)))$ for $f \in \mathcal{F}$ (this is just the class of autoregressive functions studied throughout the paper). Following the discussion of Section \ref{sec: SFT-Conv}, the process of training $\hat{f}_{\text{ntp}}$ and then deploying $\hat{f}_{\text{ntp}}^{\text{AR}}$  is equivalent to picking
    \[\hat{f}^{\text{AR}}_{\text{ntp}} \in \{f^\text{AR} \in \mathcal{F}^\text{AR}: f^\text{AR}(x^i) = \mathbf{y}^i; \quad i \in [n]\}\]
  Of course in the realizable setting it is trivial to see that this is equivalent to ERM for the standard multilabel classification problem:
    \[\hat{f}^{\text{AR}} = \argmin_{f^{\text{AR}} \in \mathcal{F}^{\text{AR}}} \sum_{i=1}^n \mathbf{1}\{f^{\text{AR}}(x^i) \neq (y^i_1, \ldots, y^i_T)\}\]
    For ease of notation let $\mathbf{y}^i$ denote the vector of responses to $x^i$. Consider the loss class
    $\mathcal{L}(\mathcal{F}^{\text{AR}}) = \{z \rightarrow \mathbf{1}\{\mathbf{y} \neq f^{\text{AR}}(x)\}| f \in \mathcal{F}\}$. Following some ideas from Theorem B.5 of \citet{joshi2025theorylearningautoregressivechain} we will show that
    \[\text{VC}(\mathcal{L}(\mathcal{F}^{\text{AR}})) \leq 3 \text{VC}(\mathcal{F})\log(\frac{2T}{\log(2)})\]
    Consider the set $\text{pfx}(D) = \{(x^i, y^i_1, \ldots y^i_t)|i \in [n], t \in [T]\}$. Letting $\Gamma_{\mathcal{L}(\mathcal{F}^{\text{AR}}))}(n)$ denote the growth function of the loss class of interest we have 
    \[\Gamma_{\mathcal{L}(\mathcal{F}^{\text{AR}}))}(n) = |\{(\mathbf{1}\{\mathbf{y^*_1} \neq f^{\text{AR}}(x^*_1)\}, \ldots \mathbf{1}\{\mathbf{y^*_n} \neq f^{\text{AR}}(x^*_n)\} )| f \in \mathcal{F}\}|\]
    \[\leq |\{(f(x^{*1}), f(x^{*1}, y_1^{*1}), \ldots , f(x^{*n}, y_1^{*n}, \ldots, y_{T-1}^{* n}))| f\in \mathcal{F}\}| = |\mathcal{F}(D^*)| \leq \Gamma_\mathcal{F}(nT)\]
    The key inequality uses the fact that if there exists two functions $f$, $f'$ such that $(\mathbf{1}\{\mathbf{y^*_1} \neq f^{\text{AR}}(x^*_1)\}, \ldots \mathbf{1}\{\mathbf{y^*_n} \neq f^{\text{AR}}(x^*_n)\} ) \neq (\mathbf{1}\{\mathbf{y^*_1} \neq f'^{\text{AR}}(x^*_1)\}, \ldots \mathbf{1}\{\mathbf{y^*_n} \neq f'^{\text{AR}}(x^*_n)\} )$ then there exists $i^*$, $t^*$ such that $f(x^{*i^*}, y_1^{*i^*}, \ldots, y_{t^*-1}^{* i^*}) \neq f'(x^{* i^*}, y_1^{* i^*}, \ldots, y_{t^*-1}^{* i^*})$.

    Finally we make use of Sauers lemma to see
    \[\Gamma_{\mathcal{L}(\mathcal{F}^{\text{AR}}))}(n) \leq \Gamma_\mathcal{F}(nT) \leq (\frac{enT}{\text{VC}(\mathcal{F})})^{\text{VC}(\mathcal{F})}\]
    Consulting the definition for VC dimension and some algebra will show the final we need on $\text{VC}(\mathcal{L}(\mathcal{F}^{\text{AR}}))$. We pause here to consider what this has bought us. We have shown that in the realizable setting if we select 
    \[\hat{f} = \argmin_{f\in \mathcal{F}} \sum_i \mathbf{1}\{f(x^i, y^i_{<t}) \neq y_t\} \] then by Proposition \ref{prop: joshi4} $\hat{f}$ satisfies the following: With probability at least $1-\delta$, it holds that \[\mathbb{P}_{x}[(\hat{f}(x), \hat{f}(x, \hat{f}(x)), \ldots \hat{f}_{\text{ap}}^{\circ T}(x)) \neq (y_1, \ldots, y_T)] < \epsilon,\] with \[n(\epsilon, \delta) < c(\frac{\text{VC}(\mathcal{F})\log(T)\log(1/\epsilon)+\log(1/\delta)}{\epsilon}).\] Then we may simply note that $\mathbb{E}_{x \sim P_X} r_{f_*}(x, \hat{f}^{\text{AR}}(x)) \geq 1- \mathbb{P}_{x}[\hat{f}_{\mathrm{ntp}}^{\text{AR}}(x) \neq f_*^\text{AR}(x)]$.
\end{proof}
\begin{remark}
    Consider the case where $\Sigma = \{1, \ldots, |\Sigma|\}$. Through the same argument and using Natarajan's Lemma we have 
    \[\Gamma_{\mathcal{L}(\mathcal{F}^{\text{AR}}))}(n) \leq \Gamma_\mathcal{F}(nT) \leq ({|\Sigma|^2\cdot n \cdot T})^{\text{Ndim}(\mathcal{F})}\]
    Applying Lemma A.4 of \citep{joshi2025theorylearningautoregressivechain} we can see that 
    \[\text{VC}(\cF^{\text{AR}}) \leq 3 \text{Ndim}(\cF)\text{log}_2(2\frac{\text{Ndim}(\cF)|\Sigma|^2T}{e \text{ln}(2)})\]
    This demonstrates that even for general finite alphabets, in the agnostic case the performance of SFT has log dependence on T.
    
\end{remark}
\section{PROOFS FOR SFT WITH RANDOM RESPONSES}
\begin{proof}[Proof of Theorem \ref{thm: T-dep}]
Consider fixed integers $n$, $T$ and $D \triangleq |\cF|$. We will discuss the following class of functions: The input space is $\cX = \Sigma^{D \times T+1}$. The function class is $\cF = \{f_0, f_1, \ldots, f_D\}$ where $f_0(x) = 0$ and for $1\leq d \leq D$ it holds that $f_d(x) = x[-d\cdot T]$ (the $d\cdot T$'th from last element in the string $x$). Throughout we will also assume that $P_X \deq \text{unif}[\cX]$ (the uniform distribution over all strings in $\mathcal{X}$), we assume that the reward is $r(x,y) = \mathbf{1}\{y[-1]=0\}$ and that $\pi^*(y|x) =\text{unif}[\Sigma^{T-1}] \times \mathbf{1}\{y[-1] = 0\}$. Clearly, it holds that $\mathbb{E}_x r(x, f_0^{\text{AR}}(x)) = 1$ while for $1 \leq d \leq D$ we have that $\mathbb{E}_x r(x, f_d^{\text{AR}}(x)) = \mathbb{E}_x \mathbf{1}\{x[(-d+1)\cdot T-1] = 0\} = \frac{1}{2}$. For $f \in \mathcal{F}$ consider the empirical next token prediction loss:
\[\hat{\cL}_{\text{ntp}}(f) = \sum_{i=1}^n \sum_{t=1}^T \mathbf{1}\{f(x_i\ \circ\ y_i[:t]) \neq y_i[t]\}\]
The core of the proof is the following two facts:
\begin{enumerate}
\item The empirical ntp loss $\hat{\cL}_{\text{ntp}}(f)$ has distribution
\begin{equation*}
\hat{\cL}_{\text{ntp}}(f_d) \deq \begin{cases}
    \text{Bin}(n(T-1), 1/2) \quad d=0 \\
    \text{Bin}(nT, 1/2) \quad 1\leq d \leq D
\end{cases}\end{equation*}
\item For $d \neq d'$ the random variables $\hat{\cL}_{\text{ntp}}(f_d)$ and $\hat{\cL}_{\text{ntp}}(f_{d'})$ are independent.
\end{enumerate}
The first fact follows directly from the assumption on $\pi^*(y|x)$. The second fact is more non-trivial. First suppose that $d > d' > 0$, then it holds that $\hat{\cL}_{\text{ntp}}(f_d) = \sum_i |\{t: x^i[-d\cdot T: -d \cdot (T-1) - 1][t] = y^i[t]\}|$ and $\hat{\cL}_{\text{ntp}}(f_{d'}) = \sum_i |\{t: x^i[-d'\cdot T: -d' \cdot (T-1) - 1][t] = y^i[t]\}|$ but by construction $x^i[-d\cdot T: -d \cdot (T-1) - 1]$ and $x^i[-d'\cdot T: -d' \cdot (T-1) - 1]$ are independent draws from $\text{unif}[\Sigma^T]$, thus the two empirical losses are independent (in fact the number of matches in each string is independent of $y$ itself). For $d > 0$ a similar logic applies to the independence of $\hat{\cL}_{\text{ntp}}(f_d)$ and $\hat{\cL}_{\text{ntp}}(f_0)$; the number of zeros in $y^i$ carries no information about the number of matches between $y^i$ and $x^i[-d\cdot T: -d \cdot (T-1) - 1]$, since $x^i[-d\cdot T: -d \cdot (T-1) - 1]$ is distributed as $\text{unif}[\Sigma^T]$. Consulting the definition of $\hat{f}_{\text{ntp}}$ we see that for the reference policy $\pi^*$, the input space $\mathcal{X}$, the distribution $P_X$ and the class $\mathcal{F}$ it holds that:
\begin{align*}\mathbb{P}_{\cD_n}(\mathcal{R}(\hat{f}_{\text{ntp}}) = \frac{1}{2}) = \mathbb{P}_{\cD_n} \hat{f}_\text{ntp} \in \{f_d | 1\leq d \leq D\} = \mathcal{G}(n, T, D),\\
\mathcal{G}(n, T, D) \triangleq \mathbb{P}\{\min_{Z_1, \ldots, Z_D} Z_d > Z_0; Z_d \overset{iid}{\sim} \text{bin}[nT, 1/2], Z_0 \sim \text{bin}(n(T-1), 1/2); Z_0 \ind Z_d\}
\end{align*}
Let us let $\tilde{Z}_D = \min_{Z_1, \ldots, Z_D} Z_d$ with $Z_d \overset{iid}{\sim} \text{bin}[nT, 1/2]$. By Theorems \ref{thm: bin-tail-bound} and \ref{thm: pinsker} if $\log(D) > \log(nT+1) + \log(\delta^{-2})$ then we have that with probability at least $1-\delta$ we have that $\tilde{Z}_D/nT <\frac{1}{2}$ and $|\frac{\tilde{Z}_D}{nT}-\frac{1}{2}|^2 = \frac{1}{2}\text{KL}(\tilde{Z}||p) \in [\frac{\log(D)}{nT}- \Delta(p, \delta, nT), \frac{\log(D)}{nT}+\Delta(p, \delta, nT)]$ which implies that 
\[\frac{\tilde{Z}_D}{nT} \leq \frac{1}{2} - \sqrt{\frac{\log (D)}{nT}} + \sqrt{\frac{\log(\delta^{-2})+\log(nT)}{nT}}\]
Likewise, with probability at least $1-\delta$ it holds that
\[\frac{{Z}_0}{nT} \geq \frac{n(T-1)}{2nT}  -
\frac{\sqrt{[\log(\delta^{-2})+\log(n(T-1))]n(T-1)}}{nT}\]
Thus with probability at least $1-2\delta$ we have that
\[\frac{\tilde{Z}_D}{nT} -  \frac{{Z}_0}{nT} \leq \frac{1}{2} - \sqrt{\frac{\log (D)}{nT}} -  \frac{n(T-1)}{2nT} + 2\sqrt{\frac{\log(\delta^{-2})+\log(nT)}{nT}} \]
\[= \frac{1}{2T}- \sqrt{\frac{\log (D)}{nT}} + 2\sqrt{\frac{\log(\delta^{-2})+\log(nT)}{nT}} \]
Or equivalently, 
\[\frac{\tilde{Z}_D}{\sqrt{nT}} -  \frac{{Z}_0}{\sqrt{nT}} \leq \sqrt{\frac{n}{T}} - \sqrt{nT \log(D)} + 2\sqrt{\log(\delta^{-2})}+2\sqrt{\log(nT)}\]
And if $n< \frac{T \log(D)}{4}$, with probability at least $1-2\delta$ it holds that
\[\frac{\tilde{Z}_D}{\sqrt{nT}} -  \frac{{Z}_0}{\sqrt{nT}} \leq \sqrt{\log(D)}(\frac{1}{2} - T +2\log(\delta^{-2}) +2\sqrt{2\log(T)})\]
So, we for an incorrect $f \in \{f_1, \ldots, f_D\}$ to be selected we require that $\log(D) > \log(\log(D)T/4+1) + \log(\delta^{-2})$ and $(\frac{1}{2} - T +2\log(\delta^{-2}) +2\sqrt{2\log(T)}) < 0$. The second holds for $T> 0$ if $\delta = 3/4$ while the first holds if $\log(D)> \log(2)+1/2+\log(\log(D)/4) + \log(T/4)$.
\end{proof}
\begin{proof}[Proof of Proposition \ref{prop: sft-ub-fin}]
    Note that the reward is bounded below by $1-\mathbb{P}_{\cD_n}(\hat{f}_{\text{ntp}} \neq f_*)$, and by Assumption \ref{ass: ref-pol-ass} it holds for $\epsilon <1/4$ that
    \[\mathbb{P}_{\cD_n}(\hat{f}_{\text{ntp}} \neq f_*) \leq \mathbb{P}_{\cD_n} (\exists f \in \cF \text{ s.t. } |\hat{\cL}(f) - \mathbb{E}\hat{\cL}(f)|> \epsilon ) \leq |\cF| e^{-\frac{n}{T}\epsilon^2}\]
    So one can pick $\epsilon = \sqrt{\frac{\log(|F|/\delta)T}{n}}$ to arrive at the result.
\end{proof}
\end{document}